\documentclass[letterpaper,11pt]{article}
\usepackage[utf8]{inputenc}
\usepackage[margin=1in]{geometry}

\usepackage{float,framed}

\usepackage[colorlinks,linkcolor=blue,citecolor=blue]{hyperref}
\usepackage[round]{natbib}
\usepackage{enumitem}
\usepackage{bm}
\usepackage{xspace}

%\usepackage{tikz}
%\usetikzlibrary{automata,positioning,fit,backgrounds}

% packages
\usepackage{amsmath,amsfonts,amssymb}
\usepackage{mathtools}
\usepackage{amsthm} % better to load after ams math
\usepackage{latexsym}

\usepackage[capitalise]{cleveref}
\usepackage{xcolor}
\usepackage{dsfont}

% comments, etc
\newcommand{\ignore}[1]{}

% for emails

%% JMLR theorems adjustments
%\setlength{\topsep}{\medskipamount}
%
%\newtheoremstyle{jmlrthm} % name
%	{\topsep}		% Space above
%	{\topsep}		% Space below
%	{\itshape}	% Body font
%	{}						% Indent amount
%	{\bfseries}	% Theorem head font
%	{}						% Punctuation after theorem head
%	{0.33em}		% Space after theorem head
%	{}  					% Theorem head spec (can be left empty, meaning ?normal?)
%
%\newtheoremstyle{jmlrdef} % name
%	{\topsep}		% Space above
%	{\topsep}		% Space below
%	{}						% Body font
%	{}						% Indent amount
%	{\bfseries}	% Theorem head font
%	{}						% Punctuation after theorem head
%	{.33em}			% Space after theorem head
%	{}  					% Theorem head spec (can be left empty, meaning ?normal?)
%
%\renewcommand\qedsymbol{\BlackBox}
%%\renewenvironment{proof}[1][Proof]{\par\noindent{\bfseries\upshape #1\ }}{\hfill\BlackBox\\[\medskipamount]}
%\renewenvironment{proof}[1][Proof]{\par\noindent{\bfseries\upshape #1\ }}{\hfill\BlackBox\\}

% theorems
\theoremstyle{plain}
\newtheorem{theorem}{Theorem}
\newtheorem{lemma}[theorem]{Lemma}
\newtheorem{corollary}[theorem]{Corollary}

\newtheorem*{theorem*}{Theorem}
\newtheorem*{lemma*}{Lemma}
\newtheorem*{corollary*}{Corollary}
\newtheorem*{proposition*}{Proposition}
\newtheorem*{claim*}{Claim}
\newtheorem*{fact*}{Fact}
\newtheorem*{observation*}{Observation}

\theoremstyle{definition}

\newtheorem*{definition*}{Definition}
\newtheorem*{remark*}{Remark}
\newtheorem*{example*}{Example}

% repeated theorems
 \theoremstyle{plain}
\newtheorem*{theoremaux}{\theoremauxref}
\gdef\theoremauxref{1}

\newenvironment{repthm}[2][]{%
  \def\theoremauxref{\cref{#2}}
  \begin{theoremaux}[#1]
}{%
  \end{theoremaux}
}

%\newtheoremstyle{theoremrep}
%	{\topsep}{\topsep}              %%% space between body and thm
%	{\itshape}                      %%% Thm body font
%	{}                              %%% Indent amount (empty = no indent)
%	{\bfseries}                     %%% Thm head font
%	{}                             %%% Punctuation after thm head
%	{0.33em}                             %%% Space after thm head
%	{\cref{#3}}
%
%\theoremstyle{theoremrep}
%\newtheorem{repthm}{dontcare}

% % old refs - use \cref{} instead!
% \newcommand{\secref}[1]{Section~\ref{#1}}
%\newcommand{\secref}[1]{Sec.~\ref{#1}}
%\newcommand{\figref}[1]{Figure~\ref{#1}}
%\renewcommand{\eqref}[1]{Eq.~(\ref{#1})}
%\newcommand{\lemref}[1]{Lemma~\ref{#1}}

% \newcommand{\thmref}[1]{Theorem~\ref{#1}}
% \newcommand{\appref}[1]{Appendix~\ref{#1}}
% \newcommand{\algref}[1]{Algorithm~\ref{#1}}

% style defs
%\renewcommand{\le}{\leqslant}
%\renewcommand{\ge}{\geqslant}
%\renewcommand{\leq}{\le}
%\renewcommand{\geq}{\ge}

% bold text

% bold serif
\DeclareMathAlphabet{\mathbfsf}{\encodingdefault}{\sfdefault}{bx}{n}

% operators

\DeclareMathOperator*{\argmin}{arg\,min}
\DeclareMathOperator*{\argmax}{arg\,max}

\DeclareMathOperator*{\supp}{supp}
\let\Pr\relax
\DeclareMathOperator{\Pr}{\mathbb{P}}

% cases
\newcommand{\mycases}[4]{{
\left\{
\begin{array}{ll}
    {#1} & {\;\text{#2}} \\[1ex]
    {#3} & {\;\text{#4}}
\end{array}
\right. }}

\newcommand{\mythreecases}[6] {{
\left\{
\begin{array}{ll}
    {#1} & {\;\text{#2}} \\[1ex]
    {#3} & {\;\text{#4}} \\[1ex]
    {#5} & {\;\text{#6}}
\end{array}
\right. }}

% macros
\newcommand{\lr}[1]{\!\left(#1\right)\!}

\newcommand{\lrBig}[1]{\Big(#1\Big)}
\newcommand{\lrbra}[1]{\!\left[#1\right]\!}

\newcommand{\lrset}[1]{\left\{#1\right\}}

\newcommand{\set}[1]{\{#1\}}
\newcommand{\abs}[1]{|#1|}
\newcommand{\norm}[1]{\|#1\|}
\newcommand{\ceil}[1]{\lceil #1 \rceil}
\newcommand{\floor}[1]{\lfloor #1 \rfloor}

\renewcommand{\t}[1]{\smash{\tilde{#1}}}
\newcommand{\wt}[1]{\smash{\widetilde{#1}}}
\newcommand{\wh}[1]{\smash{\widehat{#1}}}
\renewcommand{\O}{O}

\newcommand{\tO}{\wt{\O}}

\newcommand{\E}{\mathbb{E}}
\newcommand{\EE}[1]{\E\lrbra{#1}}

\newcommand{\tsum}{\sum\nolimits}

\newcommand{\ind}[1]{\mathbb{I}\!\lrset{#1}}

\newcommand{\poly}{\mathrm{poly}}

\newcommand{\tr}{^{\mkern-1.5mu\mathsf{T}}}

\newcommand{\reals}{\mathbb{R}}
\newcommand{\eps}{\epsilon}

\newcommand{\del}{\delta}
\newcommand{\Del}{\Delta}

\newcommand{\half}{\frac{1}{2}}

\newcommand{\eq}{~=~}
\renewcommand{\leq}{~\le~}
\renewcommand{\geq}{~\ge~}

\let\oldnabla\nabla
\renewcommand{\nabla}{\oldnabla\mkern-2.5mu}

\let\oldtfrac\tfrac
\renewcommand{\tfrac}[2]{\smash{\oldtfrac{#1}{#2}}}

% floats
\floatstyle{plain}
\newfloat{algorithm}{h}{lop}
\floatname{algorithm}{Algorithm}

% wrap algorithm
\newcommand{\wrapalgo}[1]
{%
\begin{center}\setlength{\fboxsep}{5pt}\fbox{\begin{minipage}{0.85\linewidth}
#1
\end{minipage}}\end{center}%
}

% algorithms names
\def\mwa{\ensuremath{\textsc{MW}^{1}}\xspace}
\def\mwb{\ensuremath{\textsc{MW}^{2}}\xspace}
\def\mwc{\ensuremath{\textsc{MW}^{3}}\xspace}
\def\leaders{\ensuremath{\textsc{Leaders}}\xspace}

% notation
\newcommand{\K}{\mathcal{K}}
\newcommand{\F}{\mathcal{F}}
\newcommand{\Y}{\mathcal{Y}}

\newcommand{\R}{R}

\newcommand{\cA}{\mathcal{A}}

\renewcommand{\H}{\mathcal{H}}
\newcommand{\X}{\mathcal{X}}

\newcommand{\Oval}{\mathsf{Val}}
\newcommand{\Oopt}{\mathsf{Opt}}
\newcommand{\br}[1]{\mathsf{BR}^{#1}}

%\title{Is optimization computationally equivalent to \\ learning in games?}
%\title{The Computational Inequivalence \\  of Optimization and Learning in Games}
\title{The Computational Power of Optimization in Online Learning%
%\\ {\normalsize\bf DRAFT  - NOT FOR DISTRIBUTION!!!}  
}
%\title{Online Learning with Offline Oracles}

\author{Elad Hazan\\
Princeton University\\
\texttt{ehazan@cs.princeton.edu}
 \and Tomer Koren\\
Technion\\
\texttt{tomerk@technion.ac.il} }

\date{}

\begin{document}

%\pagenumbering{Alph}
%\begin{titlepage}

\maketitle
%\thispagestyle{empty}

%\vspace{1cm}
\begin{abstract}
We consider the fundamental problem of prediction with expert advice where the experts are ``optimizable'': there is a black-box optimization oracle that can be used to compute, in constant time, the leading expert in retrospect at any point in time.
In this setting, we give a novel online algorithm that attains vanishing regret with respect to $N$ experts in total~$\wt{O}(\sqrt{N})$ computation time.
We also give a lower bound showing that this running time cannot be improved (up to log factors) in the oracle model, thereby exhibiting a quadratic speedup as compared to the standard, oracle-free setting where the required time for vanishing regret is $\wt{\Theta}(N)$.
These results demonstrate an exponential gap between the power of optimization in online learning and its power in statistical learning: in the latter, an optimization oracle---i.e., an efficient empirical risk minimizer---allows to learn a finite hypothesis class of size $N$ in time~$O(\log{N})$.

We also study the implications of our results to learning in repeated zero-sum games, in a setting where the players have access to oracles that compute, in constant time, their best-response to any mixed strategy of their opponent. 
We show that the runtime required for approximating the minimax value of the game in this setting is $\wt{\Theta}(\sqrt{N})$, 
yielding again a quadratic improvement upon the oracle-free setting, where $\wt{\Theta}(N)$ is known to be tight. 
\end{abstract}
%\end{titlepage}

\pagenumbering{arabic}
%\newpage
%\setcounter{page}{1}

\section{Introduction}

Prediction with expert advice is a fundamental model of sequential decision making and online learning in games.
%The fundamental decision-making setting of ``prediction with expert advice''  is a repeated game between a player and an adversary.
This setting is often described as the following repeated game between a player and an adversary:
on each round, the player has to pick an expert from a fixed set of~$N$ possible experts, the adversary then reveals an arbitrary assignment of losses to the experts, and the player incurs the loss of the expert he chose to follow.
The goal of the player is to minimize his $T$-round average regret, defined as the difference between his average loss over~$T$ rounds of the game and the average loss of the best expert in that period---the one having the smallest average loss in hindsight.
Multiplicative weights algorithms (\citealp{LittlestoneW94,FreundS97}; see also \citealp{AHK-MW} for an overview) achieve this goal by maintaining weights over the experts and choosing which expert to follow by sampling proportionally to the weights; the weights are updated from round to round via a multiplicative update rule according to the observed losses.

% MW: very general, good regret, but bad computation
While multiplicative weights algorithms are very general and provide particularly attractive regret guarantees that scale with $\log{N}$, they need computation time that grows linearly with~$N$ to achieve meaningful average regret.
The number of experts $N$ is often exponentially large in applications (think of the number of all possible paths in a graph, or the number of different subsets of a certain ground set), motivating the search for more structured settings where efficient algorithms are possible. 
Assuming additional structure---such as linearity, convexity, or submodularity of the loss functions---one can typically minimize regret in total $\poly(\log N)$ time in many settings of interest \citep[e.g.,][]{zinkevich2003online,kalai2005efficient,DBLP:journals/jcss/AwerbuchK08,HK12}. 
However, the basic multiplicative weights algorithm remains the most general and is still widely used.

% goal of this paper: improve runtime without assuming structure
The improvement in structured settings---most notably in the linear case \citep{kalai2005efficient} and in the convex case \citep{zinkevich2003online}---often comes from a specialized reduction of the online problem to the offline version of the optimization problem.
In other words, efficient online learning is made possible by providing access to an \emph{offline optimization oracle} over the experts, that allows the player to quickly compute the best performing expert with respect to any given distribution over the adversary's losses. 
However, in all of these cases, the regret and runtime guarantees of the reduction need the additional structure.
Thus, it is natural to ask whether such a drastic improvement in runtime is possible for generic online learning.
Specifically, we ask: 
%\begin{center}
%\begin{minipage}{.81\textwidth}
\emph{What is the runtime required for minimizing regret given a black-box optimization oracle for the experts, without assuming any additional structure? Can one do better than linear time in~$N$?}
%\end{minipage}
%\end{center}

% contributions
In this paper, we give a precise answer to these questions.
We show that, surprisingly, an offline optimization oracle gives rise to a substantial, \emph{quadratic improvement} in the runtime required for convergence of the average regret.
We give a new algorithm that is able to minimize regret in total time $\tO(\sqrt{N})$,%
\footnote{Here and throughout, we use the $\tO(\cdot)$ notation to hide constants and poly-logarithmic factors.}
and provide a matching lower bound confirming that this is, in general, the best possible. 
Thus, our results establish a tight characterization of the computational power of black-box optimization in online learning. 
In particular, unlike in many of the structured settings where $\poly(\log{N})$ runtime is possible, without imposing additional structure a polynomial dependence on $N$ is inevitable.

Our results demonstrate an exponential gap between the power of optimization in  online learning, and its power in statistical learning.
It is a simple and well-known fact that for a finite hypothesis class of size $N$ (which corresponds to a set of $N$ experts in the online setting), black-box optimization gives rise to a statistical learning algorithm---often called empirical risk minimization---that needs only~$O(\log N)$ examples for learning.
Thus, given an offline optimization oracle that optimizes in constant time, statistical learning can be performed in time $O(\log{N})$; in contrast, our results show that the complexity of online learning using such an optimization oracle is $\wt{\Theta}(\sqrt{N})$. 
This dramatic gap is surprising due to a long line of work in online learning suggesting that whatever can be done in an offline setting can also be done (efficiently) online.

Finally, we study the implication of our results to repeated game playing in two-player zero-sum games. 
The analogue of an optimization oracle in this setting is a \emph{best-response oracle} for each of the players, that allows her to quickly compute the pure action being the best-response to any given mixed strategy of her opponent. 
In this setting, we consider the problem of approximately solving a zero-sum game---namely finding a mixed strategy profile with payoff close to the minimax payoff of the game. 
We show that our new online learning algorithm above, if deployed by each of the players in an $N \times N$ zero-sum game, guarantees convergence to an approximate equilibrium in total~$\tO(\sqrt{N})$ time. 
This is, again, a quadratic improvement upon the best possible $\wt{\Theta}(N)$ runtime in the oracle-free setting, as established by \cite{GK95} and \cite{freund1999adaptive}. 
Interestingly, it turns out that the quadratic improvement is tight for solving zero-sum games as well: we prove that any algorithm would require $\wt{\Omega}(\sqrt{N})$ time to approximate the value of a zero-sum game in general, even when given access to powerful best-response oracles.

\subsection{Related Work}

\paragraph{Online-to-offline reductions.}

%Various results over the years suggest such a computational relationship between optimization and online learning.  
The most general reduction from regret minimization to optimization was introduced in the influential work of \cite{kalai2005efficient} as the Follow-the-Perturbed Leader (FPL) methodology.
This technique requires the problem at hand to be embeddable in a low-dimensional space and the cost functions to be linear in that space.%
\footnote{The extension to convex cost functions is straightforward \citep[see, e.g.,][]{ocobook}.}
Subsequently, \cite{KakadeKL09} reduced regret minimization to approximate linear optimization.
For general convex functions, the Follow-the-Regularized-Leader~(FTRL) framework (\citealp{zinkevich2003online}; see also \citealp{ocobook}) provides a general reduction from online to offline optimization, that often gives dimension-independent convergence rates.
Another general reduction was suggested by \cite{kakade2006batch} for the related model of \emph{transductive} online learning, where future data is partially available to the player (in the form of unlabeled examples).

Without a fully generic reduction from online learning to optimization, specialized online variants for numerous optimization scenarios have been explored. 
This includes efficient regret-minimization algorithms for 
online variance minimization \citep{WK06}, 
routing in networks \citep{DBLP:journals/jcss/AwerbuchK08}, 
online permutations and ranking \citep{Helmbold2009}, 
online planning \citep{DBLP:journals/mor/Even-DarKM09}, 
matrix completion \citep{HazanKS12}, 
online submodular minimization \citep{HK12}, 
contextual bandits \citep{DudikHKKLRZ11,icml2014c2_agarwalb14}, 
and many more.

\paragraph{Computational tradeoffs in learning.}

%Statistical and computational learning theory were put forth by  \cite{Vapnik1998} and  \cite{valiant} respectively. 
Tradeoffs between sample complexity and computation in statistical learning have been studied intensively in recent years \citep[e.g.,][]{Agarwal:EECS-2012-169,Shalev-Shwartz:2008,journals/jmlr/Shalev-ShwartzST12}.
However, the adversarial setting of online learning, which is our main focus in this paper, did not receive a similar attention. 
One notable exception is the seminal paper of \citet{blum1990separating} who showed that, under certain cryptographic assumptions, there exists an hypothesis class which is computationally hard to learn in the online mistake bound model but is \emph{non-properly} learnable in polynomial time in the PAC model.%
\footnote{Non-proper learning means that the algorithm is allowed to return an hypothesis outside of the hypothesis class it competes with.}
In our terminology, Blum's result show that online learning might require~$\omega(\poly(\log N))$ time, even in a case where offline optimization can be performed in $\poly(\log N)$ time, albeit non-properly (i.e., the optimization oracle is allowed to return a prediction rule which is not necessarily one of the $N$ experts).

\paragraph{Solution of zero-sum games.}

The computation of equilibria in zero-sum games is known to be equivalent to linear programming, as was first observed by von-Neumann \citep{adler}. 
A basic and well-studied question in game theory is the study of rational strategies that converge to equilibria (see \citealp{Nisan:2007} for an overview). 
\cite{freund1999adaptive} showed that in zero-sum games, no-regret algorithms converge to equilibrium. 
\cite{hart2000simple} studied convergence of no-regret algorithms to correlated equilibria in more general games; \cite{Even-dar09} analyzed convergence to equilibria in concave games.
\cite{GK95} were the first to observe that zero-sum games can be solved in total time sublinear in the size of the game matrix. 

Game dynamics that rely on best-response computations have been a topic of extensive research for more than half a century, since the early days of game theory. 
Within this line of work, perhaps the most prominent dynamic is the ``fictitious play'' algorithm, in which both players repeatedly follow their best-response to the empirical distribution of their opponent's past plays.
This simple and natural dynamic was first proposed by \cite{brown1951iterative}, shown to converge to equilibrium in two-player zero-sum games by \cite{Robinson}, and was extensively studied ever since (see e.g., \citealp{brandt2013rate,daskalakis2014counter} and the references therein).
Another related dynamic, put forth by \cite{Hannan57} and popularized by \cite{kalai2005efficient}, is based on perturbed (i.e., noisy) best-responses.

We remark that since the early works of \cite{GK95} and \cite{freund1999adaptive}, faster algorithms for approximating equilibria in zero-sum games have been proposed \citep[e.g.,][]{nesterov2005smooth,Daskalakis11}. 
However, the improvements there are in terms of the approximation parameter $\eps$ rather than the size of the game $N$. 
It is a simple folklore fact that using only value oracle access to the game matrix, any algorithm for approximating the equilibrium must run in time $\Omega(N)$; see, e.g., \cite{CHW}.

\section{Formal Setup and Statement of Results} 
\label{sec:model}

We now formalize our computational oracle-based model for learning in games---a setting which we call \emph{``Optimizable Experts''}. 
The model is essentially the classic online learning model of prediction with expert advice augmented with an offline optimization oracle.

Prediction with expert advice can be described as a repeated game between a player and an adversary, characterized by a finite set $\X$ of $N$ experts for the player to choose from, a set~$\Y$ of actions for the adversary, and a loss function $\ell : \X \times \Y \mapsto [0,1]$.
%
%The model can be described as a repeated game between a player and an adversary.
First, before the game begins, the adversary picks an arbitrary sequence $y_{1},y_{2},\ldots$ of actions from $\Y$.%
\footnote{Such an adversary is called \emph{oblivious}, since it cannot react to the decisions of the player as the game progresses. We henceforth assume an oblivious adversary, and relax this assumption later in \cref{sec:games}.}
On each round $t=1,2,\ldots,$ of the game, the player has to choose (possibly at random) an expert $x_{t} \in \X$, the adversary then reveals his action $y_{t} \in \Y$ and the player incurs the loss $\ell(x_t,y_t)$.
The goal of the player is to minimize his expected \emph{average regret} over $T$ rounds of the game, defined as
$$
	\R(T)
\eq
	\EE{ \frac{1}{T} \sum_{t=1}^{T} \ell(x_{t},y_{t}) }
	\,-\, \min_{x \in \X} \, \frac{1}{T} \sum_{t=1}^{T} \ell(x,y_{t}) 
~.
$$
Here, the expectation is taken with respect to the randomness in the choices of the player.

In the optimizable experts model, we assume that the loss function $\ell$ is initially unknown to the player, and allow her to access~$\ell$ by means of two oracles: $\Oval$ and $\Oopt$.
The first oracle simply computes for each pair of actions $(x,y)$ the respective loss $\ell(x,y)$ incurred by expert $x$ when the adversary plays the action $y$.

\begin{definition*}[value oracle]
A \emph{value oracle} is a procedure $\Oval : \X \times \Y \mapsto [0,1]$ that for any action pair $x \in \X$, $y \in \Y$, returns the loss value $\ell(x,y)$ in time $O(1)$; that is,
\begin{align*}
	\forall ~ x \in \X, y \in \Y 
~,
\qquad
	\Oval(x,y) \eq \ell(x,y) ~.
\end{align*}
%We assume that the oracle $\Oval$ runs in time $O(1)$ on any input.
\end{definition*}

The second oracle is far more powerful, and allows the player to quickly compute the best performing expert with respect to any given distribution over actions from $\Y$ (i.e., any mixed strategy of the adversary).

\begin{definition*}[optimization oracle]
An \emph{optimization oracle} is a procedure $\Oopt$ that receives as input a distribution $q \in \Del(\Y)$, represented as a list of atoms $\set{ (i,q_{i}) \,:\, q_i>0 }$, and returns a best performing expert with respect to $q$ (with ties broken arbitrarily), namely
\begin{align*}
	\forall ~ q \in \Del(\Y)
~,
\qquad
	\Oopt(q)
~\in~ 
	\argmin_{x \in \X} ~ \E_{y \sim q} [ \ell(x,y) ] 
~.
\end{align*}
The oracle $\Oopt$ runs in time $\O(1)$ on any input.
\end{definition*}

Recall that our goal in this paper is to evaluate online algorithms by their \emph{runtime complexity}. 
To this end, it is natural to consider the running time it takes for the average regret of the player to drop below some specified target threshold.%
\footnote{This is indeed the appropriate criterion in algorithmic applications of online learning methods.}
Namely, for a given $\eps > 0$, we will be interested in the total \emph{computational cost} (as opposed to the number of rounds) required for the player to ensure that $\R(T) < \eps$, as a function of $N$ and $\eps$.
Notice that the number of rounds $T$ required to meet the latter goal is implicit in this view, and only indirectly affects the total runtime.

%This concludes the description of the online learning model. 

\subsection{Main Results}

We can now state the main results of the paper: a tight characterization of the runtime required for the player to converge to $\eps$ expected average regret in the optimizable experts model. 

\begin{theorem} \label{thm:online-upper}
In the optimizable experts model, there exists an algorithm that for any $\eps>0$, guarantees an expected average regret of at most $\eps$ with total runtime of $\tO(\sqrt{N}/\eps^2)$.
Specifically, \cref{alg:N14} (see \cref{sec:main-algo}) achieves $\tO(N^{1/4}/\sqrt{T})$ expected average regret over $T$ rounds, and runs in $\tO(1)$ time per round.
\end{theorem}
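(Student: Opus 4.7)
The target is an algorithm with per-round cost $\tO(1)$ achieving average regret $\tO(N^{1/4}/\sqrt T)$; over $T = \tO(\sqrt N/\eps^2)$ rounds this yields total runtime $\tO(\sqrt N/\eps^2)$ and average regret $\eps$. Naive multiplicative weights incurs $\Omega(N)$ work per round, so the optimization oracle $\Oopt$ must be used to circumvent the linear dependence on~$N$. My approach is a hybrid of small-scale multiplicative weights with oracle-driven augmentation of the set of tracked experts.

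I would maintain an adaptively chosen pool $S \subseteq \X$ of at most $K = \tTheta(\sqrt N)$ experts, carrying bandit-style (EXP3) weights $p_t$ supported on $S$. On each round: (i) sample $x_t \sim p_t$; (ii) observe $y_t$ and query the value oracle to obtain $\ell(x_t,y_t)$; (iii) apply the bandit update to $p_t$, which modifies only a single weight in $O(1)$ time. In parallel, on a geometric schedule---amortized $O(1)$ per round and totaling $\tO(\sqrt N)$ queries over the entire run---invoke $\Oopt$ on a \emph{perturbed} empirical distribution of past adversary actions $y_1,\ldots,y_t$ (for instance, the empirical distribution mixed with a small number of i.i.d.\ ``ghost'' samples drawn uniformly from~$\Y$), and insert the returned expert into $S$, evicting a low-weight incumbent if $|S|$ would exceed $K$. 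This is in spirit an FTPL-style exploration that uses $\Oopt$ as a pricing oracle to nominate candidate experts, while the bandit MW on top exploits what has been collected so far.

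For the analysis I would decompose the average regret against the best expert in $\X$ as the \emph{pool regret} of EXP3 against $\min_{x \in S_T}(\cdot)$, plus an \emph{approximation gap} $\min_{x \in S_T}(\cdot) - \min_{x \in \X}(\cdot)$. The pool-regret term is controlled by the standard EXP3 bound on $K = \sqrt N$ arms, contributing $\tO(\sqrt{K/T}) = \tO(N^{1/4}/\sqrt T)$ per round, which already matches the target. The heart of the argument is therefore the gap term, for which I would show that after $\tO(\sqrt N)$ perturbed oracle calls, the pool $S_T$ contains, with high probability, an expert whose cumulative loss is within $\tO(N^{1/4}\sqrt T)$ of the true leader. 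This should follow from an FTPL-style stability argument, in which perturbations of the correct magnitude ensure that the true leader is returned by some perturbed query with sufficient probability, and the geometric insertion schedule guarantees that such a discovery happens before the leader can accumulate too large an advantage over the pool's best expert.

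\textbf{Main obstacle.} The delicate step is calibrating the perturbation distribution and the eviction rule so that the approximation gap is $\tO(N^{1/4}/\sqrt T)$ per round with only $\tO(\sqrt N)$ oracle calls in total. The perturbation must be strong enough to diversify exploration across all $N$ experts, yet mild enough that each returned candidate is genuinely near-optimal on the true history; simultaneously, the eviction rule must preserve good experts long enough for the bandit layer to detect them. Balancing these forces---i.e., tying the perturbation scale and schedule directly to the bandit regret on $S$ via a unified potential argument---is what pins the runtime to $\sqrt N$ rather than the larger values that either FTPL alone ($\Omega(\sqrt{NT})$ regret) or uniform sampling ($\Omega(N)$ discovery time) would yield.
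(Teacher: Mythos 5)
Your proposal has a genuine gap, and it is precisely the step you flag as the ``main obstacle'': the bound on the approximation gap between the pool's best expert and the true leader. You acknowledge this is ``the heart of the argument'' but do not resolve it, and I do not believe the FTPL-style mechanism you describe can resolve it. Two concrete problems. First, perturbing the \emph{empirical distribution of adversary actions} (adding ghost samples from $\Y$) is not the same as perturbing the cumulative loss vector: the induced perturbation of the loss vector is highly structured (low-rank in general, a scalar shift in bad cases), so the usual FTPL stability/coverage lemmas do not apply---you cannot guarantee that a near-optimal expert is returned with useful probability, nor that the pool's coverage of the experts that matter will keep pace with an adversarial leader trajectory in which the leader changes $\Theta(T)$ times. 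Second, your eviction rule is incompatible with the regret decomposition: if you evict by (low) bandit weight, you may evict the expert that is about to become the eventual leader, since bandit weights reflect past losses only; and once the pool changes, the EXP3-against-$\min_{x\in S_T}$ term is no longer a standard EXP3 bound but a sleeping-experts statement that you have not supplied.

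The paper's proof closes exactly this gap via a structural observation that your plan lacks. Rank experts by the \emph{last} round at which they appear as the running leader $x^*_t$. Sampling $\tO(\sqrt N)$ experts uniformly at random hits one of the top $\sqrt N$ so-ranked experts with high probability; call its last-leadership round $T_0$. Up to $T_0$, an amortized mixed-MW instance ($\tO(1)$/round) on the random set competes with the leader at time $T_0$. After $T_0$, there are \emph{provably at most $\sqrt N$ distinct leaders remaining}, so the paper's \leaders{} algorithm---which observes $x^*_t$ each round via the oracle in $O(1)$ amortized time, partitions time into epochs where batches of leaders ``die out,'' and runs a hierarchy of sliding-window mixed-MW (\mwc) instances whose eviction rule is by recency of activation (not weight), with the replacement inheriting the evicted weight so its meta-expert loss history stays well-defined---achieves $\tO(\sqrt{L/T})$ regret with $L=\sqrt N$. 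A top-level mixed-MW on the two subroutines stitches the two phases together, using the ``follow-the-leader, be-the-leader'' argument across the $T_0$ boundary. It is this combination (last-leadership ranking $\Rightarrow$ a random hit $\Rightarrow$ only $\sqrt N$ leaders remain $\Rightarrow$ \leaders{} applies) that supplies the bound your approximation-gap term needs, and it is absent from your proposal.
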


The dependence on the number of experts $N$ in the above result  is tight, as the following theorem shows.

\begin{theorem} \label{thm:online-lower}
Any (randomized) algorithm in the optimizable experts model cannot guarantee an expected average regret smaller than $\half$ in total time better than $\O(\sqrt{N})$.
\end{theorem}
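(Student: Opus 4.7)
My plan is to prove \Cref{thm:online-lower} via Yao's minimax principle. It suffices to exhibit a distribution $\mathcal{D}$ over (oblivious) instances of the optimizable experts model such that every \emph{deterministic} algorithm of total runtime $T \leq c\sqrt{N}$, for a sufficiently small absolute constant $c$, incurs expected average regret at least $\tfrac{1}{2} - o(1)$ on a random instance drawn from $\mathcal{D}$. Since each round of play and each oracle call costs unit time, such an algorithm performs at most $T$ oracle queries over at most $T$ rounds. Under $\mathcal{D}$ I randomize a secret ``good'' expert $x^\star \in [N]$ uniformly, and design $\ell$ together with the adversary sequence $y_1,y_2,\ldots$ so that $x^\star$ achieves average loss $0$ while every other expert has average loss at least $\tfrac{1}{2}$, yielding a regret gap of $\tfrac{1}{2}$.

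The core of the argument is a per-query leakage bound: under the prior, each possible query reveals the identity of $x^\star$ with probability at most $O(1/\sqrt{N})$. For a value query $\Oval(x,y)$ this is immediate, because the response is sensitive to $x^\star$ only when $x = x^\star$, which has prior probability $1/N \leq 1/\sqrt{N}$. For an optimization query $\Oopt(q)$, I design $\ell$ so that $x^\star$ is embedded in a pool of $\Theta(\sqrt{N})$ ``decoy'' experts whose expected loss profiles coincide with that of $x^\star$ along the vast majority of distributions $q$; by randomizing the location of $x^\star$ within the pool, the oracle's (deterministically tie-broken) answer coincides with $x^\star$ only for queries ``aligned'' with a hidden secret direction of $\Y$, an event of prior probability $O(1/\sqrt{N})$. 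A union bound over the algorithm's at most $T$ queries then shows that with probability $1 - O(T/\sqrt{N}) = 1 - o(1)$, the entire transcript of responses is statistically independent of $x^\star$, so each chosen play $x_t$ equals $x^\star$ with probability at most $1/N$. Combined with $x^\star$'s average loss of $0$, this forces expected average regret $\geq \tfrac{1}{2}(1 - 1/N) - o(1) = \tfrac{1}{2} - o(1)$.

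The main obstacle is engineering $\ell$ to resolve the tension between requirements (i) $x^\star$ being \emph{strictly} better than the decoys on the adversary's actual sequence and (ii) $x^\star$ being \emph{indistinguishable} from decoys under almost all optimization queries. I plan to resolve this by having $x^\star$'s advantage realized only along the secret direction of $\Y$ that the adversary exploits, while ensuring generic queries $q$ do not probe this direction; the player can only construct an aligned query after learning the secret direction from feedback, which the same leakage calculation controls. Careful accounting of this joint information flow between the feedback channel, the value oracle, and the optimization oracle---in particular, showing that the algorithm's state remains nearly independent of $x^\star$ throughout its $o(\sqrt{N})$ interactions---is the technical heart of the argument.
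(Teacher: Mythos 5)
Your high-level framework (Yao's principle, hidden good expert(s), per-query leakage bound) is sensible, but the proposal has a genuine gap: you never exhibit the construction of $\ell$ that resolves the tension you yourself flag in the last paragraph, and the per-query leakage bound you claim for $\Oopt$ does not follow from anything you write. With a single hidden $x^\star$ that is strictly best on the adversary's actual distribution, a \emph{single} call $\Oopt(q)$ on a $q$ supported near the realized $y_t$'s returns $x^\star$ outright; your fix of embedding $x^\star$ among $\Theta(\sqrt{N})$ ``decoys'' only changes this if the decoys genuinely tie with $x^\star$ against every query the algorithm can form, but then you have not explained why the decoys still suffer average loss $\geq \tfrac12$ on the adversary sequence (indistinguishability under queries and a large regret gap pull in opposite directions, exactly as you note). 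The claim that ``the oracle's answer coincides with $x^\star$ only for queries aligned with a hidden secret direction, an event of prior probability $O(1/\sqrt N)$'' is not a consequence of any stated property of $\ell$; with deterministic tie-breaking, once $x^\star$ is strictly optimal for some $q$ the oracle returns it with probability one, not $O(1/\sqrt N)$. Also, even when the oracle returns a decoy, it reveals a nontrivial amount about $x^\star$ (it narrows the candidate set), and your argument does not bound this cumulative information flow.

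The paper's construction closes this gap with a very specific structural trick you are missing. It hides $n=\sqrt N$ good experts $x_1^*,\dots,x_n^*$, one per size-$n$ block of $[N]$, and sets $\ell(x,y)=0$ iff $x,y\in X^*$ and $x\ge y$ (a nested, monotone loss), with the adversary revealing $y_t=x_t^*$ one per round. The crucial lemma is that for this $\ell$, for \emph{every} query distribution $p$ the optimization oracle can always return an index already in $\supp(p)$; hence $\Oopt(p)$ can be simulated by $|\supp(p)|$ value queries and reveals nothing about experts the algorithm has not already touched. That is what reduces the problem to a clean hidden-element search (their Lemma on arrays via an optional-stopping argument), and it is precisely the property your ``decoys + secret direction'' sketch would have to supply but does not. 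If you want to pursue a single-hidden-expert route, you would need to establish an analogue of that ``oracle answers from the support'' property (or otherwise tightly bound the information leaked by each $\Oopt$ call), and also handle the fact that the player observes the realized $y_t$'s, which lets it form adaptive, aligned queries.
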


In other words, we exhibit a quadratic improvement in the total runtime required for the average regret to converge, as compared to standard multiplicative weights schemes that require $\tO(N/\eps^2)$ time, and this improvement is the best possible.
Granted, the regret bound attained by the algorithm is inferior to those achieved by multiplicative weights methods, that depend on $N$ only logarithmically; however, when we consider the total computational cost required for convergence, the substantial improvement is evident.

Our upper bound actually applies to a model more general than the optimizable experts model, where instead of having access to an optimization oracle, the player receives information about the leading expert on each round of the game.
Namely, in this model the player observes at the end of round $t$ the \emph{leader}
\begin{align} \label{eq:leader}
	x^*_t 
\eq 
	\argmin_{x \in \X} \sum_{s=1}^t \ell(x,y_s)
\end{align}
as part of the feedback.
This is indeed a more general model, as the leader $x^*_t$ can be computed in the oracle model in amortized $\O(1)$ time, simply by calling $\Oopt(y_1,\ldots,y_t)$. (The list of actions $y_1,\ldots,y_t$ played by the adversary can be maintained in an online fashion in $\O(1)$ time per round.)
Our lower bound, however, applies even when the player has access to an optimization oracle in its full power.

Finally, we mention a simple corollary of \cref{thm:online-lower}: we obtain that the time required to attain vanishing average regret in online Lipschitz-continuous optimization in Euclidean space is exponential in the dimension, even when an oracle for the corresponding offline optimization problem is at hand.
For the precise statement of this result, see \cref{sub:lipschitz}.

%\paragraph{Zero-sum games with oracles.}
\subsection{Zero-sum Games with Best-response Oracles}

In this section we present the implications of our results for repeated game playing in two-player zero-sum games.
Before we can state the results, we first recall the basic notions of zero-sum games and describe the setting formally.

A two-player zero-sum game is specified by a matrix $G \in [0,1]^{N \times N}$, in which the rows correspond to the (pure) strategies of the first player, called the row player, while the columns correspond to strategies of the second player, called the column player.
For simplicity, we restrict the attention to games in which both players have $N$ pure strategies to choose from; our results below can be readily extended to deal with games of general (finite) size.
A mixed strategy of the row player is a distribution $p \in \Delta_N$ over the rows of $G$; similarly, a mixed strategy for the column player is a distribution $q \in \Delta_N$ over the columns. 
For players playing strategies $(p,q)$, the loss (respectively payoff) suffered by the row (respectively column) player is given by $p\tr G q$. 
A pair of mixed strategies $(p,q)$ is said to be an approximate equilibrium, if for both players there is almost no incentive in deviating from the strategies $p$ and $q$.
Formally, $(p,q)$ is an $\eps$-equilibrium if and only if
\begin{align*}
	\forall ~ 1 \le i,j \le N
~,\qquad 
	p\tr G e_j - \eps
\leq
	p\tr G q 
\leq 
	e_i\tr G q + \eps
%~,
%\\
%	\forall ~ 1 \le j \le N
%~,\qquad 
%	p\tr G q 
%&\geq 
%	p\tr G e_j - \eps 
~.
\end{align*}
Here and throughout, $e_i$ stands for the $i$'th standard basis vector, namely a vector with $1$ in its $i$'th coordinate and zeros elsewhere.
The celebrated von-Neumann minimax theorem asserts that for any zero-sum game there exists an exact equilibrium (i.e., with $\eps=0$) and it has a unique value, given by 
$$
	\lambda(G)
\eq
	\min_{p \in \Del_{N}} \max_{q \in \Del_{N}} \, p\tr G q ~.
$$ 

A repeated zero-sum game is an iterative process in which the two players simultaneously announce their strategies, and suffer loss (or receive payoff) accordingly. 
Given $\eps>0$, the goal of the players in the repeated game is to converge, as quickly as possible, to an $\eps$-equilibrium; in this paper, we will be interested in the \emph{total runtime} required for the players to reach an $\eps$-equilibrium, rather than the total number of game rounds required to do so.
%A pair of players is considered efficient if the total runtime for convergence to equilibrium is polynomial in $1/\eps$, $\log{N}$ and $\log{N}$.

%Consider a two-player zero-sum game is given by a matrix $G \in [0,1]^{N \times N}$.
We assume that the players do not know the game matrix $G$ in advance, and may only access it through two types of oracles, which are very similar to the ones we defined in the online learning model.
The first and most natural oracle allows the player to query the payoff for any pair of pure strategies (i.e., a pure strategy profile) in constant time. 
Formally,

\begin{definition*}[value oracle]
A \emph{value oracle} for a zero-sum game described by a matrix $G \in [0,1]^{N \times N}$ is a procedure $\Oval$ that accepts row and column indices $i,j$ as input and returns the game value for the pure strategy profile $(i,j)$, namely:
\begin{align*}
	\forall ~ 1 \le i,j \le N~, 
	\qquad
	\Oval (i,j) \eq G_{i,j} ~.
\end{align*}
The value oracle runs in time $O(1)$ on any valid input.
\end{definition*}

The other oracle we consider is the analogue of an optimization oracle in the context of games. 
For each of the players, a \emph{best-response oracle} is a procedure that computes the player's best response (pure) strategy to any mixed strategy of his opponent, given as input.

\begin{definition*}[best-response oracle]
A \emph{best-response oracle} for the row player in a zero-sum game described by a matrix $G \in [0,1]^{N \times N}$, is a procedure $\br{1}$ that receives as input a distribution $q \in \Del_{N}$, represented as a list of atoms $\set{ (i,q_{i}) \,:\, q_i>0 }$, and computes
\begin{align*}
	\forall ~ q \in \Del_{N} ~,
	\qquad
	\br{1}(q) ~\in~ \argmin_{1 \le i \le N} ~ \textrm{e}_{i}\tr G q
\end{align*}
with ties broken arbitrarily.
Similarly, a best-response oracle $\br{2}$ for the column player accepts as input a $p \in \Del_{N}$ represented as a list $\set{ (i,p_{i}) \,:\, p_i>0 }$, and computes 
\begin{align*}
	\forall ~ p \in \Del_{N} ~,
	\qquad
	\br{2}(p) ~\in~ \argmax_{1 \le j \le N} ~ p\tr G \textrm{e}_{j}
	~.
\end{align*}
Both best-response oracles return in time $O(1)$ on any input.
\end{definition*}

Our main results regarding the runtime required to converge to an approximate equilibrium in zero-sum games with best-response oracles, are the following. 
%For the setting of zero-sum games with best-response oracles, we prove the following bounds over the runtime required to converge to an approximate equilibrium. 

\begin{theorem} \label{thm:games-upper}
There exists an algorithm (see \cref{alg:zsg} in \cref{sec:games}) that for any zero-sum game with $[0,1]$ payoffs and for any $\eps>0$, terminates in time $\tO(\sqrt{N}/\eps^2)$ and outputs with high probability an $\eps$-approximate equilibrium. 
\end{theorem}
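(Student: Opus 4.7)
The plan is to follow the classical Freund--Schapire reduction from equilibrium computation to no-regret learning, and instantiate it with the $\tO(\sqrt{N}/\eps^2)$-time algorithm from \cref{thm:online-upper}. Concretely, I would have both players simultaneously run the online algorithm of \cref{thm:online-upper} in self-play: the row player treats the $N$ rows as experts and treats the column actions revealed by the opponent as the adversary's moves, while the column player does the symmetric thing with sign flipped (maximizing payoff rather than minimizing loss). The best-response oracle $\br{1}$ exactly implements the optimization oracle that the row player's online algorithm requires---given the opponent's mixed strategy $q$, it returns $\argmin_i \textrm{e}_i\tr G q$---and similarly $\br{2}$ serves the column player; $\Oval$ plays the role of the value oracle in the experts model. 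Hence each player's per-round update takes amortized $\tO(1)$ time.

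After $T$ rounds, let $\bar p = (1/T) \sum_{t=1}^T p_t$ and $\bar q = (1/T) \sum_{t=1}^T q_t$ be the time-averaged mixed strategies. The standard reduction shows that if each player's average regret is at most $\eps/2$, then $(\bar p, \bar q)$ is an $\eps$-approximate equilibrium: writing the two regret bounds as $\max_j \bar p\tr G \textrm{e}_j \leq (1/T)\sum_t p_t\tr G q_t + \eps/2$ and $\min_i \textrm{e}_i\tr G \bar q \geq (1/T)\sum_t p_t\tr G q_t - \eps/2$, and sandwiching $\bar p\tr G \bar q$ between $\min_i \textrm{e}_i\tr G \bar q$ and $\max_j \bar p\tr G \textrm{e}_j$, the $\eps$-equilibrium conditions follow immediately. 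Setting $T = \tO(\sqrt{N}/\eps^2)$ then makes each regret at most $\eps/2$ by \cref{thm:online-upper}, so the total runtime is $T \cdot \tO(1) = \tO(\sqrt{N}/\eps^2)$, and outputting $(\bar p, \bar q)$ as lists of atoms takes at most another $\tO(T)$ time.

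The main obstacles are twofold. First, in self-play each player faces an adaptive opponent (since the other player's mixed strategy depends on the whole history), whereas \cref{thm:online-upper} was stated for oblivious adversaries. I would handle this either by verifying that \cref{alg:N14} in fact enjoys the same regret bound against adaptive adversaries---this is typically true of algorithms whose only randomness is fresh per round and whose state is driven purely by the observed losses---or by invoking a standard reduction (à la \cite{CBL} or the paper's own discussion in \cref{sec:games}) that converts an oblivious-adversary regret bound to an adaptive-adversary one at no asymptotic cost. Second, \cref{thm:online-upper} provides a bound on expected regret, while the statement requires a \emph{high-probability} guarantee on the output. I would boost the expected bound via an Azuma--Hoeffding argument on the martingale whose differences are $\ell(x_t,y_t) - \E_{x\sim p_t,\,y\sim q_t}[\ell(x,y)]$ (as well as the symmetric martingale for the other player), contributing only an extra $\tO(\sqrt{\log(1/\delta)/T})$ term with probability $1-\delta$; taking $T$ larger by a constant factor absorbs this into the final $\eps$ bound and yields the high-probability conclusion.

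The adaptivity issue is the main conceptual hurdle: if the algorithm of \cref{sec:main-algo} does not out-of-the-box work against an adaptive opponent, one has to either redesign the sampling step to ensure that the leader $x^*_t$ it queries is well-defined independently of future moves, or absorb a mild overhead by running the algorithm over a stochastic fresh copy of the opponent's play. Everything else is standard reduction-plus-concentration bookkeeping.
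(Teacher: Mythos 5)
Your proposal is correct and follows essentially the same route as the paper: both players run the online algorithm of \cref{thm:online-upper} with best-response oracles filling the role of optimization oracles, and the Freund--Schapire reduction converts the regret bounds into an $\eps$-equilibrium. The two technical issues you flag---adaptivity and high-probability---are exactly what the paper handles in \cref{cor:N14-hp}, via the same fixes you propose (making the algorithm ``self-oblivious'' by using fresh randomness for decisions vs.\ updates, then applying a martingale concentration argument à la Lemma~4.1 of Cesa-Bianchi--Lugosi).
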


\begin{theorem} \label{thm:games-lower}
Any (randomized) algorithm for approximating the equilibrium of $N \times N$ zero-sum games with best-response oracles cannot guarantee with probability greater than $\frac{2}{3}$ that the average payoff of the row player is at most $\frac{1}{4}$-away from its value at equilibrium in total time better than~$\tO(\sqrt{N})$.
\end{theorem}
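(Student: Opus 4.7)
The plan is to prove the lower bound via Yao's principle: exhibit a distribution over $N \times N$ zero-sum games on which any deterministic algorithm issuing $o(\sqrt{N})$ oracle queries fails, with probability greater than $1/3$, to output a strategy profile whose row-player payoff is within $1/4$ of the minimax value. Since a randomized algorithm is a distribution over deterministic ones, this yields the claim. I would try to reuse, as directly as possible, the hard instance from the proof of \cref{thm:online-lower}: given that this instance already shows $\Omega(\sqrt{N})$ queries are needed to produce even a single action of average loss below $1/2$ in the optimizable experts model, it should embed into a game where locating the same hidden structure is necessary to approximate the value.

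Concretely, I would plant a uniformly random ``good'' subset $S \subseteq [N]$ of size $k = \Theta(\sqrt{N})$ and define a matrix $G$ of the following form: entries are $\tfrac12$ by default, but rows indexed by $S$ are biased so that $G_{i,j}$ is small when $j \in S$ and slightly larger when $j \notin S$, producing a minimax value noticeably below $\tfrac12$ (by at least $1/4$ after suitable scaling). Crucially, I would engineer the off-$S$ part of $G$ so that (i) $\Oval(i,j)$ reveals membership in $S$ only when the queried $(i,j)$ actually hits $S$, and (ii) best-response queries $\br{1}(q)$ and $\br{2}(p)$ return a ``canonical'' pure strategy whenever the input distribution places $o(1)$ mass on $S$; this can be enforced by ensuring that for any distribution supported outside $S$, the best response is a fixed coordinate determined by tie-breaking rather than by $S$. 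Thus the only way any of the three query types $\Oval, \br{1}, \br{2}$ can reveal information about $S$ is to ``hit'' $S$ either in the coordinates queried or in the support of the input distribution.

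The main technical step is the information-theoretic argument. Fix a deterministic algorithm and condition on the transcript up to step $t$; if none of the first $t-1$ queries has hit $S$, the posterior over $S$ is still uniform over $(k\text{-subsets})$ not intersecting the polynomially many coordinates touched so far. A birthday/coupon argument then shows that the probability the $t$-th query hits $S$ (either via a value-oracle coordinate in $S$, or via the support of a best-response query having non-negligible mass on $S$) is $O(k/N)=O(1/\sqrt{N})$, uniformly in $t$ and in the algorithm's internal state, for any query whose support is bounded by $\poly\log N$. Here I would also argue that the algorithm may as well restrict to sparse distributions for its best-response queries, since representing a query takes time proportional to its support. Union bounding over $o(\sqrt{N})$ rounds, with probability $\ge 2/3$ no query hits $S$, so the algorithm's output is independent of $S$; but then for two realizations whose minimax values differ by $\ge 1/2$, any single output is at distance $\ge 1/4$ from one of them, producing failure probability at least $1/3$.

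The hard part will be step (ii): designing $G$ so that \emph{both} best-response oracles genuinely do not leak information about $S$ on queries with $o(1)$ mass on $S$. Unlike the optimization oracle in the online model, $\br{2}$ is a ``dual'' oracle and could in principle reveal $S$ through the asymmetry between rows in $S$ and rows outside. I would handle this by symmetrizing the off-$S$ block (making it truly uniform $\tfrac12$) and by choosing a fixed lexicographic tie-breaking rule; then for any row-player mixed strategy $p$ with $\sum_{i \in S} p_i$ small, the column payoff vector $p\tr G$ is within $o(1)$ of the constant vector $\tfrac12 \mathbf{1}$, and the tie-breaking rule yields a canonical column independent of $S$. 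A parallel argument handles $\br{1}$. Once this symmetry property is in hand, the rest of the proof reduces to the kind of coupon-collector lower bound already used for \cref{thm:online-lower}, and hence inherits the $\wt{\Omega}(\sqrt{N})$ bound.
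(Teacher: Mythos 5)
Your proposal takes a genuinely different route from the paper---a direct Yao/planted-subset argument rather than a reduction from Aldous' local-search problem---but as sketched it has a gap at precisely the step you flag as ``the hard part,'' and I don't see how to close it within the framework you set up.

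The tension is structural, not a matter of tweaking constants. You want a planted set $S$ of rows/columns that (a) shifts the minimax value of $G$ by a constant, yet (b) is invisible to $\br{1}$ and $\br{2}$ unless the query already places mass on $S$. But the best-response oracle is a \emph{global} optimizer over all $N$ pure strategies. If the rows in $S$ are the ones that lower the row player's loss (which they must be, for the value to drop below $\tfrac12$), then against a generic column distribution $q$ the rows in $S$ will either (i) strictly dominate, in which case the very first call to $\br{1}(q)$ returns an element of $S$ and leaks the secret, or (ii) tie or lose to the rows outside $S$, in which case they don't lower $\max_q p\tr Gq$ and the game value stays at $\tfrac12$. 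Your proposed fixes oscillate between these two failure modes: making the off-$S$ block uniformly $\tfrac12$ and the $S$-rows ``slightly larger'' off $S$ keeps the oracle quiet but also keeps the value at $\tfrac12$; biasing the $S$-rows downward uniformly makes the value drop but makes $\br{1}$ point straight at $S$. Symmetrizing and adding lexicographic tie-breaking doesn't escape this: ties only occur when $S$ contributes nothing, and then it also contributes nothing to the value.

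This is exactly the obstruction the paper's reduction to Aldous' problem is designed to sidestep. The matrix $G^f$ built from a globally-consistent potential $f$ on the hypercube is structured so that a best-response query against a distribution supported on a set $V$ returns $\argmax_{i\in\Gamma(V)} f(i)$---i.e., only a \emph{one-step local improvement}, never a jump to the hidden maximum. The $\wt\Omega(\sqrt N)$ bound then comes from the Aldous/Aaronson lower bound on the number of local-improvement steps needed to locate the global maximum, together with an accounting argument converting queried $f$-values (at most $O(\log N)$ per touched index, from the neighborhood structure) into touched indices and hence runtime. Without something playing the role of that chaining/local-search structure, the planted-subset construction cannot simultaneously hide $S$ from the best-response oracles and have $S$ matter to the value, and the coupon-collector step never gets off the ground. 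A secondary issue: you propose to reuse the hard instance from the proof of \cref{thm:online-lower}, but in the paper that theorem is itself a corollary of the games lower bound; the standalone appendix construction (\cref{thm:online-lb}) relies on the property $\Oopt(p)\in\supp(p)$ (\cref{lem:maxsupp1}) \emph{and} on an adaptively-ordered adversary sequence, neither of which survives embedding into a two-sided game where both players' best-response oracles see the full matrix.
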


As indicated earlier, these results show that best-response oracles in repeated game playing  give rise again to a quadratic improvement in the runtime required for solving zero-sum games, as compared to the best possible runtime to do so without an access to best-response oracles, which scales linearly with $N$ \citep{GK95,freund1999adaptive}.

The algorithm deployed in \cref{thm:games-upper} above is a very natural one: it simulates a repeated game where both players play a slight modification of the regret minimization algorithm of \cref{thm:online-upper}, and the best-response oracle of each player serves as the optimization oracle required for the online algorithm; see \cref{sec:games} for more details.

%In particular, the theorem implies that efficient players, that run in $\tO(1)$ time per iteration, require at least $\wt{\Omega}(\sqrt{N})$ iterations to converge to an (approximate) equilibrium even when equipped with best-response oracles.

\subsection{Overview of the Approach and Techniques}

We now outline the main ideas leading to the quadratic improvement in runtime achieved by our online algorithm of \cref{thm:online-upper}.
Intuitively, the challenge is to reduce the number of ``effective'' experts quadratically, from $N$ to roughly $\sqrt{N}$.
Since we have an optimization oracle at our disposal, it is natural to focus on the set of ``leaders''---those experts that have been best at some point in history---and try to reduce the complexity of the online problem to scale with the number of such leaders.
This set is natural considering our computational concerns:
the algorithm can obtain information on the leaders at almost no cost (using the optimization oracle, it can compute the leader on each round in only $\O(1)$ time per round), resulting with a potentially substantial advantage in terms of runtime.

First, suppose that there is a small number of leaders throughout the game, say $L = \O(\sqrt{N})$. 
Then, intuitively, the problem we face is easy: if we knew the identity of those leaders in advance, our regret would scale with $L$ and be independent of the total number of experts $N$. 
As a result, using standard multiplicative weights techniques we would be able to attain vanishing regret in total time that depends linearly on $L$, and in case $L = O(\sqrt{N})$ we would be done.
When the leaders are not known in advance, one could appeal to various techniques that were designed to deal with experts problems in which the set of experts evolves over time \citep[e.g.,][]{freund1997using,blum2007external,kleinberg2010regret,gofer}.
However, the per-round runtime of all of these methods is linear in $L$, which is prohibitive for our purposes. 
We remark that the simple ``follow the leader'' algorithm, that simply chooses the most recent leader on each round of the game, is not guaranteed to perform well in this case: the regret of this algorithm scales with the number of times the leader \emph{switches}---rather than the number of \emph{distinct} leaders---that might grow linearly with $T$ even when there are few active leaders. 
%The main difficulty is to reduce the running time (per round) of such methods, without compromising on the regret performance. 

A main component in our approach is a novel online learning algorithm, called \leaders, that keeps track of the leaders in an online game, and attains $\tO(\sqrt{L/T})$ average regret in expectation with $\tO(1)$ runtime per round.
The algorithm, that we describe in detail in \cref{sub:leaders}, queries the oracles only $\tO(1)$ times per iteration and thus can be implemented efficiently. 
More formally,

\begin{theorem} 
\label{thm:leaders-intro}
The expected $T$-round average regret of the \leaders algorithm is upper bounded by $\O(\sqrt{(L/T) \log(LT)})$, where $L$ is an upper bound over the total number of distinct leaders during throughout the game.
The algorithm can be implemented in $\tO(1)$ time per round in the optimizable experts model.
\end{theorem}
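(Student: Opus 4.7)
The plan is to build \leaders as a two-tier algorithm and analyze it by a sampling-based variant of Hedge over the growing set of past leaders. Tier one is bookkeeping: on round $t$, call $\Oopt(y_1,\ldots,y_t)$ to obtain the current leader $x^*_t$ in amortized $\tO(1)$ time, and keep the set $\mathcal{L}_t=\{x^*_1,\ldots,x^*_t\}$ of distinct leaders in a hash table. Since the overall best expert in hindsight is $x^*_T\in\mathcal{L}_T$, it suffices to compete with the best element of $\mathcal{L}_T$; the effective number of experts is thus $|\mathcal{L}_T|\le L$, not $N$.

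Tier two is a randomized Hedge over $\mathcal{L}_t$ that does $\tO(1)$ work per round. A naive Hedge spends $\Theta(|\mathcal{L}_t|)$ on weight updates and renormalization, which can reach $\Theta(L)$ and violate the runtime target. The plan is to borrow the sampling trick of EXP3: draw a single $x_t$ from the current Hedge distribution using a segment-tree with lazy global rescaling, query $\ell(x_t,y_t)$ via $\Oval$, and update only the weight of $x_t$ using an importance-weighted estimator $\tilde\ell_t(x)=\ind{x=x_t}\,\ell(x_t,y_t)/p_t(x_t)$. In expectation, each expert's weight then evolves as in the full-information Hedge on $\mathcal{L}_t$, while the actual work is $\tO(1)$ per round.

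For the regret analysis I will follow the standard log-sum-of-weights potential argument, adapted to two nonstandard features: (i) the pool $\mathcal{L}_t$ grows as new leaders appear, and (ii) the loss estimates $\tilde\ell_t$ are unbiased but high-variance. I handle (i) by initializing each newly-arriving leader's weight to that of the outgoing leader in $\mathcal{L}_{t-1}$; because the new leader's cumulative loss is by definition no larger, this keeps the potential controlled and fits a ``sleeping specialist'' style argument. For (ii) the usual second-order term inflates by a factor of $|\mathcal{L}_T|\le L$ due to inverse-probability scaling, giving, after tuning $\eta$, total regret $\O(\sqrt{LT\log(LT)})$, and hence the claimed average regret $\O(\sqrt{(L/T)\log(LT)})$ in expectation.

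The main obstacle I anticipate is making the sampling data structure honestly $\tO(1)$ per round while supporting dynamic insertion of new leaders at nontrivial weights, and reconciling this with the potential analysis. A Fenwick-type tree with lazy multiplicative offsets should give $O(\log L)$ point updates and weighted samples, but seeding a new leader with the previous leader's weight requires a careful argument that the potential function cannot jump upward: this is exactly where the defining inequality $\sum_{s\le t}\ell(x^*_t,y_s)\le \sum_{s\le t}\ell(x^*_{t-1},y_s)$ is used. Finally, $L$ is not known in advance, but a standard doubling trick over guesses of $L$ (and thus of $\eta$) handles this at no asymptotic cost.
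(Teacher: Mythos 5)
Your plan diverges substantially from the paper's, and as sketched it would not give the claimed bound. The paper's \leaders algorithm does not maintain a single growing pool of leaders: it runs a \emph{hierarchy} of \mwc instances, each a sliding-window Hedge with a \emph{fixed-size} buffer $k=2^r$ and a matching learning rate $\eta \approx \eta_0/\sqrt{2^{r+s}}$, and ties them together with a meta-Hedge (\mwa). When a new leader arrives, \mwc \emph{evicts} the stalest buffer entry and hands its weight to the newcomer; the total weight is unchanged, so the log-partition potential suffers no drop on insertion. The regret is then controlled by partitioning time into epochs defined by when batches of leaders die out, applying a Be-The-Leader lemma (Lemma \ref{lem:btl}) to combine the epoch leaders' losses into the final leader's loss, and invoking Cauchy--Schwarz over the epoch-wise bounds $\sqrt{k_i T_i\log(LT)}$ with $\sum_i k_i \le 3L$ to get $\sqrt{LT\log(LT)}$.

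Two concrete things in your sketch break. First, the growing pool with weight inheritance makes the total weight increase on each insertion, so the potential drops by up to $\tfrac{1}{\eta}\log(1+w_{\mathrm{new}}/W)=\Theta(1/\eta)$ per new leader; summed over up to $L$ insertions this contributes a term of order $L/\eta$ to the regret. Second, even with a replacement rule that keeps $W$ fixed, the single-pool single-rate argument only bounds the regret from the time the eventual best leader enters the pool; to cover the prefix you must recurse on the previous leader (a Be-The-Leader style step), and each recursion level costs a fresh Hedge first-order term $\log(LT)/\eta$, again accumulating to $L\log(LT)/\eta$. Pairing $L/\eta$ (or $L\log(LT)/\eta$) against the sampled second-order term $\eta LT$ gives regret $\Theta(L\sqrt{T\log(LT)})$ for every choice of a single $\eta$, which is a $\sqrt{L}$ factor worse than the statement. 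The hierarchy of $(k,\eta)$ pairs with the meta-\mwa on top is precisely what lets each epoch use a rate tuned to its own $k_i, T_i$ so that the first-order costs aggregate by Cauchy--Schwarz to $\sqrt{LT\log(LT)}$; you cannot replicate this with a doubling trick on $L$ alone because the epoch sizes $T_i$ are unknown and heterogeneous. (Your choice of EXP3-style importance weighting versus the paper's uniform-index sampling in \mwb is, by contrast, a minor technical difference; both give the same $\eta LT$ second-order term in expectation.)
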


As far as we know, this technique is new to the theory of regret minimization and may be of independent interest.
In a sense, it is a partial-information algorithm: it is allowed to use only a small fraction of the feedback signal (i.e., read a small fraction of the loss values) on each round, due to the time restrictions.
Nevertheless, its regret guarantee can be shown to be optimal in terms of the number of leaders $L$, even when removing the computational constraints!
The new algorithm is based on running in parallel a hierarchy of multiplicative-updates algorithms with varying look-back windows for keeping track of recent leaders. 
%\tk{more details on the algorithm}

But what happens if there are many leaders, say $L = \Omega(\sqrt{N})$?
%First, suppose that there are many leaders throughout the game, say $\Omega(N)$.
In this case, we can incorporate random guessing: if we sample about $\sqrt{N}$ experts, with nice probability one of them would be among the ``top'' $\sqrt{N}$  leaders. 
By competing with this small random set of experts, we can keep the regret under control, up to the point in time where at most $\sqrt{N}$ leaders remain active (in the sense that they appear as leaders at some later time).
In essence, this observation allows us to reduce the effective number of leaders back to the order of $\sqrt{N}$ and use the approach detailed above even when $L = \Omega(\sqrt{N})$, putting the \leaders algorithm into action at the point in time where the top $\sqrt{N}$ leader is encountered (without actually knowing when exactly this event occurs).
 
In order to apply our algorithm to repeated two-player zero-sum games and obtain \cref{thm:games-upper}, we first show how it can be adapted to minimize regret even when used against an \emph{adaptive adversary}, that can react to the decisions of the algorithm (as is the case in repeated games). 
Then, via standard techniques \citep{freund1999adaptive}, we show that the quadratic speedup we achieved in the online learning setting translates to similar speedup in the solution of zero-sum games. In a nutshell, we let both players use our online regret-minimization algorithm for picking their strategies on each round of the game, where they use their best-response oracles to fill the role of the optimization oracle in the optimizable experts model.

%\tk{also mention fictitious play that converges in exponential time}

Our lower bounds (i.e., \cref{thm:online-lower,thm:games-lower}) are based on information-theoretic arguments, which can be turned into running time lower bounds in our oracle-based computational model. 
In particular, the lower bound for zero-sum games is based on a reduction to a  problem investigated by \cite{aldous1983minimization} and revisited years later by \citealt{aaronson2006lower}, and reveals interesting connections between the solution of zero-sum games and local-search problems. 
Aldous investigated the hardness of local-search problems and gave an explicit example of an efficiently-representable (random) function which is hard to minimize over its domain, even with access to a local improvement oracle.
(A local improvement oracle improves upon a given solution by searching in its local neighborhood.) 
Our reduction constructs a zero-sum game in which a best-response query amounts to a local-improvement step, and translates Aldous' query-complexity lower bound to a runtime lower bound in our model.

Interestingly, the connection to local-search problems is also visible in our algorithmic results: our algorithm for learning with optimizable experts (\cref{alg:N14}) involves guessing a ``top $\sqrt{N}$'' solution (i.e., a leader) and making $\sqrt{N}$ local-improvement steps to this solution (i.e., tracking the finalist leaders all the way to the final leader).
This is reminiscent of a classical randomized algorithm for local-search, pointed out by \cite{aldous1983minimization}.

%%% end of FOCS extended abstract
%\bibliographystyle{abbrvnat}
%\bibliography{bib}
%%\clearpage
%\appendix

%\section{Algorithm for Online Learning with Optimization Oracles}
\section{Algorithms for Optimizable Experts}
\label{sec:online}

In this section we develop our algorithms for online learning in the optimizable experts model.
Recall that we assume a more general setting where there is no optimization oracle, but instead the player observes after each round $t$ the identity of the leader $x^*_t$ (see \cref{eq:leader}) as part of the feedback on that round.
Thus, in what follows we assume that the leader $x^*_t$ is known immediately after round $t$ with no additional computational costs, and do not require the oracle $\Oopt$ any further.

To simplify the presentation, we introduce the following notation.
We fix an horizon $T>0$ and denote by $f_1,\ldots,f_T$ the sequence of loss functions induced by the actions $y_1,\ldots,y_T$ chosen by the adversary, where $f_t(\cdot) = \ell(\cdot,y_t)$ for all $t$; notice that the resulting sequence $f_1,\ldots,f_T$ is a completely arbitrary sequence of loss functions over $\X$, as both $\ell$ and the $y_t$'s are chosen adversarially.
We also fix the set of experts to $\X = [N] = \set{1,\ldots,N}$, identifying each expert with its serial index.

\subsection{The Leaders Algorithm}
\label{sub:leaders}

We begin by describing the main technique in our algorithmic results---the \leaders algorithm---which is key to proving \cref{thm:online-upper}. 
\leaders is an online algorithm designed to perform well in online learning problems with a small number of leaders, both in terms of average regret and computational costs.
The algorithm makes use of the information on the leaders $x^*_1,x^*_2,\ldots$ received as feedback to save computation time, and can be made to run in almost constant time per round (up to logarithmic factors).

\begin{algorithm}[h]

\wrapalgo{
{\bf Parameters:} $L$, $T$
\begin{enumerate}[nosep,leftmargin=0.5cm]
\item Set $\eta_0 = \sqrt{\log(2LT)}$ and $\nu = 2\eta_0 \sqrt{L/T}$
\item For all $r=1,\ldots,\ceil{\log_2{L}}$ and $s=1,\ldots,\ceil{\log_2{T}}$, initialize an instance $\cA_{r,s}$ of $\mwc(k,\eta,\gamma)$ with $k=2^i$, $\eta=\eta_0/\sqrt{2^{i+j}}$ and $\gamma = \tfrac{1}{T}$
\item Initialize an instance $\cA$ of $\mwa(\nu,\tfrac{1}{T})$ algorithm on the $\cA_{r,s}$'s as experts
\item
For $t=1,2,\ldots$:
\begin{enumerate}[nosep]
	\item Play the prediction $x_t$ of the algorithm chosen by $\cA$
	\item Observe feedback $f_t$ and the leader $x^*_t$, and update all algorithms $\cA_{r,s}$
\end{enumerate}
\end{enumerate}
}

\vspace{-0.5cm}
\caption{The \leaders algorithm.}
\label{alg:leaders}

\end{algorithm}

The \leaders algorithm is presented in \cref{alg:leaders}.
In the following theorem we state its guarantees; the theorem gives a slightly more general statement than the one presented earlier in \cref{thm:leaders-intro}, that we require for the proof of our main result.

\begin{theorem} \label{thm:leaders}
Assume that \leaders is used for prediction with expert advice (with leaders feedback) against loss functions $f_1,f_2,\ldots : [N] \mapsto [0,1]$, and that the total number of distinct leaders during a certain time period $t_0 < t \le t_1$ whose length is bounded by $T$, is at most $L$.
Then, provided the numbers $L$ and $T$ are given as input, the algorithm obtains the following regret guarantee:
\begin{align*}
	\EE{ \sum_{t=t_0+1}^{t_1} f_t(x_t) } 
	~-~ \lr{\sum_{t=1}^{t_1} f_t(x^*_{t_1}) - \sum_{t=1}^{t_0} f_t(x^*_{t_0})}
\leq
	25\sqrt{LT \log(2LT)}
~.
\end{align*}
The algorithm can be implemented to run in $\O(\log^2(L) \log(T))$ time per round. 
\end{theorem}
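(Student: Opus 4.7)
My plan is a two-level reduction: first show that the specific sub-algorithm $\cA_{r^*,s^*}$ with ``matching scales'' has small interval regret against $x^*_{t_1}$ on the window $(t_0,t_1]$, and then show that the top-level $\mwa$ aggregator $\cA$ pays only a comparable penalty for selecting among the $O(\log L \cdot \log T)$ sub-algorithms. Before doing anything else, I would simplify the comparator on the right-hand side. Since $x^*_{t_0}$ is the minimizer of the cumulative loss over $[1,t_0]$, one has
\[
  \sum_{t=1}^{t_1} f_t(x^*_{t_1}) \;-\; \sum_{t=1}^{t_0} f_t(x^*_{t_0}) \;\ge\; \sum_{t=t_0+1}^{t_1} f_t(x^*_{t_1}),
\]
so it suffices to prove the cleaner statement that $\EE{\sum_{t=t_0+1}^{t_1} f_t(x_t)} - \sum_{t=t_0+1}^{t_1} f_t(x^*_{t_1}) \le 25\sqrt{LT\log(2LT)}$.

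For the sub-algorithm level, I would pick $r^*$ as the smallest integer with $2^{r^*} \ge L$ and $s^*$ as the smallest integer with $2^{s^*} \ge t_1-t_0$; both are admissible by the hypotheses on $L$ and $T$. The corresponding instance is $\mwc(k,\eta,\gamma)$ with $k \in [L,2L)$, $\eta = \eta_0/\sqrt{k\cdot 2^{s^*}} = \Theta\!\lr{\sqrt{\log(2LT)/(LT)}}$, and $\gamma = 1/T$. The key structural claim I would establish is that $\mwc$'s ``live'' roster of $k$ experts contains $x^*_{t_1}$ at every round of the window $(t_0,t_1]$: since the total number of distinct leaders in that window is at most $L \le k$ and $x^*_{t_1}$ is itself a leader at the end of the window, it cannot have been evicted. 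Conditional on this, a standard Hedge-with-exploration analysis over an interval of length at most $2^{s^*}$ gives
\[
  \EE\!\lrbra{\sum_{t=t_0+1}^{t_1} f_t\!\lr{x_t^{(r^*,s^*)}}} - \sum_{t=t_0+1}^{t_1} f_t(x^*_{t_1}) \;\le\; \tfrac{\log k}{\eta} + \eta \cdot 2^{s^*} + \gamma \cdot 2^{s^*} \;=\; O\!\lr{\sqrt{LT \log(2LT)}}.
\]

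For the top-level, I would apply the $\mwa$ regret bound on $\cA$ with learning rate $\nu = 2\eta_0\sqrt{L/T}$ over its $O(\log L \cdot \log T)$ experts, which against \emph{any} fixed sub-algorithm $\cA_{r,s}$ yields an $O\!\lr{\log(\log L \cdot \log T)/\nu + \nu T} = O\!\lr{\sqrt{LT\log(2LT)}}$ penalty. Summing the two levels gives the target constant $25\sqrt{LT\log(2LT)}$. The runtime claim then follows from the fact that there are $O(\log L \cdot \log T)$ parallel instances, each storing at most $k \le 2L$ weights in a balanced tree so that a single update or sample takes $O(\log L)$ time; since the only nontrivial event per round is the possible arrival of a new leader, total per-round work is $O(\log^2 L \cdot \log T)$.

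The main obstacle I expect is precisely the interval-regret mismatch: the $\mwa$ guarantee is a full-horizon statement, while the quantity we must control lives on $(t_0,t_1]$, and naively subtracting out losses on the complement destroys the bound (it can accumulate up to $T$). To close this I would rely on the exploration term $\gamma = 1/T$ in $\mwa$, which keeps the sampling weight of every sub-algorithm bounded below uniformly and implies that the cumulative loss of $\cA$ on \emph{any} contiguous interval dominates that of each $\cA_{r,s}$ up to the same $O(\sqrt{LT\log(2LT)})$ margin as on the full horizon; equivalently, one invokes the standard ``specialist'' or sleeping-experts-style decomposition of Hedge-with-exploration. The analogous issue for $\mwc$ is handled by its windowing/restart structure at scale $2^{s^*}$, which ensures the no-regret statement is inherently interval-based. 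Establishing these two interval-regret properties cleanly and at the stated constants is where I expect essentially all of the proof's work to lie.
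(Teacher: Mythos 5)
Your simplification
\[
\sum_{t=1}^{t_1} f_t(x^*_{t_1}) - \sum_{t=1}^{t_0} f_t(x^*_{t_0}) \;\ge\; \sum_{t=t_0+1}^{t_1} f_t(x^*_{t_1})
\]
is a correct inequality, but it replaces the target by a \emph{strictly stronger} claim that is in fact false. Consider an instance where $x^*_{t_1}$ has cumulative loss $t_0$ at time $t_0$ while every other expert has cumulative loss $0$, and during $(t_0,t_1]$ expert $x^*_{t_1}$ has per-round loss $0$ while all others have per-round loss $1$, with $t_1-t_0>t_0$. Then $x^*_{t_1}$ is indeed the leader at time $t_1$, so it is a valid comparator, and $\sum_{t=t_0+1}^{t_1} f_t(x^*_{t_1}) = 0$; yet $x^*_{t_1}$ does not become a leader until roughly round $2t_0$, so during $(t_0,2t_0]$ the only experts that have ever entered the \mwc buffer all incur loss $1$, and the algorithm necessarily pays $\Omega(t_0)$ regret against $x^*_{t_1}$, which can be $\Omega(T)$. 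The original comparator, $\sum_{1}^{t_1} f_t(x^*_{t_1}) - \sum_{1}^{t_0} f_t(x^*_{t_0}) = t_0$, correctly absorbs this unavoidable loss; your rewritten comparator does not.

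The same example shows your ``key structural claim'' is false: the buffer of $\mwc$ contains $x^*_{t_1}$ only from the first round it is \emph{activated} (appears as a leader) in the window onward --- not from round $t_0+1$. The hypothesis ``at most $L\le k$ distinct leaders in the window'' only guarantees non-eviction once $x^*_{t_1}$ has entered the buffer; it says nothing about when it enters. This is precisely why the paper cannot run a single \mwc instance against the single comparator $x^*_{t_1}$: it instead partitions $(t_0,t_1]$ into epochs $I_1,\dots,I_m$ delimited by when batches of leaders die out, competes in epoch $i$ against the epoch's own end-leader $z^*_i$ via a \mwc instance whose scale $(k_i,T_i)$ is matched to that epoch, uses a Follow-the-Leader/Be-the-Leader argument (\cref{lem:btl}) to show $\sum_i \sum_{t\in I_i} f_t(z^*_i)$ is bounded by the theorem's comparator, and finally sums the per-epoch regrets with Cauchy--Schwarz ($\sum_i \sqrt{k_i T_i}\le\sqrt{3LT}$). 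Your observation that \mwa with $\gamma = 1/T$ yields interval regret bounds is correct and is indeed what the paper exploits, but that is not the missing ingredient --- the missing ingredient is the epoch decomposition with a changing comparator, and the top-level \mwa aggregator switching which $\cA_{r,s}$ it tracks from epoch to epoch.
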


\cref{alg:N14} relies on two simpler online algorithms---the \mwa and \mwc algorithms---that we describe in detail later on in this section (see \cref{sub:hedge}, where we also discuss an algorithm called \mwb). 
These two algorithms are variants of the standard multiplicative weights (MW) method for prediction with expert advice.
\mwa is a rather simple adaptation of MW which is able to guarantee bounded regret in any time interval of predefined length:

\begin{repthm}{lem:mw}
Suppose that \mwa (\cref{alg:mw} below) is used for prediction with expert advice, against an arbitrary sequence of loss functions $f_{1},f_{2},\ldots : [N] \mapsto [0,1]$ over $N$ experts. 
Then, for $\gamma = \tfrac{1}{T}$ and any $\eta > 0$, its sequence of predictions $x_1,x_2,\ldots$ satisfies
\begin{align*}
	\EE{ \sum_{t=t_0}^{t_1} f_{t}(x_t) }
	&\,-\, \min_{x \in [N]} \sum_{t=t_0}^{t_1} f_{t}(x) 
\leq
	\frac{2\log(NT)}{\eta} + \eta T
%~.
\end{align*}
in any time interval $\set{t_0,\ldots,t_1}$ of length at most $T$.
The algorithm can be implemented to run in $O(N)$ time per round.
\end{repthm}

The \mwc algorithm a ``sliding window'' version of \mwa, that given a parameter $k>0$, maintains a buffer of $k$ experts that were recently ``activated''; in our context, an expert is activated on round $t$ if it is the leader at the end of that round.
\mwc competes (in terms of regret) with the $k$ most recent activated experts as long as they remain in the buffer.
Formally, 

\begin{repthm}{lem:exp3q}
Suppose that \mwc (\cref{alg:exp3q} below) is used for prediction with expert advice, against an arbitrary sequence of loss functions $f_{1},f_{2},\ldots : [N] \mapsto [0,1]$ over $N$ experts. 
Assume that expert $x^* \in [N]$ was activated on round $t_0$, and from that point until round $t_1$ there were no more than $k$ different activated experts (including $x^*$ itself).
Then, for $\gamma = \tfrac{1}{T}$ and any $\eta > 0$, the predictions $x_1,x_2,\ldots$ of the algorithm satisfy
\begin{align*}
	\EE{ \sum_{t=t_0'+1}^{t_1'} f_t(x_t) } 
	\,-\, \sum_{t=t_0'+1}^{t_1'} f_t(x^*)
\leq
	\frac{4\log(kT)}{\eta} + \eta kT
\end{align*}
in any time interval $[t_0',t_1'] \subseteq [t_0,t_1]$ of length at most $T$.
Furthermore, the algorithm can be implemented to run in time $\tO(1)$ per round.
\end{repthm}

For the analysis of \cref{alg:leaders}, we require a few definitions.
We let $I = \set{t_0+1,\ldots,t_1}$ denote the time interval under consideration.
For all $t \in I$, we denote by $S_t = \set{x^*_{t_0+1},\ldots,x^*_t}$ the set of all leaders encountered since round $t_0+1$ up to and including round $t$; for completeness we also define $S_{t_0} = \emptyset$. The theorem's assumption then implies that $\abs{S_{t_0}} \le \ldots \le \abs{S_{t_1}} \le L$.
For a set of experts $S \subseteq [N]$, we let $\tau(S) = \max\set{t \in I \,:\, x^*_t \in S}$ be the last round in which one of the experts in $S$ occurs as a leader.
In other words, after round $\tau(S)$, the leaders in $S$ have ``died out'' and no longer appear as leaders.

\begin{figure}[ht]
\begin{center}
\includegraphics[width=0.9\linewidth]{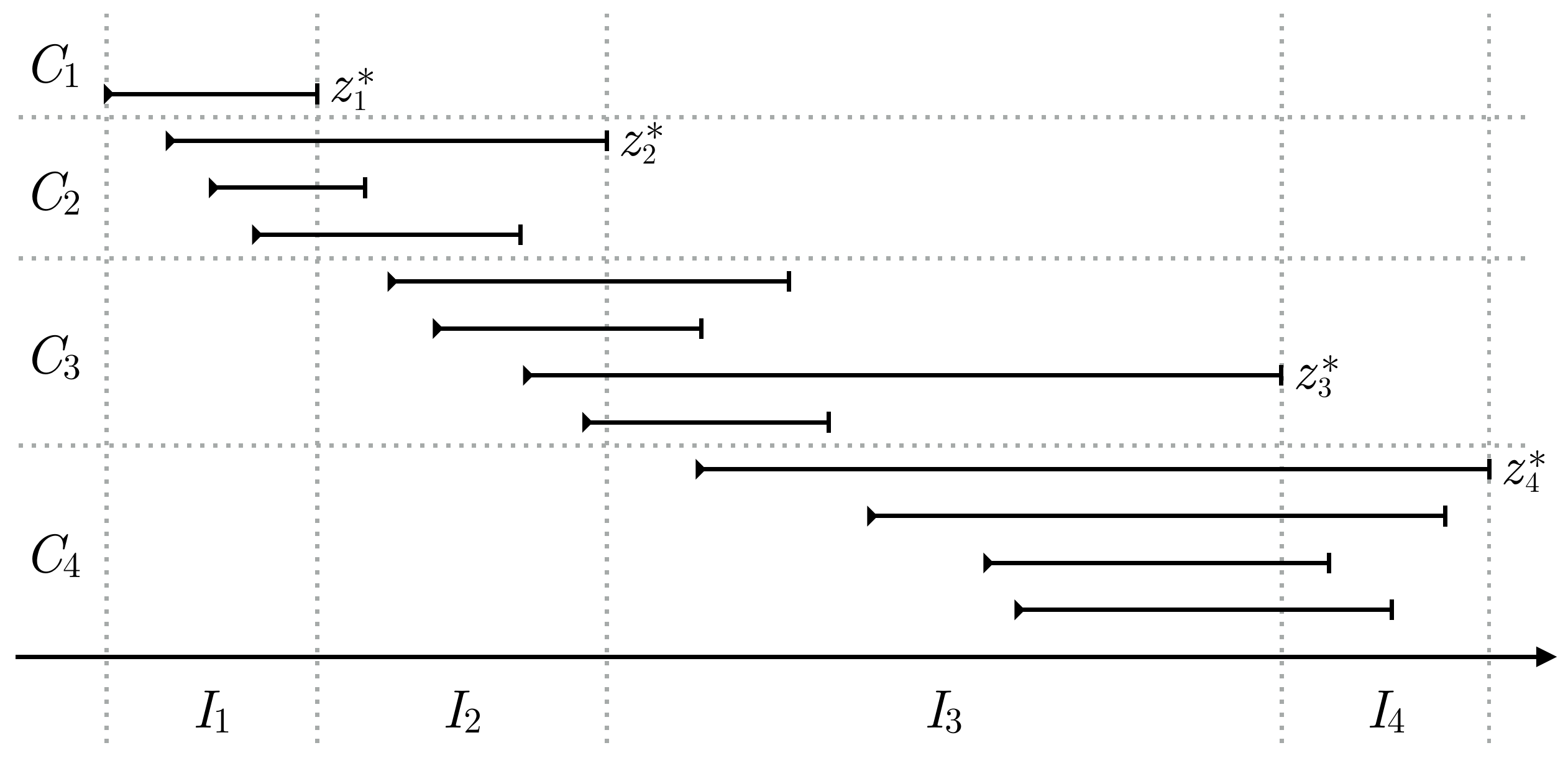}
\caption{An illustration of the key definitions in the analysis of \cref{alg:leaders}. 
Each expert is represented by a horizontal segment, which signifies the time interval between the expert's first and last appearances as leader (the experts are sorted by their first time of appearance as leaders).
%The horizontal segments represent the experts in the game sorted by their first time of appearance as leaders, where each segment signifies the time interval between the respective expert's first and last appearance as a leader.
The resulting partition $C_1,\ldots,C_4$ of the experts and the induced epochs $I_1,\ldots,I_4$ are indicated by the dotted lines.
}
\label{fig:leaders}
\end{center}
\end{figure}

Next, we split $I$ into epochs $I_1,I_2,\ldots$, where the $i$'th epoch $I_i = \set{\tau_i+1,\ldots,\tau_{i+1}}$ spans between rounds $\tau_i+1$ and $\tau_{i+1}$, and $\tau_i$ is defined recursively by $\tau_1 = t_0$ and $\tau_{i+1} = \tau(S_{\tau_i+1})$ for all $i=1,2,\ldots$. 
%For technical convenience, we also let $I_0 = \set{1,\ldots,t_0}$.
In words, $S_{\tau_i+1}$ is the set of leaders encountered by the beginning of epoch $i$, and this epoch ends once all leaders in this set have died out.
Let $m$ denote the number of resulting epochs (notice that $m \le L$, as at least one leader dies out in each of the epochs).
For each $i=1,\ldots,m$, let $T_i$ denote the length of the $i$'th epoch, namely $m = \max\set{i \,:\, \tau_{i} < t_1}$, and let $z^*_i = x^*_{\tau_i}$ be the leader at the end of epoch $i$. 
Finally, for each epoch $i=1,\ldots,m$ we let $C_i = S_{\tau_i+1} \setminus S_{\tau_{i-1}+1}$ denote the set of leaders that have died out during the epoch, and for technical convenience we also define $C_0 = C_{m+1} = \emptyset$; notice that $C_1,\ldots,C_m$ is a partition of the set $S_{t_1}$ of all leaders, so in particular $\abs{C_1} + \cdots + \abs{C_m} \le L$. 
See \cref{fig:leaders} for an illustration of the definitions.

Our first lemma states that minimizing regret in each epoch $i$ with respect to the leader $z^*_i$ at the end of the epoch, also guarantees low regret with respect to the overall leader $x^*_{t_1}$.
It is a variant of the ``Follow The Leader, Be The Leader'' lemma \citep{kalai2005efficient}.

\begin{lemma} \label{lem:btl}
Following the epoch's leader yields no regret, in the sense that
\begin{align*} %\label{eq:btl}
	\sum_{i=1}^{m} \sum_{t \in I_i} f_t(z^*_i) 
\leq
	\sum_{t=1}^{t_1} f_t(x^*_{t_1}) \,-\, \sum_{t=1}^{t_0} f_t(x^*_{t_0})
~.
\end{align*}
\end{lemma}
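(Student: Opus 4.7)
The plan is to prove this via a telescoping argument in the spirit of the classical ``Be-The-Leader'' lemma of \citet{kalai2005efficient}. The essential idea is that $z^*_i$, the leader at the end of epoch $i$, is the optimizer of the cumulative loss up through the last round of $I_i$, so the losses of $z^*_i$ restricted to the epoch can be written as a difference of two such cumulative minima; summing across epochs should then telescope neatly.

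Concretely, for each epoch $i$ I would rewrite the contribution of that epoch as
\begin{align*}
    \sum_{t \in I_i} f_t(z^*_i) \eq \sum_{t=1}^{\tau_{i+1}} f_t(z^*_i) \,-\, \sum_{t=1}^{\tau_i} f_t(z^*_i),
\end{align*}
then invoke the optimality property of $z^*_i$, which equals $x^*_{\tau_{i+1}}$ and by definition minimizes the cumulative loss over $\{1,\ldots,\tau_{i+1}\}$. This converts the first term into $\sum_{t=1}^{\tau_{i+1}} f_t(x^*_{\tau_{i+1}})$. For the second term I would apply optimality of $x^*_{\tau_i}$ in the opposite direction, obtaining $-\sum_{t=1}^{\tau_i} f_t(z^*_i) \le -\sum_{t=1}^{\tau_i} f_t(x^*_{\tau_i})$. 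Combining these bounds gives, for each $i$,
\begin{align*}
    \sum_{t \in I_i} f_t(z^*_i) \leq \sum_{t=1}^{\tau_{i+1}} f_t(x^*_{\tau_{i+1}}) - \sum_{t=1}^{\tau_i} f_t(x^*_{\tau_i}).
\end{align*}

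Summing the resulting inequalities over $i=1,\ldots,m$ makes the right-hand side telescope to $\sum_{t=1}^{\tau_{m+1}} f_t(x^*_{\tau_{m+1}}) - \sum_{t=1}^{\tau_1} f_t(x^*_{\tau_1})$. From the definitions $\tau_1 = t_0$, and since the epochs $I_1,\ldots,I_m$ partition $I = \{t_0+1,\ldots,t_1\}$, we also have $\tau_{m+1} = t_1$, which directly yields the desired bound.

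The only non-routine step I anticipate is verifying carefully that the epochs indeed partition $I$, i.e., that $\tau_{m+1} = t_1$; this follows from the recursive rule $\tau_{i+1} = \tau(S_{\tau_i+1})$ together with the stopping criterion $m = \max\{i : \tau_i < t_1\}$, because extending by one more epoch would force $\tau_{m+2} > t_1$, which in turn forces $\tau_{m+1}$ to already reach $t_1$. The telescoping itself and the two applications of leader-optimality are essentially immediate, so once the indexing of the epochs is pinned down the proof is short.
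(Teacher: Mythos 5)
Your proof is correct and takes essentially the same Be-The-Leader approach as the paper, just organized as a direct telescoping sum rather than an induction on $m$: the paper proves $\sum_{i=0}^{m}\sum_{t\in I_i} f_t(z^*_i) \le \sum_{i=0}^{m}\sum_{t\in I_i} f_t(z^*_m)$ inductively and then subtracts off the epoch-$0$ contribution, while you bound each epoch by a telescoping difference and sum. The core inequality invoked in both arguments is identical---that the leader $x^*_{\tau_i}$ does at least as well on rounds $1,\ldots,\tau_i$ as the leader $x^*_{\tau_{i+1}}$---and your reading of $z^*_i$ as $x^*_{\tau_{i+1}}$ (the leader at the \emph{end} of epoch $i$, i.e., what the paper actually uses in its own proof and in \cref{lem:C}) is the correct one.
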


\begin{proof}
Let $I_0 = \set{1,\ldots,t_0}$ and $z^*_0 = x^*_{t_0}$.
We will prove by induction on $m \ge 0$ that
\begin{align} \label{eq:btl-ind}
	\sum_{i=0}^{m} \sum_{t \in I_i} f_t(z^*_i) 
\leq
	\sum_{i=0}^{m} \sum_{t \in I_i} f_t(z^*_{m})
~.
\end{align}
This inequality would imply the lemma, as $z^*_m = x^*_{t_1}$.
For $m=0$, our claim is trivial as both sides of \cref{eq:btl-ind} are equal.
Now, assuming that \cref{eq:btl-ind} holds for $m-1$, we have
\begin{align*}
	\sum_{i=0}^{m-1} \sum_{t \in I_i} f_t(z^*_i) 
&\leq
	\sum_{i=0}^{m-1} \sum_{t \in I_i} f_t(z^*_{m-1}) 
	\qquad\qquad
	\mbox{(induction)}
\\
&\leq
	\sum_{i=0}^{m-1} \sum_{t \in I_i} f_t(z^*_{m})
~,
\end{align*}
since by definition $z^*_{m-1}$ performs better than any other expert, and in particular than $z^*_{m}$, throughout the first $m-1$ epochs.
Adding the term $\sum_{t \in I_m} f_t(z^*_{m})$ to both sides of the above inequality, we obtain \cref{eq:btl-ind}.
\end{proof}

Next, we identify a key property of our partition to epochs.

\begin{lemma} \label{lem:C}
For all epochs $i$, it holds that $z^*_i \in C_i$.
In addition, any leader encountered during the lifetime of $z^*_i$ as leader (i.e., between its first and last appearances in the sequence $x^*_{t_0+1},\ldots,x^*_{t_1}$ of leaders) must be a member of $C_{i-1} \cup C_i \cup C_{i+1}$.
\end{lemma}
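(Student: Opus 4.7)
My plan is to derive both claims directly from the recursive definitions $\tau_{i+1} = \tau(S_{\tau_i+1})$ and $C_i = S_{\tau_i+1} \setminus S_{\tau_{i-1}+1}$, using only two elementary facts: monotonicity of $t \mapsto S_t$, and the defining property of $\tau(\cdot)$ that no leader from $S$ appears at any round strictly later than $\tau(S)$.

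For the membership claim, I would unpack $z^*_i$ as the leader at round $\tau_{i+1}$: by the recursion defining $\tau_{i+1}$, this element lies in $S_{\tau_i+1}$; and applying the same recursion one epoch earlier, $\tau_i = \tau(S_{\tau_{i-1}+1})$, so any leader at a round strictly later than $\tau_i$---in particular $z^*_i$, which appears at $\tau_{i+1} > \tau_i$---cannot belong to $S_{\tau_{i-1}+1}$. Combining gives $z^*_i \in S_{\tau_i+1} \setminus S_{\tau_{i-1}+1} = C_i$.

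For the second claim, I would first localize the lifetime of $z^*_i$ as leader to $[\tau_{i-1}+2, \tau_{i+1}]$: its last appearance is $\tau_{i+1}$ by the same argument as above, and its first appearance is at least $\tau_{i-1}+2$ because $z^*_i \in C_i$ means $z^*_i$ was not among the leaders by round $\tau_{i-1}+1$. For an arbitrary round $t$ in this range I would split on whether $t \le \tau_i$ or $t \ge \tau_i+1$. In the first subrange, $x^*_t \in S_t \subseteq S_{\tau_i+1}$ by monotonicity, while $t > \tau_{i-1} = \tau(S_{\tau_{i-2}+1})$ rules out $x^*_t \in S_{\tau_{i-2}+1}$, placing $x^*_t$ in $C_{i-1} \cup C_i$. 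In the second subrange, symmetrically, $x^*_t \in S_t \subseteq S_{\tau_{i+1}+1}$ and $t > \tau_i = \tau(S_{\tau_{i-1}+1})$ gives $x^*_t \notin S_{\tau_{i-1}+1}$, placing $x^*_t$ in $C_i \cup C_{i+1}$.

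I expect the only real care to be with the $+1$ offsets between $S_{\tau_j+1}$ and $\tau_{j+1} = \tau(S_{\tau_j+1})$, and with the boundary conventions $C_0 = C_{m+1} = \emptyset$ when $i \in \{1,m\}$ (so that, for example, the set $S_{\tau_{i-2}+1}$ used in the $t \le \tau_i$ subcase is vacuously empty when $i = 1$). Beyond this bookkeeping the lemma is a purely set-theoretic consequence of the telescoping construction, and I do not foresee any substantial obstacle.
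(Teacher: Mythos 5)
Your proof is correct and follows essentially the same route as the paper's informal argument: pin down the lifetime of $z^*_i$ to $[\tau_{i-1}+2,\tau_{i+1}]$ via the two recursions, split at $\tau_i$, and telescope the $C_j$'s; you simply make the bookkeeping explicit. One shared caveat worth flagging: in the $t>\tau_i$ subcase with $i=m$ you place $x^*_t$ in $S_t\setminus S_{\tau_{m-1}+1}\subseteq S_{t_1}\setminus S_{\tau_{m-1}+1}$, but under the paper's convention $C_{m+1}=\emptyset$ one has $C_m\cup C_{m+1}=S_{\tau_m+1}\setminus S_{\tau_{m-1}+1}$, which is strictly smaller whenever some leader first appears after round $\tau_m+1$ (e.g.\ a leader sequence $a,a,b,a,a$ on $I$ gives $m=1$ and $b\notin C_0\cup C_1\cup C_2$); the lemma becomes literally correct at $i=m$ if one instead defines $C_{m+1}=S_{t_1}\setminus S_{\tau_m+1}$, and this change leaves the downstream estimates (such as $\sum_i k_i\le 3L$) intact since the $C_j$'s remain disjoint subsets of $S_{t_1}$.
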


\begin{proof}
Consider epoch $i$ and the leader $z^*_i$ at the end of this epoch.
To see that $z^*_i \in C_i$, recall that the $i$'th epoch ends right after the leaders in $C_i$ have all died out, so the leader at the end of this epoch must be a member of the latter set.
This also means that $z^*_i$ was first encountered not before epoch $i-1$ (in fact, even not on the first round of that epoch), and the last time it was a leader was on the last round of epoch $i$ (see \cref{fig:leaders}).
In particular, throughout the lifetime of $z^*_i$ as leader, only the experts in $C_{i-1} \cup C_i \cup C_{i+1}$ could have appeared as leaders.
\end{proof}

We are now ready to analyze the regret in a certain epoch $i$ with respect to its leader $z^*_i$.
To this end, we define $k_i = \abs{C_{i-1}}+\abs{C_i}+\abs{C_{i+1}}$ and consider the \mwc instance $\cA^{(i)} = \cA_{r_i,s_i}$, where $r_i = \ceil{\log_2{k_i}}$ and $s_i = \ceil{\log_2{T_i}}$ (note that $1 \le r_i \le \ceil{\log_2{L}}$ and $1 \le s_i \le \ceil{\log_2{T}}$).
The following lemma shows that the regret of the algorithm $\cA^{(i)}$ in epoch $i$ can be bounded in terms of the quantity $k_i$.
Below, we use $z_t^{(i)}$ to denote the decision of $\cA^{(i)}$ on round $t$. 

\begin{lemma} \label{lem:epoch-exp3q}
The cumulative expected regret of the algorithm $\cA^{(i)}$ throughout epoch $i$, with respect to the leader $z^*_i$ at the end of this epoch, has
\begin{align*} %\label{eq:epoch-exp3q}
	\EE{ \sum_{t \in I_i} f_t(z^{(i)}_t) } \,-\, \sum_{t \in I_i} f_t(z^*_i)
\leq 
	10\sqrt{ k_i T_i \log(2LT) }
~.
\end{align*}
\end{lemma}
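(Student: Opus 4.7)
The plan is to apply \cref{lem:exp3q} to $\cA^{(i)}$, taking $z^*_i$ in the role of the reference expert $x^*$ and $I_i$ as the subinterval $[t'_0,t'_1]$.

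Before invoking the regret bound I will verify its preconditions. By \cref{lem:C}, $z^*_i \in C_i$, which means $z^*_i$'s first appearance as a leader occurs at or before the beginning of epoch $I_i$, while it remains the leader at the end of this epoch; hence the activation round of $z^*_i$ precedes (or coincides with) $\tau_i+1$, and $I_i$ sits inside $z^*_i$'s lifetime. \cref{lem:C} further tells us that every leader encountered during $z^*_i$'s lifetime lies in $C_{i-1} \cup C_i \cup C_{i+1}$, so the total number of distinct activated experts during this lifetime is at most $k_i$. Since $\cA^{(i)}$ uses buffer size $k = 2^{r_i} \ge k_i$, the activation hypothesis of \cref{lem:exp3q} is satisfied. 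Finally $T_i \le T$, so $I_i$ satisfies the length restriction.

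With the hypotheses in place, \cref{lem:exp3q} delivers
\begin{align*}
\EE{\sum_{t \in I_i} f_t(z^{(i)}_t)} \,-\, \sum_{t \in I_i} f_t(z^*_i) \;\le\; \frac{4\log(kT)}{\eta} + \eta\, k\, T_i,
\end{align*}
with $\eta = \eta_0/\sqrt{2^{r_i + s_i}}$ and $k = 2^{r_i}$. Substituting the choices $r_i = \ceil{\log_2 k_i}$ and $s_i = \ceil{\log_2 T_i}$: using $k \le 2k_i \le 2L$ gives $\log(kT) \le \log(2LT) = \eta_0^2$, so the first term is at most $4\eta_0\sqrt{2^{r_i+s_i}} \le 8\eta_0\sqrt{k_i T_i}$ via $2^{r_i} < 2k_i$ and $2^{s_i} < 2T_i$. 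The second term equals $\eta_0\, T_i \sqrt{2^{r_i}/2^{s_i}} \le \eta_0\sqrt{2\, k_i T_i}$ since $2^{s_i} \ge T_i$. Summing and bounding $8 + \sqrt{2} < 10$ yields the claimed $10\sqrt{k_i T_i \log(2LT)}$.

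The only delicate step is verifying that at most $k_i$ distinct leaders can arise during $z^*_i$'s lifetime, so that $z^*_i$ is never evicted from $\cA^{(i)}$'s buffer while $I_i$ is being played; this is exactly what \cref{lem:C} provides. Once that is secured, the rest is a routine parameter computation, and the instance $\cA_{r_i,s_i}$ was deliberately designed so that its learning rate $\eta$ approximately matches the optimal $\sqrt{\log(2LT)/(k_i T_i)}$ for this $k_i$ and $T_i$.
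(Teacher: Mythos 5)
Your proof is correct and follows essentially the same route as the paper: invoke \cref{lem:exp3q} with $z^*_i$ as the reference expert and $I_i$ as the subinterval, verify the activation hypothesis via \cref{lem:C} (lifetime of $z^*_i$ covers $I_i$ and sees at most $k_i \le 2^{r_i}$ distinct leaders), then substitute the parameters of $\cA_{r_i,s_i}$ and bound. The only deviation is a minor arithmetic refinement of the second term (you obtain $8+\sqrt{2}$ where the paper bounds by $8+2=10$), which is immaterial.
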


\begin{proof}
Recall that $\cA^{(i)}$ has a buffer of size $q_i = 2^{r_i}$ and step size $\eta_i = \sqrt{\log(2LT)/2^{r_i+s_i}}$.
%Our main observation is that $z^*_i$ was in this algorithm's buffer throughout the entire duration of the $i$'th epoch. \tk{revisit this}
Now, from \cref{lem:C} we know that $z^*_i \in C_i$, which means that $z^*_i$ first appeared as leader either on or before the first round of epoch $i$. Also, the same lemma states that the number of distinct leaders that were encountered throughout the lifetime of $z^*_i$ (including $z^*_i$ itself) is at most $\abs{C_{i-1} \cup C_i \cup C_{i+1}} = k_i \le q_i$, namely no more than the size of $\cA^{(i)}$'s buffer.
Hence, applying \cref{lem:exp3q} to epoch $i$, we have
\begin{align*}
	\EE{ \sum_{t \in I_i} f_t(z^{(i)}_t) } \,-\, \sum_{t \in I_i} f_t(z^*_i)
\leq 
	\frac{4\log(2LT)}{\eta_{i}} + \eta_{i} q_i T_i
~,
\end{align*}
where we have used $q_i \le 2L$ and $T_i \le T$ to bound the logarithmic term.
Now, note that $q_i \le 2k_i$ and $\sqrt{\log(2LT)/4 k_i T_i} \le \eta_i \le \sqrt{\log(2LT)/k_i T_i}$, which follow from $k_i \le 2^{r_i} \le 2k_i$ and $T_i \le 2^{s_i} \le 2T_i$. 
Plugging into the above bound, we obtain the lemma.
\end{proof}

Our final lemma analyzes the MW algorithm $\cA$, and shows that it obtains low regret against the algorithm $\cA^{(i)}$ in epoch $i$.

\begin{lemma} \label{lem:epoch-mw}
The difference between the expected cumulative loss of \cref{alg:leaders} during epoch $i$, and the expected cumulative loss of $\cA^{(i)}$ during that epoch, is bounded as
\begin{align*} %\label{eq:epoch-mw}
	\EE{ \sum_{t \in I_i} f_t(x_t) \,-\, \sum_{t \in I_i} f_t(z^{(i)}_t) }
\leq 
	\frac{4\log(2LT)}{\nu} + \nu T_i
~.
\end{align*}
\end{lemma}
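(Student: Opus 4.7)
The plan is to view the lemma as a straightforward regret bound for the outer \mwa instance $\cA$, treating each sub-algorithm $\cA_{r,s}$ as a meta-expert and $\cA^{(i)} = \cA_{r_i,s_i}$ as the specific comparator. The total number of meta-experts is
\[
	N' \eq \ceil{\log_2 L}\cdot \ceil{\log_2 T} \leq \log^2(2LT),
\]
and on round $t$ the loss of meta-expert $(r,s)$ is $f_t(z^{(r,s)}_t) \in [0,1]$, where $z^{(r,s)}_t$ is the (possibly randomized) prediction of $\cA_{r,s}$. Since $\cA$'s own prediction $x_t$ is obtained by sampling a meta-expert and playing its prediction, the loss $\cA$ incurs on round $t$ equals $f_t(z^{(j)}_t)$ where $j$ is the meta-expert chosen by $\cA$.

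First, I would apply \cref{lem:mw} (the \mwa guarantee) to $\cA$ with step size $\nu$ and mixing parameter $\gamma = 1/T$, taking the loss sequence described above and taking the time interval to be $I_i = \{\tau_i+1,\ldots,\tau_{i+1}\}$, whose length $T_i$ is at most $T$ by the assumption of \cref{thm:leaders}. This directly yields
\[
	\EE{ \sum_{t \in I_i} f_t(x_t) \,-\, \sum_{t \in I_i} f_t(z^{(i)}_t) }
\leq
	\frac{2\log(N'T)}{\nu} + \nu T_i
~,
\]
since $\cA^{(i)}$ is one of the $N'$ meta-experts. The outer expectation is over all randomness, and the bound of \cref{lem:mw} applies pointwise in the randomness of the sub-algorithms (the \mwa guarantee holds against \emph{any} $[0,1]$-valued loss sequence, so one can first condition on the sub-algorithms' internal randomness, apply the bound, and then take an outer expectation).

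Finally I would simplify the logarithmic factor: since $N' \leq \log^2(2LT)$, we have $\log(N'T) \leq 2\log\log(2LT) + \log T \leq 2\log(2LT)$, which absorbs the constant $2$ into $4$ and gives the stated bound $\tfrac{4\log(2LT)}{\nu} + \nu T_i$. There is no serious obstacle here; the only subtle point is the interaction of expectations across the meta/sub-algorithm hierarchy, which is resolved by the tower property and the fact that \cref{lem:mw} treats the loss sequence as arbitrary. The more delicate combinatorial work in \cref{thm:leaders}---summing these per-epoch bounds using $\sum_i k_i \leq 3L$ and Cauchy--Schwarz---happens outside this lemma.
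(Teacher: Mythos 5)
Your proposal is correct and matches the paper's own proof: both apply \cref{lem:mw} to the outer $\mwa$ instance $\cA$, treating the sub-algorithms $\cA_{r,s}$ as meta-experts with $\cA^{(i)}$ as the comparator, and then absorb the log factor into $4\log(2LT)$ (the paper uses the cruder bound $m \le 2LT$ on the number of meta-experts, while you use the sharper $N' \le \log^2(2LT)$, but both yield the same final constant). Your remark about conditioning on the sub-algorithms' internal randomness before applying \cref{lem:mw} is the right way to handle the nested expectations, and the paper implicitly relies on the same fact.
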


\begin{proof}
The algorithm $\cA$ is following \mwa updates over $m = \ceil{\log_2 L} \cdot \ceil{\log_2 T}$ algorithms as meta-experts.
Thus, \cref{lem:mw} gives
\begin{align*}
	\EE{ \sum_{t \in I_i} f_t(x_t) \,-\, \sum_{t \in I_i} f_t(z^{(i)}_t) }
\leq
	\frac{2\log(mT)}{\nu}
	+ \nu T
~.
\end{align*}
Using $m \le 2LT$ to bound the logarithmic term gives the result.
\end{proof}

We now turn to prove the theorem.

\begin{proof}[Proof of \cref{thm:leaders}]
First, regarding the running time of the algorithm, note that on each round  \cref{alg:leaders} has to update $\O(\log(L) \log(T))$ instances of \mwa, where each such update costs at most $\O(\log{L})$ time according to \cref{lem:exp3}. 
Hence, the overall runtime per round is $\O(\log^2(L) \log(T))$.

We next analyze the expected regret of the algorithm.
Summing the bounds of \cref{lem:epoch-exp3q,lem:epoch-mw} over epochs $i=1,\ldots,m$ and adding that of \cref{lem:btl}, we can bound the expected regret of \cref{alg:leaders} as follows:
\begin{align} \label{eq:sum-epochs}
	\EE{ \sum_{t=t_0+1}^{t_1} f_t(x_t) } 
	&~-~ \lr{\sum_{t=1}^{t_1} f_t(x^*_{t_1}) - \sum_{t=1}^{t_0} f_t(x^*_{t_0})}
\notag\\
&\leq
	10\sqrt{\log(2LT)} \, \sum_{i=1}^{m} \sqrt{ k_i T_i}
	+ \sum_{i=1}^{m} \lr{ \frac{4\log(2LT)}{\nu} + \nu T_i }
\notag\\
&\leq
	10\sqrt{\log(2LT)} \, \sum_{i=1}^{m} \sqrt{ k_i T_i}
	+ \frac{4L\log(2LT)}{\nu} + \nu T
~,
\end{align}
where we have used $m \le L$ and $\sum_{i=1}^m T_i = T$.
In order to bound the sum on the right-hand side, we first notice that $\sum_{i=1}^m k_i \le 3\sum_{i=1}^m \abs{C_i} \le 3L$. 
Hence, using the Cauchy-Schwarz inequality we get 
$
	\sum_i \sqrt{k_i T_i} 
\le 
	\sqrt{\sum_i k_i} \sqrt{\sum_i T_i}
\le
	\sqrt{3LT} .
$
Combining this with \cref{eq:sum-epochs} and our choice of $\nu$, and rearranging the left-hand side of the inequality, we obtain
\begin{align*}
%	\EE{ \sum_{t=1}^{t_1} f_t(x_t) - f_t(x^*_{t_1})} 
%	~-~
%	\EE{ \sum_{t=1}^{t_0} f_t(x_t) - f_t(x^*_{t_0}) }
	\EE{ \sum_{t=t_0+1}^{t_1} f_t(x_t) } 
	~-~ \lr{\sum_{t=1}^{t_1} f_t(x^*_{t_1}) - \sum_{t=1}^{t_0} f_t(x^*_{t_0})}
\leq
	25\sqrt{ LT \log(2LT) }
~,
\end{align*}
and the theorem follows.
\end{proof}

\subsection{Main Algorithm}
\label{sec:main-algo}

We now ready to present our main online algorithm: an algorithm for online learning with optimizable experts, that guarantees $\eps$ expected average regret in total $\tO(\sqrt{N}/\eps^2)$ time.
The algorithm is presented in \cref{alg:N14}, and in the following theorem we give its guarantees.

\begin{algorithm}[h]

\wrapalgo{
{\bf Parameters:} $N$, $T$
\begin{enumerate}[nosep,leftmargin=0.5cm]
\item Set $\eta = 2/(N^{1/4} \sqrt{T})$ and $\nu = 2\sqrt{\log(2T)/T}$
\item Sample a set $R$ of $\smash{\floor{2\sqrt{N}\log{T}}}$ experts uniformly at random with replacement
\item Initialize an instance $\cA_1$ of $\mwb(\eta,\tfrac{1}{T})$ on the experts in $R$
\item Initialize an instance $\cA_2$ of $\leaders(L,T)$ with $L = \floor{\sqrt{N}}$
\item Initialize an instance $\cA$ of $\mwa(\nu,\tfrac{1}{T})$ algorithm on $\cA_1$ and $\cA_2$ as experts
\item
For $t=1,2,\ldots,T$:
\begin{enumerate}[nosep]
	\item Play the prediction $x_t$ of the algorithm chosen by $\cA$
	\item Observe $f_t$ and the new leader $x^*_t$, and use them to update $\cA_1$, $\cA_2$ and $\cA$
\end{enumerate}
\end{enumerate}
}

\vspace{-0.5cm}
\caption{Algorithm for online learning with an optimization oracle.}
\label{alg:N14}

\end{algorithm}

\begin{repthm}[restated]{thm:online-upper}
The expected average regret of \cref{alg:N14} on any sequence of $T$ loss functions~$f_1,\ldots,f_T : [N] \mapsto [0,1]$ over $N$ experts is upper bounded by $40 N^{1/4}\log(NT)/\sqrt{T}$. 
The algorithm can be implemented to run in $\tO(1)$ time per round in the optimizable experts model.
\end{repthm}

\cref{alg:N14} relies on the \leaders and \mwa algorithms discussed earlier, and on yet another variant of the MW method---the \mwb algorithm---which is similar to \mwa.
The difference between the two algorithms is in their running time per round: \mwa, like standard MW, runs in $\O(N)$ time per round over $N$ experts; \mwb is an ``amortized'' version of \mwa that spreads computation over time and runs in only $\tO(1)$ time per round, but requires $N$ times more rounds to converge to the same average regret.

%See \cref{sub:hedge} for further details on the algorithms; here we only state their guarantees.

\begin{repthm}{lem:exp3}
Suppose that \mwb (see \cref{alg:exp3}) is used for prediction with expert advice, against an arbitrary sequence of loss functions $f_{1},f_{2},\ldots : [N] \mapsto [0,1]$ over $N$ experts.
Then, for $\gamma = \tfrac{1}{T}$ and any $\eta > 0$, its sequence of predictions $x_1,x_2,\ldots$ satisfies
\begin{align*}
	\EE{ \sum_{t=t_0}^{t_1} f_{t}(x_t)  }
	&\,-\, \min_{x \in [N]} \sum_{t=t_0}^{t_1} f_{t}(x) 
\leq
	\frac{4\log(NT)}{\eta} + \eta NT
\end{align*}
in any time interval $\set{t_0,\ldots,t_1}$ of length at most $T$.  
The algorithm can be implemented to run in $O(\log{N})$ time per round.
\end{repthm}

%The \leaders algorithm is far more intricate than the latter two, and forms the main component of \cref{alg:N14}.
%This online algorithm, in the setting of prediction with expert advice, is designed to perform well when there is a small number of leaders throughout the game.
%It makes a crucial use of the signal $x^*_t$ received as feedback, and as a result, runs in almost constant time per round.
%The algorithm is presented and analyzed in \cref{sub:leaders} below, and here we only state its guarantees. \tk{fix here}
%
%\begin{repthm}{thm:leaders}
%Suppose that \leaders (see \cref{alg:leaders}) is used for prediction with expert advice, against an arbitrary sequence of loss functions $f_{1},f_{2},\ldots : [N] \mapsto [0,1]$ over $N$ experts.
%Assume that there are at most $L$ leaders during a certain time period $t_0 < t \le t_1$, whose length is bounded by $T$, and the numbers $L$ and $T$ are given as input to the algorithm.
%Then, its sequence of predictions $x_1,x_2,\ldots$ satisfies
%\begin{align*}
%	\EE{ \sum_{t=t_0+1}^{t_1} f_t(x_t) } 
%	~-~ \lr{\sum_{t=1}^{t_1} f_t(x^*_{t_1}) - \sum_{t=1}^{t_0} f_t(x^*_{t_0})}
%\leq
%	25\sqrt{LT \log(2LT)}
%~.
%\end{align*}
%The algorithm can be implemented to run in $\O(\log^2(L) \log(T))$ time per round. 
%\end{repthm}

Given the \leaders algorithm, the overall idea behind \cref{alg:N14} is quite simple: first guess~$\sqrt{N}$ experts uniformly at random, so that with nice probability one of the ``top'' $\sqrt{N}$ experts is picked, where experts are ranked according to the last round of the game in which they are leaders. (In particular, the best expert in hindsight is ranked first.)
The first online algorithm $\cA_1$---an instance of \mwb---is designed to compete with this leader, up to that point in time where it appears as leader for the last time.
At this point, the second algorithm $\cA_2$---an instance of \leaders---comes into action and controls the regret until the end of the game. 
It is able to do so because in that time period there are only few different leaders (i.e., at most $\sqrt{N}$), and as we pointed out earlier, \leaders is designed to exploit this fact.
The role of the algorithm $\cA$, being executed on top of $\cA_1$ and $\cA_2$ as experts, is to combine between the two regret guarantees, each in its relevant time interval.

Using \cref{thm:leaders,lem:mw,lem:exp3}, we can formalize the intuitive idea sketched above and prove the main result of this section.

\begin{proof}[Proof of \cref{thm:online-upper}]
The fact that the algorithm can be implemented to run in $\tO(1)$ time per round follows immediately from the running time of the algorithms \mwa, \mwb, and \leaders, each of which runs in $\tO(1)$ time per round with the parameters used in \cref{alg:N14}.

We move on to analyze the expected regret.
Rank each expert $x \in [N]$ according to $\mathrm{rank}(x) = 0$ if $x$ is never a leader throughout the game, and $\mathrm{rank}(x) = \max\set{t \,:\, x^*_t  = x}$ otherwise. 
Let $x_{(1)},\ldots,x_{(N)}$ be the list of experts sorted according to their rank in decreasing order (with ties broken arbitrarily).
In words, $x_{(1)}$ is the best expert in hindsight, $x_{(2)}$ is the expert leading right before $x_{(1)}$ becomes the sole leader, $x_{(3)}$ is the leading expert right before $x_{(1)}$ and $x_{(2)}$ become the only leaders, and so on.
Using this definition, we define $X^* = \set{x_{(1)},\ldots,x_{(n)}}$ be the set of the top $n = \smash{\floor{\sqrt{N}}}$ experts having the highest rank.

%Next, we deal with the randomized choice of $R$. 
First, consider the random set $R$.
We claim that with high probability, this set contains at least one of the top $n$ leaders. 
Indeed, we have
\begin{align*}
%	\Pr(K > n)
%\eq
	\Pr(R \cap X^* = \emptyset)
\eq
	\lrBig{ 1-\frac{n}{N} }^{\floor{2\sqrt{N}\log{T}}}
\leq
	\lrBig{ 1-\frac{1}{2\sqrt{N}} }^{\sqrt{N}\log{T}}
\leq
	e^{-\tfrac{1}{2}\log{T}}
\eq
	\frac{1}{\sqrt{T}}
~,
\end{align*}
so that with probability at least $1-1/\sqrt{T}$ it holds that $R \cap X^* \ne \emptyset$.
%Indeed, the intersection is empty if and only if none of the top $n$ experts in $X^*$ were picked in the set $R$, thus
As a result, it is enough to upper bound the expected regret of the algorithm for any fixed realization of $R$ such that $R \cap X^* \ne \emptyset$: in the event that the intersection is empty, that occurs with probability $1/\sqrt{T}$, the regret can be at most $T$ and thus ignoring these realizations can only affect the expected regret by an additive $\sqrt{T}$ term.
Hence, in what follows we fix an arbitrary realization of the set $R$ such that $R \cap X^* \ne \emptyset$ and bound the expected regret of the algorithm.

%the expected regret bound in \cref{eq:regretR}, where the expectation is taken conditionally on $R$, holds with probability at least $1-\tfrac{1}{\sqrt{T}}$.
%With the remaining probability, the expected regret can be at most $T$. 
%Hence, taking expectations with respect to $R$, the overall expected regret is bounded by $40 N^{1/4}\sqrt{T} \log(NT) + \sqrt{T}$.

%Later, we deal with the randomness involved in the choice of $R$.
Given $R$ with $R \cap X^* \ne \emptyset$, we can pick $x \in R \cap X^*$ and let $T_0 = \mathrm{rank}(x)$ be the last round in which $x$ is the leader.
Since $x \in R$ and $\abs{R} \le 2\sqrt{N}\log{T}$, the \mwb instance $\cA_1$ over the experts in $R$, with parameter $\eta = 2/(N^{1/4} \sqrt{T})$, guarantees (recall \cref{lem:exp3}) that
\begin{align} \label{eq:N14-exp3}
	\EE{ \sum_{t=1}^{T_0} f_t(x^{(1)}_t) } - \sum_{t=1}^{T_0} f_t(x^*_{T_0})
\leq
	\frac{4\log(\abs{R}T)}{\eta} + \eta \abs{R}T
\leq
	8N^{1/4}\sqrt{T} \log(2NT)
~,
\end{align}
where we use $x^{(1)}_t$ to denote the decision of $\cA_1$ on round $t$.

On the other hand, observe that there are at most $n$ different leaders throughout the time interval $\set{T_0+1,\ldots,T}$, which follows from the fact that $x \in X^*$.
Thus, in light of \cref{thm:leaders}, we have
\begin{align} \label{eq:N14-lead}
	\EE{ \sum_{t=T_0+1}^{T} f_t(x^{(2)}_t) } 
	\,-\,
	\lr{ \sum_{t=1}^{T} f_t(x^*_{T}) 
		- \sum_{t=1}^{T_0} f_t(x^*_{T_0}) }
\leq
	25 N^{1/4}\sqrt{T \log(2NT)}
~,
\end{align}
where here $x^{(2)}_t$ denotes the decision of $\cA_2$ on round $t$. 

Now, since \cref{alg:N14} is playing \mwa on $\cA_1$ and $\cA_2$ as experts with parameter $\nu = 2\sqrt{\log(2T)/T}$, \cref{lem:mw} shows that
\begin{align} \label{eq:N14-mw1}
	\EE{ \sum_{t=1}^{T_0} f_t(x_t) 
		\,-\, \sum_{t=1}^{T_0} f_t(x^{(1)}_t) }
\leq
	\frac{2\log(2T)}{\nu}  + \nu T
\eq
	3\sqrt{T\log(2T)}
~,
\end{align}
and similarly,
\begin{align} \label{eq:N14-mw2}
	\EE{ \sum_{t=T_0+1}^{T} f_t(x_t) 
		\,-\, \sum_{t=T_0+1}^{T} f_t(x^{(2)}_t) }
\leq
	3\sqrt{T\log(2T)}
~.
\end{align}
Summing up \cref{eq:N14-exp3,eq:N14-lead,eq:N14-mw1,eq:N14-mw2} we obtain the regret bound
\begin{align} \label{eq:regretR}
	\EE{ \sum_{t=1}^{T} f_t(x_t) } \,-\, \sum_{t=1}^{T} f_t(x^*_T)
\leq
	39 N^{1/4}\sqrt{T} \log(NT)
\end{align}
for any fixed realization of $R$ with $R \cap X^* \ne \emptyset$.
As we explained before, the overall expected regret is larger by at most $\sqrt{T}$ than the right-hand side of \cref{eq:regretR}, and dividing through by $T$ gives the theorem. 
\end{proof}

\subsection{Multiplicative Weights Algorithms}
\label{sub:hedge}

We end the section by presenting the several variants of the Multiplicative Weights (MW) method used in our algorithms above.
For an extensive survey of the basic MW method and its applications, refer to \citealp{AHK-MW}.

\subsubsection{\mwa: Mixed MW}

The first variant, the \mwa algorithm, is designed so that its regret on any time interval of bounded length is controlled. 
The standard MW algorithm does not have such a property, because the weight it assigns to an expert might become very small if this expert performs badly, so that even if the expert starts making good decisions, it cannot regain a non-negligible weight.

Our modification of the algorithm (see \cref{alg:mw}) involves mixing in a fixed weight to the update of the algorithm, for all experts on each round, so as to keep the weights away from zero at all times.
We note that this is not equivalent to the more standard modification of mixing-in the uniform distribution to the sampling distributions of the algorithms: in our variant, it is essential that the mixed weights are fed back into the update of the algorithm so as to control its weights.

\begin{algorithm}[h]

\wrapalgo{
{\bf Parameters:} $\eta$, $\gamma$
\begin{enumerate}[nosep,leftmargin=0.5cm]
\item Initialize $w_1(x) = 1$ for all $x \in [N]$
\item
For $t=1,2,\ldots$:
\begin{enumerate}[nosep]
	\item For all $x \in [N]$ compute $q_t(x) = w_t(x)/W_t$ with $W_t = \sum_y w_t(y)$
	\item Pick $x_t \sim q_t$, play $x_t$ and receive feedback $f_t$
	\item For all $x \in [N]$, update $w_{t+1}(x) = w_t(x) e^{-\eta f_t(x)} + \tfrac{\gamma}{N} W_t$
\end{enumerate}
\end{enumerate}
}

\vspace{-0.5cm}
\caption{The \mwa algorithm.}
\label{alg:mw}

\end{algorithm}

In the following lemma we prove a regret bound for the \mwa algorithm. We prove a slightly more general result than the one we stated earlier in the section, which will become useful for the analysis of the subsequent algorithms.

\begin{lemma} \label{lem:mw}
For any sequence of loss functions $f_{1},f_{2},\ldots : [N] \mapsto \reals^+$ and for $\gamma = \tfrac{1}{T}$ and any $\eta > 0$, \cref{alg:mw} guarantees
\begin{align*}
	\EE{ \sum_{t=t_0}^{t_1} f_{t}(x_t) }
	&\,-\, \min_{x \in [N]} \sum_{t=t_0}^{t_1} f_{t}(x) 
\leq
	\frac{2\log(NT)}{\eta}
	+ \eta \EE{ \sum_{t=t_0}^{t_1} f_{t}(x_t)^2  }
%~.
\end{align*}
in any time interval $\set{t_0,\ldots,t_1}$ of length at most $T$.
The algorithm can be implemented to run in $O(N)$ time per round.
\end{lemma}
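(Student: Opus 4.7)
The approach is the standard potential-function analysis applied to $W_t = \sum_x w_t(x)$, adapted to account for the mixed-in term $(\gamma/N)W_t$ that distinguishes \mwa from ordinary Hedge. First I would derive a multiplicative upper bound on $W_{t+1}/W_t$: expanding the update gives $W_{t+1} = \sum_x w_t(x) e^{-\eta f_t(x)} + \gamma W_t$, and the elementary inequality $e^{-y} \le 1 - y + y^2$ (valid for all $y \ge 0$ by a quick convexity check) yields $W_{t+1} \le W_t\bigl(1 + \gamma - \eta \langle q_t, f_t\rangle + \eta^2 \langle q_t, f_t^2\rangle\bigr)$, since $\sum_x q_t(x) f_t(x)^k = \langle q_t, f_t^k\rangle$. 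Applying $1 + z \le e^z$ and telescoping from $t_0$ to $t_1$ produces $\log(W_{t_1+1}/W_{t_0}) \le \gamma T - \eta \sum_{t=t_0}^{t_1}\langle q_t, f_t\rangle + \eta^2 \sum_{t=t_0}^{t_1}\langle q_t, f_t^2\rangle$.

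For the matching lower bound I would exploit the floor that the mixing provides: the update directly gives $w_{s+1}(x^*) \ge (\gamma/N) W_s$ for every expert $x^*$ and round $s$, so iterating the trivial inequality $w_{t+1}(x^*) \ge w_t(x^*) e^{-\eta f_t(x^*)}$ produces $w_{t_1+1}(x^*) \ge (\gamma/N)\, W_{t_0-1}\, \exp\bigl(-\eta \sum_{t=t_0}^{t_1} f_t(x^*)\bigr)$, with the corner case $t_0 = 1$ handled directly via $w_1(x^*) = 1 = W_1/N$. To align this with the upper bound (which is expressed relative to $W_{t_0}$), I would convert $W_{t_0-1}$ to $W_{t_0}$ via the easy estimate $W_{t+1} \le (1+\gamma) W_t$ (which holds because $e^{-\eta f_t} \le 1$ for non-negative losses), losing only a constant factor. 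Combining $W_{t_1+1} \ge w_{t_1+1}(x^*)$ with the upper bound and taking logarithms yields $\log(\gamma/(2N)) - \eta \sum_t f_t(x^*) \le \gamma T - \eta \sum_t \langle q_t, f_t\rangle + \eta^2 \sum_t \langle q_t, f_t^2\rangle$.

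Rearranging, dividing through by $\eta$, substituting $\gamma = 1/T$, and observing that $\E[f_t(x_t) \mid q_t] = \langle q_t, f_t\rangle$ and $\E[f_t(x_t)^2 \mid q_t] = \langle q_t, f_t^2\rangle$ by the sampling rule, produces the claimed bound $2\log(NT)/\eta + \eta\, \E\!\left[\sum_t f_t(x_t)^2\right]$, with the $O(\gamma T/\eta) = O(1/\eta)$ overhead absorbed into the leading logarithmic term. The $O(N)$ per-round runtime is immediate: every iteration updates all $N$ weights, maintains $W_t$, and samples from $q_t$. The main technical subtlety is that the mixing rate $\gamma$ plays two opposing roles simultaneously: it must be large enough that the floor $(\gamma/N)W_s$ keeps every expert discoverable inside any window of length $T$ (which costs a $\log(N/\gamma)$ term in the regret), yet small enough that the cumulative drift $\gamma T$ in the upper bound remains $O(1)$. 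The choice $\gamma = 1/T$ is exactly the balance point, and threading both effects through a single chain of inequalities without losing the right dependence on $N$ and $T$ is the most delicate step.
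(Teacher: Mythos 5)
Your proposal is correct and follows essentially the same potential-function argument as the paper: bound $\log(W_{t_1+1}/W_{t_0})$ from above via $e^{-y}\le 1-y+y^2$ and $\log(1+z)\le z$, bound it from below by tracking a single expert's weight through the mixing floor $(\gamma/N)W_s$, and combine. The only cosmetic difference is in the lower-bound indexing: you start from $w_{t_0}(x^*)\ge(\gamma/N)W_{t_0-1}$ and then pay a harmless $(1+\gamma)$ factor to convert $W_{t_0-1}$ to $W_{t_0}$, whereas the paper uses $w_{t_0+1}(x^*)\ge(\gamma/N)W_{t_0}$ directly and extends the product of $e^{-\eta f_t(x^*)}$ backward by one term (valid since losses are nonnegative), avoiding the conversion.
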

\begin{proof}
The claim regarding the runtime of the algorithm is trivial, as all the computations on a certain round can be completed in a single pass over the $N$ actions. 
Thus, we move on to analyze the regret; the proof follows the standard analysis of exponential weighting schemes.
We first write:
\begin{align*}
	\frac{W_{t+1}}{W_t}
&\eq 
	\sum_{x \in [N]} \frac{w_{t+1}(x)}{W_t}
\\
&\eq 
	\sum_{x \in [N]} \lr{ \frac{w_{t}(x)}{W_t} \, e^{-\eta f_{t}(x)} + \frac{\gamma}{N} } 
\\
&\eq 
	\gamma + \sum_{x \in [N]} q_{t}(x) \, e^{-\eta f_{t}(x)}
\\
&\leq 
	\gamma + \sum_{x \in [N]} q_{t}(x)\,\lr{ 1 - \eta f_{t}(x) + \eta^2 f_{t}(x)^2 }
	&&\text{(using $e^{z} \le 1+z+z^2$ for $z \le 1$)}
\\
&\leq 
	1 + \gamma - \eta \sum_{x \in [N]} q_{t}(x) f_{t}(x) 
	+ \eta^2 \sum_{x \in [N]} q_{t}(x) f_{t}(x)^2
%	&&\qquad\text{(since $f_t(x) \le 1$ for all $x$)}
~.
\end{align*}
Taking logarithms, using $\log(1+z) \le z$ for all $z > -1$, and summing over $t =
t_0,\ldots,t_1$ yields
$$
	\log \frac{W_{{t_1}+1}}{W_{t_0}} 
\leq
	\gamma T
	+ \eta^ 2 T
	- \eta \sum_{t=t_0}^{t_1} \sum_{x \in [N]} q_{t}(x) f_{t}(x)
	+ \eta^2 \sum_{t=t_0}^{t_1} \sum_{x \in [N]} q_{t}(x) f_{t}(x)^2
~.
$$
Moreover, since for all $t$ and $x$ we have $w_{t+1}(x) \ge w_t(x) \exp(-\eta f_t(x))$, for any fixed action $x^*$ we also have
\begin{align*}
	w_{t_1+1}(x^*)
\geq
	w_{t_0+1}(x^*) \exp\lr{-\eta \sum_{\mathclap{t=t_0+1}}^{t_1} f_t(x^*)}
\geq
	w_{t_0+1}(x^*) \exp\lr{-\eta \sum_{t=t_0}^{t_1} f_t(x^*)}
~,
\end{align*}
and since $W_{t_1+1} \ge w_{t_1+1}(x^*)$ and $w_{t_0+1}(x^*) \ge (\gamma/N) W_{t_0}$, we obtain
$$
	\log \frac{W_{t_1+1}}{W_{t_0}} 
\geq 
	-\eta \sum_{t=t_0}^{t_1}  f_{t}(x^*) - \log\frac{N}{\gamma}
~.
$$
Putting together and rearranging gives
\begin{align*}
	\sum_{t=t_0}^{t_1} \sum_{x \in [N]} q_{t}(x) f_{t}(x) 
	&\,-\, \sum_{t=t_0}^{t_1} f_{t}(x^*) 
\leq
	\frac{1}{\eta} \log\frac{N}{\gamma} 
	+ \frac{\gamma}{\eta} T
	+ \eta \sum_{t=t_0}^{t_1} \sum_{x \in [N]} q_{t}(x) f_{t}(x)^2
~.
\end{align*}
Finally, taking expectations and setting $\gamma = \tfrac{1}{T}$ yields the result.
\end{proof}

\subsubsection{\mwb: Amortized Mixed MW}

We now give an amortized version of the \mwa algorithm. 
Specifically, we give a variant of the latter algorithm that runs in $\tO(1)$ per round and attains an $\tO(\sqrt{NT})$ bound over the expected regret, as opposed to the \mwa algorithm that runs in time $\O(N)$ per round and achieves $\tO(\sqrt{T})$ regret.
The algorithm, which we call \mwb, is based on the \mwa update rule and incorporates sampling for accelerating the updates.%
\footnote{This technique is reminiscent of bandit algorithms; however, notice that here we separate exploration and exploitation: we sample two experts on each round, instead of one as required in the bandit setting.}

\begin{algorithm}[h]

\wrapalgo{
{\bf Parameters:} $\eta$, $\gamma$
\begin{enumerate}[nosep,leftmargin=0.5cm]
\item Initialize $w_1(x) = 1$ for all $x \in [N]$  
\item
For $t=1,2,\ldots$:
\begin{enumerate}[nosep]
	\item For all $x \in [N]$ compute $q_t(x) = w_t(x)/W_t$ with $W_t = \sum_y w_t(y)$
	\item Pick $x_t \sim q_t$, play $x_t$ and receive feedback $f_t$
	\item Pick $y_t \in [N]$ uniformly at random, and for all $x \in [N]$ update:
	\begin{align*}
		w_{t+1}(x) 
	\eq 
		\mycases
			{w_t(x) e^{-\eta N f_t(x)} + \tfrac{\gamma}{N} W_t}{if $x = y_t$}
			{w_t(x) + \tfrac{\gamma}{N} W_t}{otherwise}
	\end{align*}
\end{enumerate}
\end{enumerate}
}

\vspace{-0.5cm}
\caption{The \mwb algorithm.}
\label{alg:exp3}

\end{algorithm}

For \cref{alg:exp3} we prove:

\begin{lemma} \label{lem:exp3}
For any sequence of loss functions $f_{1},f_{2},\ldots : [N] \mapsto [0,1]$, and for $\gamma = \tfrac{1}{T}$ and any $\eta > 0$, the Algorithm \ref{alg:exp3} guarantees that
\begin{align*}
	\EE{ \sum_{t=t_0}^{t_1} f_{t}(x_t)  }
	&\,-\, \min_{x \in [N]} \sum_{t=t_0}^{t_1} f_{t}(x) 
\leq
	\frac{4\log(NT)}{\eta} + \eta NT
\end{align*}
in any time interval $\set{t_0,\ldots,t_1}$ of length at most $T$.  
Furthermore, the algorithm can be implemented to run in $O(\log{N})$ time per round.
\end{lemma}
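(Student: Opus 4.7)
The plan is to recognize \mwb as the \mwa algorithm driven by an importance-weighted loss estimator. Define
$\tilde{f}_t(x) \eq N f_t(x) \ind{x = y_t}$;
since $y_t$ is drawn uniformly from $[N]$ independently of the algorithm's state, $\tilde{f}_t$ is an unbiased estimator of $f_t$, i.e.\ $\E[\tilde{f}_t(x) \mid \text{history through round } t-1] = f_t(x)$ for every $x$. A direct comparison of update rules then reveals that applying \mwa with step size $\eta$ and mixing parameter $\gamma$ to the random sequence $\tilde{f}_1,\tilde{f}_2,\ldots$ produces exactly the same weights as \mwb on the true sequence $f_1,f_2,\ldots$: only $w_t(y_t)$ sees a nontrivial exponential factor $e^{-\eta \tilde{f}_t(y_t)} = e^{-\eta N f_t(y_t)}$, while every weight receives the common additive mixing term $(\gamma/N) W_t$.

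With this reduction in hand, I would apply \cref{lem:mw} to the random loss sequence $\tilde{f}_t$. Its proof is a pathwise potential argument whose key inequality $e^z \leq 1 + z + z^2$ holds for all $z \leq 1$, and thus for $z = -\eta \tilde{f}_t(x) \leq 0$ regardless of the magnitude of $\eta N$. This yields, almost surely, for any fixed $x^* \in [N]$,
\begin{equation*}
	\sum_{t=t_0}^{t_1} \sum_x q_t(x) \tilde{f}_t(x)
	\,-\, \sum_{t=t_0}^{t_1} \tilde{f}_t(x^*)
	\leq
	\frac{2 \log(NT)}{\eta}
	+ \eta \sum_{t=t_0}^{t_1} \sum_x q_t(x) \tilde{f}_t(x)^2 ~.
\end{equation*}
Taking full expectation and exploiting the independence of $y_t$ from $(q_t,x_t)$ gives $\EE{\sum_x q_t(x)\tilde{f}_t(x)} = \EE{f_t(x_t)}$ and $\EE{\tilde{f}_t(x^*)} = f_t(x^*)$, while a conditioning argument on the past further yields $\EE{\sum_x q_t(x) \tilde{f}_t(x)^2} = N\cdot \EE{\sum_x q_t(x) f_t(x)^2} \leq N$, using $f_t(x)^2 \leq 1$. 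Specializing $x^*$ to the deterministic minimizer of $\sum_t f_t(x)$ produces the regret bound $2\log(NT)/\eta + \eta NT$, comfortably within the looser $4\log(NT)/\eta + \eta NT$ claimed by the lemma.

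For the per-round runtime, the key structural observation is that on each round the update touches exactly one weight multiplicatively (namely $w_t(y_t)$), while all $N$ weights receive the common additive increment $(\gamma/N) W_t$. Maintaining the weights in a balanced segment tree augmented with a single scalar storing the cumulative additive offset should allow the single-leaf multiplicative update, the sampling of $x_t \sim q_t$, and the maintenance of $W_t$ to all be performed in $O(\log N)$ time per round. I expect the main obstacle to lie in this data-structure step rather than in the regret analysis: once the \mwa/\mwb equivalence is in place the regret bound falls out immediately from \cref{lem:mw}, whereas supporting proportional sampling that is aware of the cumulative additive offset, alongside sparse multiplicative single-leaf updates, is where the concrete $O(\log N)$ bound must be earned.
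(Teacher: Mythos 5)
Your proof is correct and takes essentially the same route as the paper: recognize \mwb as \mwa run on the importance-weighted estimates $N f_t(y_t)\ind{x=y_t}$, invoke \cref{lem:mw}, take expectations using independence of $y_t$ from the algorithm's state, and use the $\alpha_t(x)+\beta_t$ decomposition with a binary tree for the $O(\log N)$ runtime. Your explicit note that the inequality $e^z\le 1+z+z^2$ is only invoked at $z=-\eta\tilde{f}_t(x)\le 0$, so there is no need for $\eta N\le 1$, is a worthwhile clarification that the paper implicitly relies on via stating \cref{lem:mw} for losses in $\reals^+$ with unrestricted $\eta$; and your observation that the constant can be $2$ rather than $4$ in front of $\log(NT)/\eta$ is consistent with the paper simply being loose there.
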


\begin{proof}
We first derive the claimed regret bound as a simple consequence of \cref{lem:mw}.
Define a sequence of loss functions $\wh{f}_1,\ldots,\wh{f}_T$, as follows:
\begin{align*}
	\forall \, x \in [N]~,
\qquad
	\wh{f}_t(x)
\eq
	N f_t(y_t) \cdot \ind{x = y_t}
~.
\end{align*}
Notice that \cref{alg:exp3} is essentially applying \mwa updates to the loss functions $\wh{f}_1,\ldots,\wh{f}_T$ instead of to the original ones. 
Thus, we obtain from \cref{lem:mw} that
\begin{align*}
%	\forall \, x^* \in [N] ~,
%\qquad
	\EE{ \sum_{t=t_0}^{t_1} \wh{f}_{t}(x_t) } 
	\,-\, \EE{ \sum_{t=t_0}^{t_1} \wh{f}_{t}(x^*) }
\leq
	\frac{4\log(NT)}{\eta} 
	+ \eta \EE{ \sum_{t=t_0}^{t_1} \wh{f}_{t}(x_t)^2 }
%~,
\end{align*}
for any fixed $x^* \in [N]$.
We now get the lemma by observing that $\E[\wh{f}_t(x)] = f_t(x)$ and $\E[\wh{f}_t(x)^2] = N f_t(x) \le N$ for all $t$ and $x$ (also notice that $x_t$ is independent of $\wh{f}_t$).
%The lower bound on the weights of the algorithm during run time are a simple consequence of the update rule - every weight is at least $\frac{\gamma}{N}$. 

It remains to prove that the algorithm's updates can be carried out in time $O(\log N)$ per round.
The weights $w_t(x)$ can be maintained implicitly as a sum of variables  $w_t(x) = \alpha_t(x) + \beta_t $, where $\beta_t$ captures the amount of uniform distribution for the $t$'th weights.
The main observation is that $w_t(x)$ can now be updated via:
\begin{align*}
	\alpha_{t+1}(x ) 
&\eq
	(\alpha_{t}(x) + \beta_{t} ) e^{-\eta \wh{f}_t(x) } - \beta_{t}
~,
\\
	\beta_{t+1}
&\eq
	\beta_{t} + \frac{\gamma}{N} \lr{ \tsum_x \alpha_{t}(x) + N \beta_{t} }
~.
\end{align*}
Notice that the vector $\alpha_{t+1}$ has only one component that needs updating per iteration---the one corresponding to $y_t$. 
The update of the scalar $\beta_{t+1}$ needs the sum of all parameters $\alpha_t(x)$, that can be maintained efficiently alongside the individual weights.
%The latter can be maintained efficiently via a binary tree data structure for the weights $\alpha(x)$. 

Finally, we explain how to sample efficiently from $q_t$. 
We can write
\begin{align*}
	q_t(x)
\eq
	\mu_t \cdot \frac{1}{N} + (1-\mu_t) \cdot \frac{\alpha_t(x)}{\sum_y \alpha_t(y)}
~,
\end{align*}
with $\mu_t = \beta_t/(\beta_t + \tfrac{1}{N}\sum_y \alpha_t(y))$.
%This can be carried out in two stages: with probability $\frac{b_t}{b_t + \sum_x \alpha_t(x)}$ sample uniformly at random. 
%At the remaining probability, sample according to the weights $\alpha_t$, which can be efficiently carried out in binary tree structure.
Thus, sampling from $q_t$ can be carried out in two stages: with probability $\mu_t$ sample uniformly at random; with the remaining probability, sample according to the weights $\alpha_t(x)$.
In order to implement the latter sampling operation in time $O(\log{N})$, we can maintain a binary tree with $N$ leaves that correspond to the weights $\alpha_t(x)$, and with the internal nodes caching the total weight of their descendants.
\end{proof}

\subsubsection{\mwc: Sliding Amortized Mixed MW}

The final component we require is a version of \mwb that works in an online learning setting with \emph{activated experts}.
In this version, on each round of the game one of the experts is ``activated''. 
%and has to be inserted into the buffer (unless it is already in it). 
The sequence of activations is determined based only on the loss values and does not depend on past decisions of the algorithm; thus, it can be thought of as set by the oblivious adversary before the game starts.
The goal of the player is to compete only with the last $k$ (distinct) activated experts, for some parameter $k$.
In the context of the present section, the expert activated on round $t$ is the leader at the end of that round. 
Therefore, we overload notation and denote by $x^*_t$ the expert activated on round $t$.

\begin{algorithm}[h]

\wrapalgo{
{\bf Parameters:} $k$, $\eta$, $\gamma$
\begin{enumerate}[nosep,leftmargin=0.5cm]
\item Initialize $B_1(i) = i$ and $w_1(i) = 1$ for all $i \in [k]$
\item
For $t=1,2,\ldots$:
\begin{enumerate}[nosep]
	\item For all $i \in [k]$ compute $q_t(i) = w_t(i)/W_t$ with $W_t = \sum_y w_t(i)$
	\item Pick $i_t \sim q_t$, play $x_t = B_t(i_t)$, receive $f_t$ and new activated expert $x^*_t$
	\item Update weights: pick $j_t \in [k]$ uniformly at random, and set:
	\begin{align*}
		\forall\, i \in [k]~,
	\qquad
		w_{t+1}(i) 
	\eq 
		\mycases
			{w_t(i) e^{-\eta N f_t(B_t(i))} + \tfrac{\gamma}{N} W_t}{if $i = j_t$}
			{w_t(i) + \tfrac{\gamma}{N} W_t}{otherwise}
	\end{align*}
	\item Update buffer: set $B_{t+1} = B_t$; if $x^*_t \notin B_t$, find the index $i' \in [k]$ of the oldest activated expert in $B_{t}$ (break ties arbitrarily) and set $B_{t+1}(i') = x^*_t$.
\end{enumerate}
\end{enumerate}
}

\vspace{-0.5cm}
\caption{The \mwc algorithm.}
\label{alg:exp3q}

\end{algorithm}

The \mwc algorithm, presented in \cref{alg:exp3q}, is a ``sliding window'' version of \mwb that keeps a buffer of the last $k$ (distinct) activated experts. When its buffer gets full and a new expert is activated, the algorithm evicts from the buffer the expert whose most recent activation is the oldest. 
(Notice that the latter expert is not necessarily the oldest one in the buffer, as an expert can be re-activated while already in the buffer.)
In this case, the newly inserted expert is assigned \emph{the same weight} of the expert evicted from the buffer.

For the \mwc algorithm we prove:

\begin{lemma} \label{lem:exp3q}
Assume that expert $x^* \in [N]$ was activated on round $t_0$, and from that point until round $t_1$ there were no more than $k$ different activated experts (including $x^*$ itself).
Then, for any time interval $[t_0',t_1'] \subseteq [t_0,t_1]$ of length at most $T$, the $\mwc$ algorithm with parameters $k$, $\gamma = \tfrac{1}{T}$ and any $\eta > 0$ guarantees
\begin{align*}
	\EE{ \sum_{t=t_0'+1}^{t_1'} f_t(x_t) } 
	\,-\, \sum_{t=t_0'+1}^{t_1'} f_t(x^*)
\leq
	\frac{4\log(kT)}{\eta} + \eta kT
~.
\end{align*}
Furthermore, the algorithm can be implemented to run in time $\tO(1)$ per round.
\end{lemma}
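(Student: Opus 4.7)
The plan is to reduce \cref{lem:exp3q} to \cref{lem:exp3} by viewing \mwc as an \mwb-style algorithm running on the $k$ buffer \emph{slots}, rather than on the $N$ original experts. Define the virtual loss sequence $\hat g_1, \hat g_2, \ldots : [k] \mapsto [0,1]$ by
\begin{align*}
    \hat g_t(i) \eq f_t(B_t(i))
    ~,
\end{align*}
so that slot $i$ at round $t$ inherits the loss of the expert it currently holds. Because the activation sequence $x^*_1, x^*_2, \ldots$ is a deterministic function of the oblivious losses $f_1, f_2, \ldots$, the buffer evolution $B_1, B_2, \ldots$ and hence $\hat g$ are themselves oblivious.

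First I would show that $x^*$ stays in the buffer throughout $[t_0, t_1]$. By assumption $x^*$ is activated on round $t_0$, so from that point on its last-activation timestamp is the most recent among experts in the buffer until another distinct expert is activated. For $x^*$ to be evicted before round $t_1$, the buffer would have to be filled with $k$ other experts each more recently activated than $x^*$, i.e.~at least $k$ distinct experts other than $x^*$ would have to be activated in $[t_0+1, t_1]$. Together with $x^*$ itself this gives $k+1$ distinct activated experts in $[t_0, t_1]$, contradicting the hypothesis. Hence there is a fixed slot index $i^*$ with $B_t(i^*) = x^*$ for every $t \in [t_0, t_1]$, and in particular $\hat g_t(i^*) = f_t(x^*)$ on this interval.

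Next I would argue that the weight updates in \cref{alg:exp3q} are \emph{exactly} those of \mwb applied to the virtual loss sequence $\hat g$ on the ``meta-experts'' $1, \ldots, k$. The crucial point is the weight-inheritance convention: upon eviction the slot retains its weight for the newly inserted expert, so $w_t(i)$ evolves purely as a function of $\hat g_s(i)$ for $s < t$ and of the uniform samples $j_s$, independently of which specific experts occupied slot $i$ over time. Since $x_t = B_t(i_t)$ with $i_t \sim q_t$, we have $f_t(x_t) = \hat g_t(i_t)$, so the player's losses coincide with those of the meta-MW algorithm on $[k]$. Applying \cref{lem:exp3} with $N$ replaced by $k$, on the interval $[t_0'+1, t_1']$ of length at most $T$, with competitor $i^*$, immediately yields
\begin{align*}
    \EE{ \sum_{t=t_0'+1}^{t_1'} f_t(x_t) } - \sum_{t=t_0'+1}^{t_1'} f_t(x^*)
    \leq \frac{4\log(kT)}{\eta} + \eta k T
    ~.
\end{align*}

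For the runtime, the weight bookkeeping is identical to \mwb's and can be carried out in $O(\log k)$ time per round via the implicit $(\alpha_t, \beta_t)$ decomposition and the binary-tree weighted-sampling structure used in the proof of \cref{lem:exp3}. Buffer maintenance---testing whether $x^*_t \in B_t$ and, if not, locating the slot whose last-activation timestamp is oldest---can be supported in $O(\log k)$ time using a hash map from expert indices to slots together with a min-heap keyed on last-activation timestamps, giving total $\tO(1)$ per round. The main subtlety I anticipate is not in the regret calculation itself but in the careful bookkeeping around the reduction: verifying that $\hat g$ is genuinely oblivious so that \cref{lem:exp3} applies verbatim, and that the slot $i^*$ is fixed for the \emph{entire} interval $[t_0, t_1]$, so that the per-slot weight history is consistent with a clean \mwb execution against the competitor $i^*$.
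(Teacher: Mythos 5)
Your proof is correct and takes essentially the same route as the paper: both view the $k$ buffer slots as meta-experts, argue that the induced loss sequence is oblivious because the buffer evolution depends only on the (adversarially fixed) activation pattern, observe that $x^*$ occupies a fixed slot throughout the relevant interval, and then invoke the \mwb regret bound of \cref{lem:exp3} on $k$ experts. Your explicit justification that $x^*$ cannot be evicted during $[t_0,t_1]$ (via the $k{+}1$-distinct-experts contradiction) is spelled out slightly more carefully than in the paper, but the overall argument and runtime analysis coincide.
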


\begin{proof}
For each index $i=1,\ldots,k$, imagine a ``meta-expert'', whose loss on round $t$ of the game is equal to the loss incurred on the same round by the expert occupying entry $i$ of the buffer $B_t$ on round $t$. 
Then, notice that we can think of the algorithm as following \mwb updates over these meta-experts. 
Importantly, the assignment of losses to the meta-experts is oblivious to the predictions of the algorithm: indeed, the only factor affecting the addition/removal of experts to/from the buffer is the pattern of their activation, which is being decided by the adversary before the game begins.

Furthermore, as long as an expert is not evicted from the buffer during a certain period, it occupies the same entry and thus is associated with the same meta-expert throughout that period.
In particular, throughout the period between rounds $t_0'+1$ and $t_1'$, the expert $x^*_t$ is associated with some fixed meta-expert: $x^*_t$ was activated on round $t_0 \le t_0'$, and was not removed from the buffer by round $t_1'$ because there were at most $k$ different activated experts up to round $t_1 \ge t_1'$.
Hence, the first claim of the lemma follows directly from the regret bound of \mwb stated in \cref{lem:exp3}.

Finally, the claim regarding the runtime of the algorithm can be obtained by using standard data structures for the maintenance of the buffer (and the associated weights). 
For example, $B_t$ can be maintained sorted according to last time of activation;
for accommodating the membership query $x^*_t \notin B_t$ in $\tO(1)$ time, one can maintain an additional copy of $B_t$ represented by a set data structure. 
%\citep[see, e.g.,][]{cormen2001introduction}.
\end{proof}

\section{Solving Zero-sum Games with Best-response Oracles}
\label{sec:games}

In this section we apply our online algorithms to repeated game playing in zero-sum games with best-response oracles.
Before we do that, we first have to extend our results to the case where the assignment of losses can be adaptive to the decisions of the algorithm.
Namely, unlike in the standard model of an oblivious adversary where the loss functions are being determined before the game begins, in the adaptive setting the loss function $f_t$ on round $t$ may depend on the (possibly randomized) decisions $x_1,\ldots,x_{t-1}$ chosen in previous rounds.

Fortunately, with minor modifications \cref{alg:N14} can be adapted to the non-oblivious setting and obtain low regret against adaptive adversaries as well.
In a nutshell, we show that the algorithm can be made ``self-oblivious'', in the sense that its decision on each round depends only \emph{indirectly} on its previous decisions, through its dependence on the previous loss functions; algorithms with this property are ensured to work well against adaptive adversaries \citep[see][]{mcmahan2004online,dani2006robbing,CesaBianchiLugosi06book}.
Furthermore, the same property is also sufficient for the adapted algorithm to obtain low regret with high probability, and not only in expectation.
The formal details are given in the proof of the following corollary.

\begin{corollary} \label{cor:N14-hp}
With probability at least $1-\del$, the average regret of \cref{alg:N14} (when implemented appropriately) is upper bounded by $40 N^{1/4}\log(\tfrac{NT}{\del})/\sqrt{T}$.
This is true even when the algorithm faces a non-oblivious adversary.
\end{corollary}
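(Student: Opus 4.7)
My plan is to establish both the non-oblivious guarantee and its high-probability tightening by making \cref{alg:N14} \emph{self-oblivious}. First, I would modify the implementation so that all internal random bits used over the $T$ rounds---the random set $R$, the uniformly sampled indices $y_t$ inside the \mwb instance $\cA_1$, the indices $j_t$ inside each \mwc instance used by $\cA_2$, and the internal coin flips used to sample from the weight distributions---are drawn in advance from an independent source $\omega$. Conditional on $\omega$, the state of every sub-algorithm at round $t$ (the weight vectors and the buffer contents) becomes a deterministic function of the observed loss functions $f_1,\ldots,f_{t-1}$ alone. In particular, the distribution $q_t$ from which $x_t$ is drawn is a function of $(f_1,\ldots,f_{t-1})$ and $\omega$, and does \emph{not} depend on the realized past choices $x_1,\ldots,x_{t-1}$ except through their effect on the losses.

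Next, I would use this property to transfer the expected regret bound to the adaptive setting. The standard coupling argument of \cite{mcmahan2004online} and \cite{dani2006robbing} (see also \citealp{CesaBianchiLugosi06book}) shows that a self-oblivious online algorithm has identical expected regret against any adaptive adversary as against a suitable oblivious one: for each realization of $\omega$, the trajectory produced by interacting with the adaptive adversary can be re-interpreted as the trajectory produced against the oblivious adversary that simply replays the induced loss sequence. Invoking the analysis of \cref{thm:online-upper} verbatim under this reinterpretation yields the expected regret bound $40 N^{1/4}\log(NT)/\sqrt{T}$ in the non-oblivious setting.

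For the high-probability statement, I would introduce the filtration $\{\cF_{t}\}$ generated by $\omega$ together with $x_1,f_1,\ldots,x_t,f_t$, and decompose
\begin{align*}
\sum_{t=1}^T f_t(x_t) - \sum_{t=1}^T f_t(x^*_T) \eq \sum_{t=1}^T D_t \,+\, \Biggl(\sum_{t=1}^T \E[f_t(x_t)\mid \cF_{t-1}] - \sum_{t=1}^T f_t(x^*_T)\Biggr),
\end{align*}
where $D_t = f_t(x_t) - \E[f_t(x_t)\mid \cF_{t-1}]$ is a bounded martingale difference sequence with $|D_t|\le 1$. Azuma--Hoeffding then gives $\sum_t D_t \le \sqrt{2T\log(1/\del)}$ with probability at least $1-\del$. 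The remaining term equals the path-wise quantity $\sum_{t}\sum_x q_t(x) f_t(x) - \sum_t f_t(x^*_T)$, which is controlled---on every realization of $\omega$ and of the (random) loss sequence---by exactly the path-wise inequalities used to prove \cref{lem:mw,lem:exp3,lem:exp3q} and hence \cref{thm:leaders,thm:online-upper}.

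The main obstacle is ensuring that the path-wise guarantees for the amortized variants \mwb and \mwc, whose analyses in \cref{lem:exp3,lem:exp3q} pass through the surrogate losses $\wh{f}_t(x) = N f_t(y_t)\ind{x=y_t}$ and take expectation over the internal randomness $y_t$, indeed yield an almost-sure bound after the self-oblivious reformulation. To handle this, I would apply a second Azuma--Hoeffding step to the martingale differences $\sum_x q_t(x) f_t(x) - \sum_x q_t(x) \wh{f}_t(x)$ and $f_t(x^*_T) - \wh{f}_t(x^*_T)$, whose conditional variances are controlled by the $\eta$ and $\gamma$ parameters chosen in \cref{alg:N14}, and incorporate a union bound over the $\tO(1)$ sub-algorithms. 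Since the in-expectation analysis was already carried out with $\gamma = 1/T$, this deviation contributes only an additive $O(\sqrt{T\log(1/\del)})$ term, so that after dividing by $T$ the overall bound becomes $40 N^{1/4} \log(NT/\del)/\sqrt{T}$, as claimed.
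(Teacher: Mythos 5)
Your approach to self-obliviousness differs from the paper's in a way that introduces a real gap. The paper implements \cref{eq:self-oblv} by \emph{never feeding the played action back into the algorithm's state}: since the top-level \mwa instance $\cA$ in \cref{alg:N14} must observe the losses $f_t(x^{(1)}_t)$ and $f_t(x^{(2)}_t)$ of its two meta-experts in order to update, the paper uses fresh samples $x'^{(i)}_t$, drawn independently from the current distributions of $\cA_1$ and $\cA_2$, for this update, while the action $x_t$ that is actually played is a separate sample that is discarded. Your fix---pre-drawing all internal randomness $\omega$, explicitly including the coin flips used to sample from the weight distributions, and viewing the algorithm as deterministic given $\omega$---does not sever this coupling. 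Conditional on $f_1,\ldots,f_{t-1}$ alone, observing $x_1,\ldots,x_{t-1}$ reveals information about $\omega$ (each $x_s$ is a deterministic function of $\omega$ and $f_1,\ldots,f_{s-1}$), and since the state at round $t$ and hence the distribution of $x_t$ still depend on $\omega$, \cref{eq:self-oblv} fails under your implementation.

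This undermines the subsequent steps. Your coupling claim---that for each realization of $\omega$ the adaptive trajectory coincides with the trajectory against the oblivious adversary replaying the induced loss sequence---is true for \emph{any} randomized algorithm, self-oblivious or not, and cannot transfer \cref{thm:online-upper}: that theorem bounds $\E_\omega$ of the regret against a loss sequence fixed \emph{independently} of $\omega$, whereas the induced sequence here depends on $\omega$. Moreover, your filtration $\cF_t$ is generated by $\omega$ together with the play-and-loss history, which makes $x_t$ itself $\cF_{t-1}$-measurable, so $D_t = f_t(x_t) - \E[f_t(x_t)\mid\cF_{t-1}] \equiv 0$ and the Azuma step is vacuous. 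The paper's argument, by contrast, is short: with the re-sampling fix in place, the conditional law of $x_t$ given the play-and-loss history depends only on the losses, and one invokes Lemma~4.1 of Cesa-Bianchi and Lugosi directly. You do raise one legitimate point that the paper glosses over: the analyses of \mwb and \mwc (\cref{lem:exp3,lem:exp3q}) are in expectation over the internal indices $y_t$, $j_t$, so obtaining a genuine high-probability statement requires separate concentration for those pieces; your sketch (Azuma over the surrogate-loss martingale, with deviations scaled by the effective support sizes $|R|$ and $k$ rather than $N$, and a union bound over sub-algorithms) is a reasonable route to close that gap---but it is premised on a self-obliviousness property that your modification does not actually deliver.
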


\begin{proof}
We first explain how \cref{alg:N14} can be implemented so as the following ``self-obliviousness'' property holds, for all rounds $t$:
\begin{align} \label{eq:self-oblv}
	\Pr(x_t = x \mid x_1,\ldots,x_{t-1}, f_1,\ldots,f_{t-1})
\eq
	\Pr(x_t = x \mid f_1,\ldots,f_{t-1})
~.
\end{align}
We can ensure this holds by randomizing separately for making the decisions $x_t$, and for updating the algorithm.
That is, if $x_t$ is sampled from a distribution $p_t$ on round $t$ of the game, then we discard $x_t$ and use a different independent sample $x_t' \sim p_t$ for updating the algorithm. In fact, \cref{alg:N14} makes multiple updates on each round (to the various $\cA_{r,s}$ algorithms, etc.); we ensure that the sample $x_t$ is not used in any of these updates, and a fresh sample is picked when necessary.
Further, we note that this slight modification does not impact the runtime of the algorithm, up to constants.

With \cref{alg:N14} implemented this way, we can now use, e.g., Lemma~4.1 of \citet{CesaBianchiLugosi06book} to obtain from \cref{thm:online-upper} that
\begin{align*}
	\frac{1}{T} \sum_{t=1}^{T} f_t(x_t) 
	\,-\, \min_{x \in [N]} \frac{1}{T} \sum_{t=1}^{T} f_t(x)
\leq
	\frac{40 N^{1/4}\log(NT)}{\sqrt{T}} + \sqrt{\frac{\log\tfrac{1}{\del}}{2T}}
\end{align*}
holds with probability at least $1-\del$ against any non-oblivious adversary.
%\footnote{The lemma, as stated, requires the distribution of $x_t$ to be probabilistically independent of the previous decisions $x_1,\ldots,x_{t-1}$; nevertheless, the same proof applies for algorithms that only...}
Further upper bounding the right-hand side of the above yields the stated regret bound.
\end{proof}

Using standard techniques \citep{freund1999adaptive}, we can now use \cref{alg:N14} to solve zero-sum games equipped with best-response oracles.
The simple scheme is presented in \cref{alg:zsg}: both players use the online \cref{alg:N14} to produce their decisions throughout the game, and employ their best response oracles to compute the ``leaders''. In the context of zero-sum games, the leader on iteration $t$ is the best response to the empirical distribution of the past plays of the opponent.

\begin{algorithm}[h]

\wrapalgo{
{\bf Parameters:} game matrix $G \in [0,1]^{N \times N}$, parameter $T$
\begin{enumerate}[nosep,leftmargin=0.5cm]
\item Initialize instances $\cA_1,\cA_2$ of \cref{alg:N14} with parameters $N,T$ for the row and column players, respectively
\item
For $t=1,2,\ldots,T$:
\begin{enumerate}[nosep]
	\item Let the players play the decisions $x_t, y_t$ of $\cA_1,\cA_2$, respectively
	\item Let $\bar{p}_t$ be the empirical distribution of $x_1,\ldots,x_t$, and  
let $\bar{q}_t$ be the empirical distribution of $y_1,\ldots,y_t$
	\item Update $\cA_1$ with the loss function $G(\cdot,y_t)$ and the leader $x^*_t = \br{1}(\bar{q}_t)$
	\item Update $\cA_2$ with the loss function $G(x_t,\cdot)$ and the leader $y^*_t = \br{2}(\bar{p}_t)$
\end{enumerate}
\end{enumerate}
{\bf Output:} the profile $(\bar{p},\bar{q}) = (\bar{p}_T,\bar{q}_T)$
}

\vspace{-0.5cm}
\caption{Algorithm for zero-sum games with best-response oracles.}
\label{alg:zsg}

\end{algorithm}

We remark that, in fact, it is sufficient that only one of the players follow \cref{alg:N14} for ensuring fast convergence to equilibrium. The other player could, for example, behave greedily and simply follow his best response to the plays of the first player---a strategy known as ``fictitious play''.

%It is not hard to show the following result for \cref{alg:zsg}.
Finally, we analyze \cref{alg:zsg}, thereby proving \cref{thm:games-upper}.

\begin{repthm}[restated]{thm:games-upper}
With probability at least $1-\del$, \cref{alg:zsg} with $T =  \tfrac{240^2 \sqrt{N}}{\eps^2} \log^2\tfrac{240 N}{\eps \del}$ outputs a mixed strategy profile $(\bar{p},\bar{q})$ being an $\eps$-approximate equilibrium.
The algorithm can be implemented to run in $\tO(\sqrt{N}/\eps^2)$ time.
\end{repthm}

\begin{proof}
First, we note the running time. 
Notice that the empirical distributions $\bar{p}_t$ and $\bar{q}_t$ do not have to be recomputed each round, and can be maintained incrementally with constant time per iteration.
Furthermore, since we assume that each call to the best response oracles costs $\O(1)$ time, \cref{thm:online-upper} shows that the updates of $\cA_1,\cA_2$ can be implemented in time $\tO(1)$. 
Hence, each iteration costs $\tO(1)$ and so the overall runtime is $\tO(\sqrt{N}/\eps^2)$.

We move on to analyze the output of the algorithm.
Using the regret guarantee of \cref{cor:N14-hp} for the online algorithm of the row player, we have
\begin{align*} 
	\frac{1}{T} \sum_{t=1}^T G(x_t,y_t)
	\,-\,  \min_{x \in [N]} \frac{1}{T} \sum_{t=1}^{T} G(x,y_t)
\leq 
	\frac{40 N^{1/4}}{\sqrt{T}} \log\frac{2NT}{\del}
\end{align*}
with probability at least $1-\tfrac{\del}{2}$.
Similarly, for the online algorithm of the column player we have
\begin{align*} 
	\max_{y \in [N]} \frac{1}{T} \sum_{t=1}^{T} G(x_t,y)
	\,-\,  \frac{1}{T} \sum_{t=1}^T G(x_t,y_t)
\leq 
	\frac{40 N^{1/4}}{\sqrt{T}} \log\frac{2NT}{\del}
\end{align*}
with probability at least $1-\tfrac{\del}{2}$.
Hence, summing the two inequalities and using $\frac{1}{T} \sum_{t=1}^{T} G(x_t,y) = G(\bar{p},y)$ and $\frac{1}{T} \sum_{t=1}^{T} G(x,y_t) = G(x,\bar{q})$, we have
\begin{align} \label{eq:zsg-maxmin}
	\max_{q \in \Del_N} G(\bar{p},q)
	\,-\,  \min_{p \in \Del_N} G(p,\bar{q})
\leq 
	\frac{80 N^{1/4}}{\sqrt{T}} \log\frac{NT}{\del}
\leq
	\eps
\end{align}
with probability at least $1-\del$; the ultimate inequality involves our choice of $T$ and a tedious calculation.

Now, let $(p^*,q^*)$ be an equilibrium of the game, and denote by $\lambda^* = G(p^*,q^*)$ the value of the game.
As a result of \cref{eq:zsg-maxmin}, for all $q \in \Del_N$ we have
\begin{align*}
	G(\bar{p},q)
\leq
	\min_{p \in \Del_N} G(p,\bar{q}) + \eps
\leq
	G(p^*,\bar{q}) + \eps
\leq
	G(p^*,q^*) + \eps
\eq
	\lambda^* + \eps
~.
\end{align*}
Similarly, for all $p \in \Del_N$,
\begin{align*}
	G(p,\bar{q})
\geq
	\max_{q \in \Del_N} G(\bar{p},q) - \eps
\geq
	G(\bar{p},q^*) - \eps
\geq
	G(p^*,q^*) - \eps
\eq
	\lambda^* - \eps
~.
\end{align*}
This means that, with probability at least $1-\del$, the mixed strategy profile $(\bar{p},\bar{q})$ is an $\eps$-approximate equilibrium.
\end{proof}

\section{Lower Bounds}
\label{sec:lower}

In this section we prove our computational lower bounds for optimizable experts and learning in games, stated in \cref{thm:online-lower,thm:games-lower}.
We begin with the latter, and then prove \cref{thm:online-lower} via reduction.
%which we restate here.

%\subsection{Zero-sum Games} 
%\label{sec:games-lower}
%
%Here we prove our hardness result for zero-sum games with oracles stated in \cref{thm:games-lower}, which we restate here.

%Here we prove \cref{thm:games-lower}, which we restate here.

\begin{repthm}[restated]{thm:games-lower}
For any efficient (randomized) players in a repeated $N \times N$ zero-sum game with best-response oracles, there exists a game matrix with $[0,1]$ values such that with probability at least~$\frac{2}{3}$, the average payoff of the players is at least~$\frac{1}{4}$ far from equilibrium after~$\Omega(\sqrt{N}/\log^{3}{N})$ time. 
\end{repthm}

We prove \cref{thm:games-lower} by means of a reduction from a local-search problem studied by \citet{aldous1983minimization}, which we now describe.

\paragraph{Lower bounds for local search.}

Consider a function $f: \{0,1\}^d \mapsto \mathbb{N}$ over the $d$-dimensional hypercube. 
A local optimum of $f$ over the hypercube is a vertex such that the value of $f$ at this vertex is larger than or equal to the values of $f$ at all neighboring vertices (i.e., those with hamming distance one). 
A function is said to be \emph{globally-consistent} if it has a single local optimum (or in other words, if every local optimum is also a global optimum of the function). 

\citet{aldous1983minimization} considered the following problem (slightly rephrased here for convenience), which we refer to as \emph{Aldous' problem}:
Given a globally-consistent function $f: \{0,1\}^d \mapsto \mathbb{N}$ (given as a black-box oracle), determine whether the maximal value of $f$ is even or odd with a minimal number of queries to the function.

The following theorem is an improvement of \citet{aaronson2006lower} to a result initially proved by \citet{aldous1983minimization}.
%\tk{see what is the exact statement, what is w.p. and what in expectation. Also, state exact bound with log factors.}

\begin{theorem}[\citealp{aldous1983minimization,aaronson2006lower}] \label{thm:aldous}
For any randomized algorithm for Aldous' problem that makes no more than $\Omega(2^{d/2}/d^{2})$ value queries in the worst case, there exists a function $f : \set{0,1}^{d} \mapsto \mathbb{N}$ such that the algorithm 
%must query the value of $f$ at no less than $\Omega(2^{n/2}/n^{2})$ vertices in order to determine 
cannot determine with probability higher than~$\frac{2}{3}$ whether the maximal value of $f$ over $\set{0,1}^{d}$ is even or odd.
\end{theorem}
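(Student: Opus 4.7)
The plan is to prove the Aldous--Aaronson lower bound by exhibiting a hard distribution over globally-consistent functions on $\set{0,1}^d$ and invoking Yao's minimax principle to reduce to deterministic query algorithms over that distribution. Set $L = c \cdot 2^{d/2}/d^2$ for a small absolute constant $c$. The hard instances are the classical \emph{snake construction}: sample a random (nearly self-avoiding) walk $X_0, X_1, \ldots, X_L$ on the hypercube starting at the origin, and define $f(X_i) = i$ on the path. Off the path, set $f(v) = i - 2d \cdot \mathrm{dist}(v, X_i)$, where $X_i$ is the closest path vertex to $v$ in Hamming distance (with ties broken in a fixed deterministic way). A short case analysis verifies that $f$ is \emph{globally consistent}: at any off-path vertex $v$, moving one step closer to the nearest path vertex strictly increases $f$, and along the path $f$ is strictly monotone, so $X_L$ is the unique local maximum.

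Next I would bound the information extractable from $q$ queries using an adversary argument. The adversary maintains a posterior distribution over continuations of the walk: after each query, it reveals only the minimal portion of the walk needed to answer consistently, namely the prefix up to the nearest "discovered" path index. The central combinatorial lemma is that a query at vertex $v$ reveals meaningful information only if $v$ lies within Hamming distance $O(1)$ of some not-yet-revealed path vertex. Because the walk, after $d$ steps, mixes to essentially uniform on $\set{0,1}^d$, a union bound over the $q$ queries and the $L$ possible path positions shows that as long as $q \cdot L \cdot (d+1)^{O(1)} / 2^d = o(1)$ — that is, $q = o(2^{d/2}/d^2)$ — the algorithm fails to extend the revealed prefix past an initial $O(d)$-sized segment with high probability.

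Consequently, the algorithm's transcript is (up to $o(1)$ total-variation distance) independent of $X_L$, and in particular independent of $L \pmod 2$. To convert this into parity hardness, the adversary chooses between two variants of the construction differing in a single padded step at the end of the walk (so that one variant has maximum value $L$ and the other has maximum value $L+1$); coupling the two distributions along the shared prefix shows the algorithm cannot predict the parity of $\max f$ with probability exceeding $\tfrac{1}{2} + o(1)$, and in particular not $\tfrac{2}{3}$.

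The main obstacle I expect is the careful design of $f$ off the path together with the verification of global consistency. The off-path definition must simultaneously (i) avoid creating any spurious local maxima, which forces the Hamming-gradient of $f$ to point strictly toward the path at every off-path vertex, and (ii) leak as little information as possible — i.e., off-path queries should only reveal the identity and index of the \emph{nearest} path vertex, not the direction of continuation. Handling collisions where two path vertices are equidistant from $v$ requires the tie-breaking rule to be independent of future path choices, which is what lets the adversary argument go through. Aaronson's refinement of the original $\Omega(2^{d/2}/d)$ bound to $\Omega(2^{d/2}/d^2)$ tightens the adversary analysis by more carefully accounting for the branching factor of the walk at each step; this tighter accounting is the final ingredient needed to obtain the stated bound.
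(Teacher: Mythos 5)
You should first note that the paper does not prove \cref{thm:aldous} at all: it is imported as a black box from \citet{aldous1983minimization} and \citet{aaronson2006lower}, and the paper's contribution is only the reduction from zero-sum games with best-response oracles to Aldous' problem. So there is no internal proof to compare against; your plan (a hard distribution of random-walk ``snake'' instances on the cube, Yao's minimax principle, and an adversary/hitting argument) is indeed the architecture of the cited works. As written, however, it has a genuine gap.

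The gap is in your choice of off-path values. Setting $f(v) = i - 2d\cdot\mathrm{dist}(v,X_i)$ with $X_i$ the nearest path vertex makes \emph{every} query informative: the value at an arbitrary off-path vertex encodes the index of the path vertex closest to it (and the distance), i.e., the time at which the walk passes nearest to $v$. This directly contradicts your own ``central combinatorial lemma'' that a query reveals meaningful information only if $v$ is within Hamming distance $O(1)$ of a not-yet-revealed path vertex, and it is precisely what the actual constructions are engineered to avoid: Aldous and Aaronson define off-path values to depend only on the distance to the \emph{fixed starting vertex} (e.g., $f(v) = -\mathrm{dist}(v,X_0)$, up to an additive shift to keep values in $\mathbb{N}$), so that off-path queries carry no information about the walk, and the entire argument reduces to bounding the probability of ever querying an unrevealed path vertex. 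Under your definition an algorithm could probe far-away vertices to learn when the walk approaches different regions of the cube, and your mixing-plus-union-bound step says nothing that rules out such strategies; moreover that union bound tacitly assumes the conditional distribution of the unrevealed path given an adaptive transcript stays near-uniform, which is exactly the part that Aaronson's relational-adversary calculation (or Aldous' hitting-time/martingale analysis) is needed for, and which your sketch only gestures at. Two smaller issues: the walk is not self-avoiding, so $f(X_i)=i$ is ill-defined without handling repeated visits (Aldous uses the last visit time), and the parity-padding/coupling step needs the endpoint's parity to be genuinely hidden by the transcript, which again relies on the zero-leakage property of the off-path values. So the proposal follows the right high-level route but is not yet a proof; repairing it essentially means reverting to the Aldous--Aaronson construction and importing their adversary analysis.
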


%\tk{in final version, explain how this (even vs odd) follows from Aldous' paper}

We reduce Aldous' problem to the problem of approximately solving a large zero-sum game with best-response and value oracles. 
Our reduction proceeds by constructing a specific form of a game, which we now describe. 

\paragraph{The reduction.}

Let $f : [N] \mapsto \mathbb{N}$ be an input to Aldous' problem, with maximal value $f^{*} = \max_{i \in [N]} f(i)$. (Here, we identify each vertex of the $\ceil{\log_{2}{N}}$-dimensional hypercube with a natural number in the range $1,\ldots,N$ corresponding to its binary representation.)
We shall describe a zero-sum game with value $\lambda = \lambda(f^{*})$, where  
%$\lambda(k) = \frac{1}{4} + \frac{1}{2} (k \mbox{ mod } 2)$ for a natural number $k$.
\begin{align*}
	\forall ~ k \in \mathbb{N} ~,
	\qquad
	\lambda(k) 
	\eq \mycases
		{\frac{1}{4}}{if $k$ is even,}
		{\frac{3}{4}}{if $k$ is odd.}
\end{align*}
%Note: the least-significant-bit was chosen arbitrarily to determine the value of $\lambda$, we could have chosen any other bit of the number $f^*$. 
%
Henceforth, we use $\Gamma(V)$ to denote the set of neighbors of a set of vertices $V \subseteq [N]$ of the hypercube (that includes the vertices in $V$ 
themselves).

The game matrix and its corresponding oracles are constructed as follows:

\begin{itemize}
\item \emph{Game matrix:}
Based on the function $f$, define a matrix $G^{f} \in [0,1]^{N \times N}$ as follows:
$$ 
	\forall ~ i,j \in [N] ~, 
	\qquad 
	G^{f}_{i,j} \eq \mythreecases 
		{\lambda(f(i))}{if $i$ and $j$ are local maxima of $f$,}
		{0} {if $f(i) \ge f(j)$,}
		{1}{otherwise.} 
$$

\item \emph{Value oracle:}
The oracle $\Oval(i,j)$ simply returns $G^{f}_{i,j}$ for any $i,j \in [N]$ given as input.

\item \emph{Best-response oracles:}  
For any mixed strategy $p \in \Del_{N}$, define:
$$ 
	\br{1}(p) 
	\eq \br{2}(p) 
	\eq \argmax_{\mathclap{i \,\in\, \Gamma(\supp(p))}} ~ f(i) ~ .
$$
\end{itemize}

%For the game $G^f$, we prove the following. \tk{fix theorem}
%
%\begin{theorem} \label{thm:games}
%Any algorithm, with access to value and best response oracles, that computes the value of the zero-sum game $G_f$ up to an additive approximation of~$\frac{1}{4}$ with probability at least~$\frac{2}{3}$ in time $T$, can be used to solve Aldous' problem with $\O(T \log^2{N})$ queries to the function $f$. 
%\end{theorem}

We set to prove \cref{thm:games-lower}.
First, we assert that the value of the described game indeed equals~$\lambda$.

\begin{lemma}
The minimax value of the game described by the matrix $G^{f}$ equals $\lambda$.  
\end{lemma}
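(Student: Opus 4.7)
The plan is to exhibit the pure strategy profile $(e_{i^*}, e_{i^*})$ as an equilibrium with value $\lambda$, where $i^*$ denotes the unique local (and hence global) maximum of $f$. First I would invoke global consistency to establish a preliminary structural fact: there is a unique $i^* \in [N]$ with $f(i^*) = f^*$, and moreover $f(i) < f^*$ for every $i \ne i^*$. The second claim is where global consistency bites: if some $i \ne i^*$ had $f(i) = f^*$, then no neighbor could exceed $f(i)$, so $i$ would itself be a local maximum, contradicting uniqueness.

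Next I would compute the two relevant one-dimensional slices of $G^f$. For the row indexed by $i^*$, only the pair $(i^*,i^*)$ satisfies the ``both local maxima'' case, giving entry $\lambda(f^*) = \lambda$, while for every $j \ne i^*$ we have $f(i^*) \ge f(j)$, putting us in the second case with entry $0$. Symmetrically, for the column indexed by $i^*$, the entry at row $i^*$ is $\lambda$, and for every $i \ne i^*$ the fact that $f(i) < f(i^*)$ (together with $i$ not being a local max) places us in the third case, giving entry $1$.

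From these two computations the minimax value follows immediately. Having the row player play $e_{i^*}$ yields
\[
\max_{q \in \Del_N} e_{i^*}\tr G^f q \eq \max_{j} G^f_{i^*,j} \eq \lambda,
\]
so $\min_p \max_q p\tr G^f q \le \lambda$. Having the column player play $e_{i^*}$ yields
\[
\min_{p \in \Del_N} p\tr G^f e_{i^*} \eq \min_{i} G^f_{i,i^*} \eq \lambda,
\]
so $\max_q \min_p p\tr G^f q \ge \lambda$. Combining these with the (trivial direction of the) minimax inequality pins down $\lambda(G^f) = \lambda$.

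There is essentially no obstacle here: the construction of $G^f$ is reverse-engineered so that $i^*$ is simultaneously the row player's dominant response and the column player's dominant threat, and global consistency is exactly what is needed to guarantee that the ``both local maxima'' case on the diagonal is populated only at $(i^*, i^*)$. The only thing to be careful about is the case split in the definition of $G^f$ when verifying the two slices above, which the preliminary structural fact handles cleanly.
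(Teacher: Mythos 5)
Your proof is correct and takes essentially the same approach as the paper: exhibit the pure strategy profile $(i^*, i^*)$ as an equilibrium with value $\lambda$, by checking the row- and column-slices of $G^f$ at $i^*$. You are somewhat more explicit about the use of global consistency to guarantee that $i^*$ is the \emph{strictly unique} maximizer---a point the paper's proof relies on implicitly but does not spell out, and which is genuinely needed so that the column slice is $1$ off the diagonal.
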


\begin{proof}
Let $i^{*} = \argmax_{i \in [N]} f(i)$ be the global maxima of $f$.
We claim that the (pure) strategy profile $(i^{*},i^{*})$ is an equilibrium of the game given by $G^{f}$.
To see this, notice that the payoff with this profile is $\lambda(f(i^{*})) = \lambda$.
However, any profile of the form $(i,i^{*})$ generates a payoff of either $G^{f}_{i,i^{*}} = \lambda$ or $G^{f}_{i,i^{*}} = 1$, since $f(i) \le f(i^{*})$ for each $i$. Hence, the row player does not benefit by deviating from playing $i^{*}$.
Similarly, a profile of the form $(i^{*},j)$ yields a payoff of at most $0$, thus a deviation from $i^{*}$ is not profitable to the column player either. 
%
%Since the values $\eps_i$ are at most $\frac{1}{10}$ in absolute value, we can show that the value of the above game equals $\lambda$. 
%For the matrix $G^0$, consider the strategy profile $(e_N,e_N)$. Given row strategy $e_N$, no other column generates a higher value than $\frac{1}{10}$, and in particular than $\lambda$, whether it equals $\frac{1}{4}$ or $\frac{3}{4}$. The same consideration applies to the column strategy. Thus, this strategy profile is an equilibrium  and  by the minimax theorem it is unique. Permuting the rows and columns of the matrix does not change the value of the corresponding game, thus the same conclusion is true for the game described by $G$. 
\end{proof}

Next, we show that the value and best-response oracles we specified are correct. 
%can be implemented using a small number of queries to the underlying function $f$.

\begin{lemma} \label{lem:games-oracles}
The procedures $\Oval$ and $\br{1},\br{2}$ are correct value and best-response oracles for the game $G^{f}$.
%$\br{1},\br{2}$ are best response oracles for the game $G^{f}$ and can be implemented so that on input $p$, the number of queries to $f$ each oracle uses is $O(\abs{\supp(p)} \cdot \log{N})$.
\end{lemma}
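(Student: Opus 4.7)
My plan: the value oracle is correct by definition. For the best-response oracles, I would leverage global consistency of $f$: there is a unique global maximum $i^{**}$, which is also the only local maximum. The key observation is that \emph{if $i^{**} \notin \Gamma(\supp(p))$ and $j^* \in \argmax_{j \in \Gamma(\supp(p))} f(j)$, then $f(j^*) > f(i)$ strictly for every $i \in \supp(p)$}. Indeed, equality for some $i \in \supp(p)$ would make $i$ itself an argmax over $\Gamma(\supp(p)) \supseteq \Gamma(\{i\})$, hence have no neighbor with strictly larger $f$-value, hence be a local maximum, hence equal $i^{**}$ by global consistency---contradicting $i^{**} \notin \Gamma(\supp(p))$.

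Granted this claim, I would conclude by a case split on whether $i^{**} \in \Gamma(\supp(\cdot))$. For the column player (maximizing $\sum_i p_i G^f_{i,j}$): if $i^{**} \in \Gamma(\supp(p))$ the oracle returns $i^{**}$, whose column has $1$'s off-diagonal and $\lambda$ at $(i^{**}, i^{**})$, giving payoff $1 - p_{i^{**}}(1-\lambda)$; any other column $j' \neq i^{**}$ has entries in $\{0,1\}$ (since $j'$ is not a local maximum, the ``both local maxima'' branch never fires) with $G^f_{i^{**}, j'} = 0$, so its payoff is bounded by $1 - p_{i^{**}} \leq 1 - p_{i^{**}}(1-\lambda)$. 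If $i^{**} \notin \Gamma(\supp(p))$ the oracle returns $j^*$, and the key claim gives $G^f_{i, j^*} = 1$ for every $i \in \supp(p)$ (no local-maximum interactions since $i \neq i^{**}$), yielding payoff $1$, the trivial maximum. The row player's argument is dual: in the first case, playing $i^{**}$ yields loss $q_{i^{**}}\lambda$, while any alternative $i \neq i^{**}$ picks up a $1$-entry at $j = i^{**}$ whenever $q_{i^{**}} > 0$, forcing loss $\geq q_{i^{**}} \geq q_{i^{**}}\lambda$; when $q_{i^{**}} = 0$ the loss of $i^{**}$ is trivially minimal. In the second case the key claim yields $G^f_{i^*, j} = 0$ for every $j \in \supp(q)$, so the loss is zero.

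I do not anticipate a serious obstacle---once the key claim is in place the rest is a mechanical case analysis. The only mildly subtle point is in the first case of the column-player analysis, where one must verify that the diagonal $\lambda$-contribution is compensated by the off-diagonal $1$-entry lost at $i^{**}$ in alternative columns; this reduces to the trivial inequality $\lambda \leq 1$.
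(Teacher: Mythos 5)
Your proof is correct and follows essentially the same route as the paper's: the value oracle is trivially correct, and the best-response claim is handled by the same two-case split around whether the global maximum $i^{**}$ is reachable, with the same key observation that uniqueness of the local maximum forces $f(j^*)>f(i)$ strictly for all $i\in\supp(p)$ in the unreachable case, giving a payoff of exactly $1$ (resp.\ loss of $0$). The only cosmetic divergence is that the paper places the case boundary at $i^{**}\in\supp(p)$ rather than $i^{**}\in\Gamma(\supp(p))$ and argues the reachable case via coordinatewise column (resp.\ row) dominance, whereas you compute and compare the two payoffs directly; both are valid and equally elementary.
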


\begin{proof}
The oracle $\Oval$ is trivially a valid value oracle for the described game, as $\Oval(i,j) = \smash{G^{f}_{i,j}}$ for all $i,j \in [N]$ by definition.
Moving on to the best-response oracles, we shall prove the claim for the oracle $\br{1}$; the proof for $\br{2}$ is very similar and thus omitted. 

Consider some input $p \in \Del_{N}$ and denote $j = \br{1}(p)$ the output of the oracle.
Note that if $i^{*} \in \supp(p)$, then $j = \argmax\set{f(i) \,:\, i \,\in\, \Gamma(\supp(p))} = i^{*}$ as $f$ has a single global maxima at $i^{*}$. 
The main observation in this case is that the strategy $j=i^{*}$ dominates all other column strategies $j'$: indeed, it is not hard to verify that since $f(j) \ge f(j')$ for any $j'$, we have $\smash{ G^{f}_{i,j} \ge G^{f}_{i,j'} }$ for all $j'$.
This implies that $p\tr G^{f} e_{j} \ge p\tr G^{f} e_{j'}$ for all $j'$, namely, $j$ is a best-response to $p$.

On the other hand, if $i^{*} \notin \supp(p)$ we claim that $p\tr G^{f} e_{j} = 1$, which would immediately give $p\tr G^{f} e_{j} \ge p\tr G^{f} e_{j'}$ for all $j'$ (as the maximal payoff in the game is $1$).
This follows because if $i^{*} \notin \supp(p)$, then it must hold that $f(j) > f(i)$ for all $i \in \supp(p)$, which means that $G^{f}_{i,j} = 1$ for all $i \in \supp(p)$. 
Hence, $p\tr G^{f} e_{j} = 1$ as claimed.
\end{proof}	

We can now prove our main theorem.

\begin{proof}[Proof of \cref{thm:games-lower}] 
Let $f : [N] \mapsto \mathbb{N}$ be an arbitrary globally-consistent function over the hypercube.
Consider some algorithm that computes the value of the zero-sum game $G^{f}$ up to an additive error of $\frac{1}{4}$ with probability at least $\frac{2}{3}$. 
Determining the game value up to $\frac{1}{4}$ gives the value of $\lambda$ (that can equal either $\frac{1}{4}$ or $\frac{3}{4}$), which by construction is determined according to the maximal value of $f$ being even or odd. 
Thus, the number of queries to $f$ the algorithm makes, through one of the oracles, is lower-bounded by $\Omega(\sqrt{N}/\log^{2}{N})$ according to \cref{thm:aldous}.
In what follows, we show how this lower bound can be translated to a lower bound on the runtime of the algorithm.

%We will show that the algorithm can be used to determine whether the maximum value of $f$ is even or odd with the same probability, and with a number of queries to $f$ being at most factor $\O(\log{N})$ larger than the runtime of the algorithm.
%This would contradict \cref{thm:aldous}, thus proving our claim.

First, notice that the runtime of the algorithm is lower bounded by the total number of row/column indices touched by the algorithm, namely, the total number of indices that appear in inputs to one of the oracles $\Oval$, $\Oopt$ at some point throughout the execution of the algorithm (we think of index $i$ as appearing in the input of the call $\Oopt(p)$ if $i \in \supp(p)$).
Hence, if we let~$S$ denote the set of all indices touched by the algorithm, then it is enough to lower bound the size of~$S$ in order to obtain a lower bound on the runtime of the algorithm.

Now, notice that the set of all entries of $f$ queried by the algorithm throughout its execution, via one of the oracles $\Oval$ and $\Oopt$, is a subset of $\cup_{i \in S} \Gamma(i)$.
Indeed, upon any index $i \in S$ that appears in the input to one of the oracles, the function $f$ has to be queried only at the neighborhood~$\Gamma(i)$ to produce the required output (recall the definitions of the oracles in our construction of $G_f$ above).
Hence, the total number of distinct queries to $f$ made by the algorithm is at most $\O(\abs{S} \cdot \log{N})$.
On the other hand, as we noted earlier, this number is lower bounded by $\Omega(\sqrt{N}/\log^{2}{N})$ as a result of \cref{thm:aldous}. 
Both bounds together yield the lower bound $\abs{S} = \Omega(\sqrt{N}/\log^{3}{N})$, which directly implies the desired runtime lower bound.
\end{proof}

Now, we can obtain \cref{thm:online-lower} as a direct corollary of the lower bound for zero-sum games. 
We remark that it is possible to prove a tighter lower bound than the one we prove here, via direct information-theoretic arguments; we defer details to \cref{sec:online-lower}.

\begin{repthm}[restated]{thm:online-lower}
Any (randomized) algorithm in the optimizable experts model cannot guarantee an expected average regret smaller than $\tfrac{1}{16}$ over at least $T \ge 20$ rounds in total time better than $\O(\sqrt{N}/\log^3{N})$.
\end{repthm}

\begin{proof}%[Proof of \cref{thm:online-lower}]
Suppose that there exists an online algorithm in the optimizable experts model that guarantees expected average regret $< \frac{1}{16}$ in total time $\tau$, where $\tau = o(\sqrt{N}/\log^3{N})$.
Following the line of arguments presented in \cref{sec:games}, we can show that the algorithm can be used to approximate zero-sum games.
First, as explained in \cref{sec:games}, we may assume that the algorithm is self-oblivious (see that section for the definition), and for such an algorithm Lemma~4.1 of \cite{CesaBianchiLugosi06book} shows that  with probability at least $1-\del = \frac{5}{6}$, the average regret after $T \ge 20$ rounds is at most $\frac{1}{16} + \smash{\sqrt{\log(1/\del)/2T}} \le \frac{1}{8}$.
Then, following the proof of \cref{thm:games-upper} we can show that the online algorithm, if deployed by two players in a zero-sum game, can be used to approximate the equilibrium of the game to within $\frac{1}{4}$ with probability at least $\frac{2}{3}$ and, with access to best response oracles, in total runtime~$\O(\tau)$.
This is a contradiction to the statement of \cref{thm:games-lower}, proving our claim.
\end{proof}

\subsection{Online Lipschitz-Continuous Optimization}
\label{sub:lipschitz}

In this section we present a simple consequence of \cref{thm:online-lower}: we show that any reduction from online Lipschitz-continuous optimization in $d$-dimensional Euclidean space to the corresponding offline problem, must run in time exponential in the dimension $d$. 
Notice that we do not assume convexity of the loss functions: for convex (and Lipschitz) functions it is well known that dimension-free regret bounds are possible~\citep{zinkevich2003online}.

The online optimization model with oracles is very similar to the model we presented in \cref{sec:model}.
The only difference is that in online optimization, the decision set $\X$ is a compact subset of a $d$-dimensional Euclidean space. 
The main result of this section shows that even when the functions $f_1,f_2,\ldots$ are all $1$-Lipschitz, the runtime required for convergence of the average regret in this model is exponential in the dimension $d$.

\begin{corollary} \label{cor:lip-hard}
For any (randomized) algorithm in the oracle-based online optimization model, there are oracles $\Oval$ and $\Oopt$ and a sequence $f_{1},f_{2},\ldots : [0,1]^d \mapsto [0,1]$ of $1$-Lipschitz loss functions in $d$ dimensions such that the runtime required for the algorithm to attain expected average regret smaller than $\half$ is $\Omega(2^{d/2}/\poly(d))$.
\end{corollary}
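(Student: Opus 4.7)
The plan is to prove \cref{cor:lip-hard} by reduction from the optimizable-experts lower bound (\cref{thm:online-lower}). Setting $N = 2^d$, I would identify the $N$ experts with the vertices $v_1, \dots, v_N$ of the binary hypercube $\{0,1\}^d \subset [0,1]^d$; the key property of this embedding is that any two distinct vertices are at Euclidean distance at least $1$, so on this embedded set a $1$-Lipschitz function can take arbitrary prescribed values in $[0,1]$.

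Given any loss sequence $\ell_1, \ell_2, \ldots : [N] \to [0,1]$ for the experts problem, I would define continuous loss functions $f_t : [0,1]^d \to [0,1]$ via the $1$-Lipschitz extension
\[
f_t(x) = \min\!\Bigl(1,\;\min_{i\in[N]} \bigl(\ell_t(i) + \|x - v_i\|_2\bigr)\Bigr),
\]
which is $1$-Lipschitz as a pointwise minimum of $1$-Lipschitz functions, takes values in $[0,1]$, and satisfies $f_t(v_i) = \ell_t(i)$ at every vertex (since the pairwise vertex separation is at least the maximum loss). I would then implement the continuous oracles $\Oval, \Oopt$ using only the discrete experts oracles: $\Oopt(q)$ returns the argmin of $\E_{y \sim q}[f(\cdot, y)]$, which is attained at the vertex minimizing $\E_{y \sim q}[\ell(\cdot, y)]$ and is recovered with a single call to the discrete optimization oracle; $\Oval(x, t)$ is computed from a constant number of discrete value-oracle calls plus $\poly(d)$ arithmetic, because only the vertices close to $x$ can attain the inner minimum defining $f_t(x)$.

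Any continuous algorithm $A$ attaining expected average regret below $\tfrac{1}{2}$ in total time $\tau$ then induces an experts algorithm $A'$ that runs in time $\poly(d)\cdot\tau$ and, on each round, plays the expert $i(x_t) = \arg\min_i \|x_t - v_i\|_2$ corresponding to the hypercube vertex nearest to $A$'s decision $x_t$. The heart of the argument is the regret translation: using the $1$-Lipschitz property of $f_t$ together with the specific structure of the extension --- or, if tighter constants are needed, replacing it by a ``bump'' variant in which $f_t$ equals a fixed baseline outside narrow balls around the vertices and interpolates linearly inside --- I would show that the per-round gap $|\ell_t(i(x_t)) - f_t(x_t)|$ is controlled by a constant multiple of $f_t(x_t)$. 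This yields an inequality of the form $A'\text{'s expected average regret} \leq c_1 \cdot A\text{'s expected average regret} + c_2$ for absolute constants $c_1, c_2$.

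Invoking \cref{thm:online-lower} with $N = 2^d$ then forces $\poly(d)\cdot\tau \geq \Omega(\sqrt{N}/\log^3 N) = \Omega(2^{d/2}/d^3)$, giving $\tau = \Omega(2^{d/2}/\poly(d))$ as claimed. The main obstacle is tightening the regret translation so that a continuous regret of $\tfrac{1}{2}$ provably implies a discrete regret strictly below the threshold $\tfrac{1}{16}$ of \cref{thm:online-lower}; this likely requires careful choice of the bump parameters, or rescaling the losses and using the scale-invariant form of the experts lower bound to obtain an arbitrarily small discrete target before converting. The remaining steps --- oracle simulation, runtime bookkeeping, and the substitution $N = 2^d$ --- are then routine.
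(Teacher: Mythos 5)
Your approach diverges from the paper's at the crucial embedding step, and the divergence introduces a fatal gap. The paper uses the \emph{rescaled multilinear extension} $\widetilde{f}(x)=\frac{1}{\sqrt{d}}\sum_{\xi\in\{0,1\}^d}f(\xi)\prod_{\xi_i=1}x_i\prod_{\xi_i=0}(1-x_i)$, precisely because any nonnegative combination of multilinear polynomials is again multilinear and therefore attains its minimum over $[0,1]^d$ at a vertex of the cube; this is what lets \cref{lem:lip-opt} answer the continuous optimization oracle with a single discrete $\Oopt$ call. Your tent extension $f_t(x)=\min\bigl(1,\,\min_i(\ell_t(i)+\|x-v_i\|_2)\bigr)$ does \emph{not} have this vertex-minimizer property. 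Concretely, take $d=2$ with vertices $v_1=(0,0)$, $v_2=(1,0)$, $v_3=(0,1)$, $v_4=(1,1)$, and three rounds with $\ell_1(2)=\ell_2(3)=\ell_3(4)=0$ and all other losses $1$; then $f_1=\min(1,\|x-v_2\|)$, $f_2=\min(1,\|x-v_3\|)$, $f_3=\min(1,\|x-v_4\|)$, and the average $\tfrac13(f_1+f_2+f_3)$ equals $2/3\approx 0.667$ at each of $v_2,v_3,v_4$ but only $(2\sqrt{5}+\sqrt{2})/9\approx 0.654$ at the centroid $(2/3,2/3)$. The continuous argmin thus lies strictly in the interior. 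Consequently your claim that $\Oopt(q)$ ``is attained at the vertex minimizing $\E_{y\sim q}[\ell(\cdot,y)]$ and is recovered with a single call to the discrete optimization oracle'' is false: either the oracle you build is not a valid optimization oracle for the $f_t$'s, or a valid one cannot be simulated cheaply, and in both cases the reduction to \cref{thm:online-lower} collapses. The ``bump'' variant you suggest does not repair this: for the bump radius $r<1/2$ that disjointness requires, the slope inside a ball centered at $v_i$ is $(1-\ell_t(i))/r$, which exceeds $1$ whenever $\ell_t(i)<1-r$, so the functions are no longer $1$-Lipschitz.

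A secondary difference, which you yourself flag as unresolved, is the regret translation. You round to the nearest vertex and then attempt to control the per-round gap $|\ell_t(i(x_t))-f_t(x_t)|$, which can be as large as a constant and leaves a loose additive term. The paper instead uses independent coordinatewise randomized rounding ($y_t(i)=1$ with probability $x_t(i)$), for which $\E[f_t(y_t)]=\sqrt{d}\,\widetilde{f}_t(x_t)$ holds \emph{exactly} by multilinearity, so \cref{lem:lip-reduction} gives a clean multiplicative factor of $\sqrt{d}$ with no error term. The price the paper pays is the $1/\sqrt{d}$ scaling (so the effective regret threshold carries a $1/\sqrt{d}$), which your construction was designed to avoid; but without the vertex-minimizer property that trade is not available to you, and the argument as written does not go through.
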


We prove the corollary via reduction from \cref{thm:online-lower}. 
The idea is to embed a discrete online problem over $N$ experts in $d = \ceil{\log N}$ dimensions. 
To this end, we view functions over the hypercube $\set{0,1}^{d}$ as functions over the set of experts $\H = [N]$ by identifying expert $i$ with the vertex corresponding to $i$'s binary representation using $d$ bits.
%
%Embed combinatorial construction in the vertices of a boolean cube of dimension $d = \lceil\log{N}\rceil$, and use a multi-linear extension to form an $1$-Lipschitz function inside the cube.
%Any optimization oracle over the cube can be converted to an oracle for the continuous problem.
%Predictions of any low-regret algorithm can be randomly-rounded to binary decisions with the same (expected) regret.
%
Given a function~$f : [N] \mapsto [0,1]$, we define a function $\wt{f}$ over $\K = [0,1]^{d}$ by
\begin{align*}
	\forall ~ x \in [0,1]^{d} ~,
	\qquad
	\wt{f}(x) \eq
	\frac{1}{\sqrt{d}} \, \sum_{\xi \in \set{0,1}^{d}} f(\xi) 
		\cdot \prod_{\xi_{i}=1}  x_{i} \prod_{\xi_{i}=0} (1-x_{i})
	~.
\end{align*}
It is not hard to show that $\wt{f}$ is Lipschitz-continuous over $[0,1]^{d}$.
\begin{lemma}
The function $\wt{f}$ is $1$-Lipschitz over $[0,1]^{d}$ (with respect to the Euclidean norm).
\end{lemma}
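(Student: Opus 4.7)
The plan is to bound the Euclidean norm of the gradient of $\wt{f}$ uniformly over $[0,1]^d$ by $1$; since $\wt{f}$ is a polynomial (hence smooth) on a convex domain, a gradient bound immediately yields the Lipschitz property via the mean value theorem.

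The key observation is that $\wt{f}$ admits a probabilistic interpretation: if we let $\mu(\cdot;x)$ denote the product distribution on $\{0,1\}^d$ with marginals $\Pr[\xi_i=1]=x_i$, then
\[
   \wt{f}(x) \eq \frac{1}{\sqrt d}\,\E_{\xi \sim \mu(\cdot;x)}[f(\xi)].
\]
This is just the multilinear (Owen) extension of $f$, scaled by $1/\sqrt d$. Because $\wt{f}$ is multilinear in each coordinate separately, its partial derivatives have a particularly clean form: writing $x_{-i}$ for the coordinates other than $i$, one gets
\[
   \frac{\partial \wt{f}}{\partial x_i}(x) \eq \frac{1}{\sqrt d}\,\Big(\E_{\xi \sim \mu(\cdot;x)}[f(\xi)\mid \xi_i=1] - \E_{\xi \sim \mu(\cdot;x)}[f(\xi)\mid \xi_i=0]\Big).
\]
This identity follows by splitting the sum defining $\wt{f}$ according to the value of $\xi_i$ and differentiating: the factor $x_i$ (respectively $1-x_i$) contributes $+1$ (respectively $-1$).

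Since $f$ takes values in $[0,1]$, each conditional expectation lies in $[0,1]$, and thus $\lvert \partial \wt{f}/\partial x_i\rvert \le 1/\sqrt d$ pointwise. Summing the squares over $i = 1,\ldots,d$ gives
\[
   \|\nabla \wt{f}(x)\|_2^2 \leq d \cdot \frac{1}{d} \eq 1,
\]
so $\|\nabla \wt{f}(x)\|_2 \le 1$ everywhere on $[0,1]^d$. Since $[0,1]^d$ is convex and $\wt{f}$ is continuously differentiable, the mean value theorem yields $|\wt{f}(x)-\wt{f}(y)| \le \|x-y\|_2$ for all $x,y\in[0,1]^d$, which is the desired $1$-Lipschitz bound.

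There is no substantive obstacle here; the only care needed is the bookkeeping for the partial derivative formula, which is standard for multilinear extensions. The scaling $1/\sqrt d$ in the definition of $\wt{f}$ is precisely what converts the coordinate-wise bound of $1/\sqrt d$ into a Euclidean gradient bound of $1$, so the constant is tight with the way the function was defined.
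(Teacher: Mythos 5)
Your proof is correct and follows essentially the same route as the paper: both bound each partial derivative by $1/\sqrt{d}$ using multilinearity of $\wt{f}$ (your conditional-expectation formula is precisely the paper's finite difference $\wt{f}(\ldots,1,\ldots)-\wt{f}(\ldots,0,\ldots)$ written probabilistically), then sum the squared coordinate bounds to get $\|\nabla\wt{f}\|_2\le 1$. The probabilistic framing is a matter of presentation, not a different argument.
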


\begin{proof}
Notice that $\wt{f}$ is linear in $x_{i}$, thus for all $x \in [0,1]^{d}$,
\begin{align*}
	\abs{\partial_{i} \wt{f} (x)}
&\eq 
	\Big| \frac{\partial \wt{f}}{\partial x_{i}} (x_{1},\ldots,x_{d}) \Big| 
\\
&\eq 
	\big| 
		\wt{f}(x_{1},\ldots,x_{i-1},1,x_{i+1},\ldots,x_{d})
		- \wt{f}(x_{1},\ldots,x_{i-1},0,x_{i+1},\ldots,x_{d}) \big| 
\\
&\leq 
	\frac{1}{\sqrt{d}} ~.
\end{align*}
Hence, $\norm{\nabla \wt{f}(x)}_{2}^{2} = \sum_{i=1}^{d} \abs{\partial_{i} \wt{f} (x)}^{2} \le 1$ which implies that $\wt{f}$ is $1$-Lipschitz.
\end{proof}

The following lemma shows that an optimization oracle over $[N]$ 
%(that can accept any linear combination of functions) 
can be directly converted into an oracle for the extensions over the convex set $[0,1]^{d}$.

\begin{lemma} \label{lem:lip-opt}
For any functions $f_{1},\ldots,f_{m} : \set{0,1}^d \mapsto [0,1]$ and scalars $\alpha_{1},\ldots,\alpha_{m} \ge 0$ we have
\begin{align*}
	\min_{x \in [0,1]^{d}} \sum_{i=1}^{m} \alpha_{i} \wt{f}_{i}(x)
\eq 
	\frac{1}{\sqrt{d}} \, \min_{x \in \set{0,1}^{d}} \sum_{i=1}^{m} \alpha_{i} f_{i}(x) ~.
\end{align*}
\end{lemma}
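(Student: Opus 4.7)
The plan is to observe that each $\wt{f}_i$ is essentially the multilinear extension of $f_i$ on the cube, and hence any non-negative linear combination of the $\wt{f}_i$'s is multilinear on $[0,1]^d$; such a function attains its minimum at a vertex of the cube, where it agrees (up to the $1/\sqrt{d}$ scaling) with the discrete objective.

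First I would verify two elementary properties of $\wt{f}$. (i) For every vertex $\xi^* \in \{0,1\}^d$, the product $\prod_{\xi_i=1} \xi^*_i \prod_{\xi_i=0} (1-\xi^*_i)$ equals $1$ if $\xi = \xi^*$ and $0$ otherwise, so $\wt{f}(\xi^*) = f(\xi^*)/\sqrt{d}$. (ii) Viewed as a function of any single coordinate $x_j$ with the others held fixed, $\wt{f}$ is affine, since every monomial in the sum contains $x_j$ to the power $0$ or $1$ (and similarly for $1-x_j$). Property (ii) is preserved by non-negative linear combinations, so $g(x) \eqdef \sum_{i=1}^m \alpha_i \wt{f}_i(x)$ is affine in each coordinate separately when the others are fixed.

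Next I would use coordinate-wise rounding to show the minimum of $g$ over $[0,1]^d$ is attained at some $\xi \in \{0,1\}^d$. Given any minimizer $x \in [0,1]^d$, fix all coordinates except $x_1$; since $g$ is affine in $x_1$ on $[0,1]$, we can replace $x_1$ by whichever of $0$ or $1$ gives a value no larger than $g(x)$, without increasing $g$. Iterating this rounding over coordinates $j = 2,\ldots,d$ produces a vertex $\xi \in \{0,1\}^d$ with $g(\xi) \le g(x) = \min_{[0,1]^d} g$. Combined with the trivial inequality $\min_{[0,1]^d} g \le \min_{\{0,1\}^d} g$, we get $\min_{[0,1]^d} g = \min_{\{0,1\}^d} g$.

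Finally, by property (i), on any vertex $\xi \in \{0,1\}^d$,
\begin{align*}
	g(\xi) \eq \sum_{i=1}^m \alpha_i \wt{f}_i(\xi) \eq \frac{1}{\sqrt{d}} \sum_{i=1}^m \alpha_i f_i(\xi) ~,
\end{align*}
and taking the minimum over $\xi \in \{0,1\}^d$ yields the claimed identity. There is no real obstacle; the only place one must be a little careful is in checking that $\wt{f}$ is affine in each coordinate separately, which is clear because for each vertex $\xi$, at most one of the two factors $x_j$ or $1-x_j$ appears (and to the first power) in the corresponding monomial.
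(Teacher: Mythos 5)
Your proposal is correct, but it takes a slightly different route than the paper. You establish that $g = \sum_i \alpha_i \wt{f}_i$ is the (scaled) multilinear extension of $f = \sum_i \alpha_i f_i$, i.e., affine in each coordinate separately, and then use coordinate-wise rounding: starting from any minimizer of $g$ in $[0,1]^d$, you push each coordinate to an endpoint without increasing the value, landing at a vertex. The paper instead observes directly that $\wt{f}(x)$ is a convex combination of the vertex values $\frac{1}{\sqrt d} f(\xi)$---the coefficients $\prod_{\xi_i=1} x_i \prod_{\xi_i=0}(1-x_i)$ are nonnegative and sum to one over $\xi\in\{0,1\}^d$ (they form the product-Bernoulli mass function)---so $\wt{f}(x) \ge \frac{1}{\sqrt d}\min_\xi f(\xi)$ pointwise, with equality at the minimizing vertex; hence by linearity of the extension $\wt{(\cdot)}$, the minima agree. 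Both arguments are sound and short. The paper's ``weighted average'' viewpoint is a one-line closing step once the convex-combination observation is made; your multilinearity-plus-rounding argument is a touch longer but makes explicit the structural fact (multilinearity) that underlies it and is the more standard device when one needs to find an extremizing vertex algorithmically. Neither argument actually uses the assumption $\alpha_i \ge 0$.
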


\begin{proof}
Denote $f = \sum_{i=1}^{m} \alpha_{i} f_{i}(x)$ and notice that $\wt{f} = \sum_{i=1}^{m} \alpha_{i} \wt{f}_{i}(x)$.
Then, the lemma claims that 
$$
	\min_{x \in [0,1]^{d}} \wt{f}(x) 
	\eq \frac{1}{\sqrt{d}} \, \min_{x \in \set{0,1}^{d}} f(x)
	~.
$$
This follows directly from the definition of~$\wt{f}$, as $\wt{f}(x)$ is a convex combination of the values of $f$ on the vertices of the hypercube, thus the minimum of $\wt{f}$ must be attained at one of the  vertices.
\end{proof}

\begin{lemma} \label{lem:lip-reduction}
Let $f_{1},\ldots,f_{T} : \set{0,1}^d \mapsto [0,1]$ be a sequence of functions and let $\wt{f}_{1},\ldots,\wt{f}_{T} : [0,1]^{d} \mapsto [0,1]$ be the corresponding Lipschitz extensions.
Given an algorithm that achieves regret $R_{T}$ on $\wt{f}_{1},\ldots,\wt{f}_{T}$ over the decision set $[0,1]^{d}$, one can efficiently achieve expected regret of $\sqrt{d} R_{T}$ on $f_{1},\ldots,f_{T}$ over the decision set $\set{0,1}^d$.
\end{lemma}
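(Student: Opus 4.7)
The plan is a standard randomized-rounding reduction. Interpret any $x \in [0,1]^d$ as the parameter vector of a product of independent Bernoullis on $\set{0,1}^d$, and let $\mu_x$ denote that product distribution. Directly from the formula defining $\wt{f}_t$, we have
\begin{align*}
\E_{\xi \sim \mu_x}[f_t(\xi)] \eq \sqrt{d}\,\wt{f}_t(x)
\end{align*}
for every $x \in [0,1]^d$ and every $t$. This single identity drives the entire reduction.

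Given the continuous algorithm $\cA$ that operates over $[0,1]^d$, I would build a discrete algorithm $\cA'$ for $f_1,\ldots,f_T$ as follows. On round $t$, $\cA'$ asks $\cA$ for its prediction $x_t \in [0,1]^d$, samples $\xi_t \sim \mu_{x_t}$ by flipping $d$ independent biased coins, and plays $\xi_t$. Upon observing $f_t$, $\cA'$ passes the explicitly-determined extension $\wt{f}_t$ back to $\cA$ as the round-$t$ loss; any optimization-oracle query issued by $\cA$, of the form $\argmin_{x \in [0,1]^d} \sum_s \alpha_s \wt{f}_s(x)$ with nonnegative $\alpha_s$'s, is redirected to $\cA'$'s own optimization oracle over $\set{0,1}^d$ via \cref{lem:lip-opt}.

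To bound the expected regret, sum the per-round identity over $t$ and take expectations over the internal randomness of $\cA$ together with the coin flips defining the $\xi_t$'s, to obtain $\E\bigl[\sum_t f_t(\xi_t)\bigr] = \sqrt{d}\,\E\bigl[\sum_t \wt{f}_t(x_t)\bigr]$. On the comparator side, \cref{lem:lip-opt} with $\alpha_s \equiv 1$ gives $\min_{\xi \in \set{0,1}^d} \sum_t f_t(\xi) = \sqrt{d}\,\min_{x \in [0,1]^d} \sum_t \wt{f}_t(x)$. Subtracting the two identities shows that the expected regret of $\cA'$ on $f_1,\ldots,f_T$ equals exactly $\sqrt{d}$ times the regret of $\cA$ on $\wt{f}_1,\ldots,\wt{f}_T$, and hence is at most $\sqrt{d}\,R_T$. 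No substantive obstacle arises; the only points worth checking are that sampling from $\mu_{x_t}$ takes $O(d)$ time per round, that $\wt{f}_t$ can be presented to $\cA$ implicitly through evaluation queries reduced via the explicit formula, and that the oracle redirection via \cref{lem:lip-opt} covers every coefficient pattern $\cA$ might ask about.
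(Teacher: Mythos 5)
Your proof is correct and follows essentially the same route as the paper: the key identity $\E_{\xi \sim \mu_x}[f_t(\xi)] = \sqrt{d}\,\wt{f}_t(x)$ is exactly the paper's randomized-rounding step ($y_t(i)=1$ with probability $x_t(i)$ independently), and the comparator is handled identically via \cref{lem:lip-opt}. The extra remarks on redirecting oracle queries and simulating $\wt{f}_t$ spell out the "efficiently" claim a bit more explicitly, but the substance of the argument is the same.
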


\begin{proof}
Assume that the algorithm produced $x_{1},\ldots,x_{T} \in [0,1]^{d}$ such that
\begin{align*}
	\sum_{t=1}^{T} \wt{f}_{t}(x_{t}) 
	- \min_{x^{*} \in [0,1]^{d}} \sum_{t=1}^{T} \wt{f}_{t}(x^{*})
\leq 
	R_{T}
~.
\end{align*}
Noticing, by \cref{lem:lip-opt} above, that
\begin{align*}
	\min_{x^{*} \in [0,1]^{d}} \sum_{t=1}^{T} \wt{f}_{t}(x^{*})
\eq 
	\frac{1}{\sqrt{d}} \, \min_{y^{*} \in \set{0,1}^{d}} 
		\sum_{t=1}^{T} f_{t}(y^{*})
~,
\end{align*}
and using randomized rounding to obtain points $y_{1},\ldots,y_{T} \in \set{0,1}^{d}$ such that $\E[f_{t}(y_{t})] = \sqrt{d} \wt{f}_{t}(x_{t})$ (that is, by choosing $y_{t}(i) = 1$ with probability $x_{t}(i)$ for each $i$ independently), we get
\begin{align*}
	\E\left[ \sum_{t=1}^{T} f_{t}(y_{t}) \right] 
	- \min_{y^{*} \in \set{0,1}^{d}} \sum_{t=1}^{T} f_{t}(y^{*})
\leq
	\sqrt{d} R_{T} 
~.
\end{align*}
Hence, the sequence of actions $y_{1},\ldots,y_{T} \in \set{0,1}^d$ achieves regret of $\sqrt{d} R_{T}$ in expectation with respect to the functions $f_{1},\ldots,f_{T}$.
\end{proof}

\cref{cor:lip-hard} now follows directly from \cref{lem:lip-reduction,thm:online-lower}.

%% end of full version
\bibliographystyle{abbrvnat}
\bibliography{bib}
%\clearpage
\appendix

\section{Tighter Lower Bound for Optimizable Experts} 
\label{sec:online-lower}

Here we prove a slightly tighter version of \cref{thm:online-lower} than the one we proved in \cref{sec:lower}.

\begin{repthm}[restated]{thm:online-lower}
Let $N>0$ and fix $\X = \Y = [N]$.
For any (randomized) regret minimization algorithm, there is a loss function~$\ell : \X \times \Y \mapsto [0,1]$, corresponding oracles $\Oval$, $\Oopt$, and a sequence of actions $y_{1},y_{2},\ldots \in \Y$ such that the runtime required for the algorithm to attain expected average regret smaller than $\half$ is at least $\Omega(\sqrt{N})$.
\end{repthm}

The proof proceeds by demonstrating a randomized construction of a hard online learning problem with optimizable experts, which we now describe.
For simplicity, we assume that $N = n^{2}$ for some integer $n>1$.
Pick a set $X^{*} = \set{x_{1}^{*},\ldots,x_{n}^{*}} \subseteq [N]$ of $n$ ``good'' experts, by choosing $x_{i}^{*} \in X_{i} = \set{n(i-1)+1,\ldots,ni}$ uniformly at random for each $i=1,\ldots,n$.
Then, define the loss function:
\begin{align*}
	\forall ~ x, y \in [N] ~,
	\qquad
	\ell(x,y) 
	\eq \mycases{0}{if $x,y \in X^{*}$ and $x \ge y$,}{1}{otherwise.}
\end{align*}

\cref{thm:online-lower} is obtained as a direct corollary of the following result. 

\begin{theorem} \label{thm:online-lb}
Consider an adversary that picks the sequence $y_{1},y_{2},\ldots,y_{n}$, where $y_{t} = x_t^*$ for all~$t$.
Any online algorithm whose expected runtime is $o(\sqrt{N})$ cannot attain expected average regret smaller than $\half$ (at some point $1 \le t \le n$ during the game) on the sequence~$y_{1},\ldots,y_{n}$.
%The expected runtime required for any deterministic algorithm to attain average regret smaller than $1$ with probability at least $\half$ on the sequence $x_{1},\ldots,x_{n}$ is at least $\frac{1}{4} n = \Omega(\sqrt{N})$.
\end{theorem}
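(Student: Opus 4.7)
The plan is to apply Yao's principle: I would fix a deterministic algorithm and bound its expected regret under the random choice of $X^*$. Two structural facts make the accounting clean. First, the expert $x_n^*$ incurs zero cumulative loss on any prefix of the sequence, since $x_n^* \in X_n$ strictly exceeds every element of $X_1 \cup \cdots \cup X_{n-1}$, so $\ell(x_n^*,y_s)=0$ for all $s$. Second, by the definition of $\ell$, the algorithm has $\ell(x_t,y_t)=0$ on round $t$ iff $x_t = x_j^*$ for some $j \ge t$. Hence if the algorithm's total runtime is $R$ and it plays $T_\mathrm{play} \le R$ rounds (since each round costs at least one unit of time), the expected average regret at round $T_\mathrm{play}$ equals $1 - \E[S]/T_\mathrm{play}$, where $S$ counts ``successes.''

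First I would dispense with the $\Oopt$ oracle by choosing a tie-breaking rule that leaks no information. A direct computation of $\E_{y\sim q}[\ell(x,y)]$ shows that $\argmin_x \E_{y\sim q}[\ell(x,y)] = \{x_j^* : j \ge J\}$, where $J := \max\{i : x_i^* \in \supp(q)\}$ (and the argmin is all of $[N]$ if $J=0$). The rule ``return the smallest-valued element of the argmin,'' admissible under ``broken arbitrarily,'' makes $\Oopt(q)$ output either $x_J^* \in \supp(q)$, already known to the algorithm, or an uninformative element of $[N]\setminus X^*$. Consequently the algorithm's useful state is captured by the total number of $[N]$-indices ``touched'' across all oracle inputs and outputs, which is at most $R$.

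Next I would classify each success as \emph{type~(a)}, where $x_t = x_j^*$ for some $j \ge t$ already identified before round $t$ (via an $\Oval$ 0-response or a previous successful play), or \emph{type~(b)}, a lucky match to a previously unidentified $x_j^*$. A deferred-decisions simulation shows that the marginal posterior of any untested element $x \in X_i$ being $x_i^*$ is at most $1/(n-|E^i|) \le 2/n$, where $E^i \subseteq X_i$ is the ruled-out set; this uses $|E^i| \le R \le n/2$, which we may assume since otherwise the claim is immediate. Therefore $\Pr[\text{type~(b) on round } t] \le 2/n$, giving $\E[B] \le 2T_\mathrm{play}/n$; moreover each oracle query contributes an identification with probability at most $2/n$ per unit of input size, so the expected number $D_q$ of query-based identifications satisfies $\E[D_q] \le 2R/n$. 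The identified set $K$ has cardinality at most $D_q + B$, and since $A \le T_\mathrm{play}$ occurs only when $|K| \ge 1$ at some point, Markov gives $\E[A] \le T_\mathrm{play}\cdot\E[|K|] \le T_\mathrm{play}(2R/n + 2T_\mathrm{play}/n) \le 4RT_\mathrm{play}/n$, after using $T_\mathrm{play} \le R$.

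Combining, $\E[S] \le (4R+2)T_\mathrm{play}/n$, so the expected average regret at round $T_\mathrm{play}$ is at least $1 - (4R+2)/n$; this strictly exceeds $\tfrac{1}{2}$ whenever $R < n/8 - \tfrac{1}{2}$, so $R = o(\sqrt{N})$ forces the regret above $\tfrac{1}{2}$ for $n$ large enough, yielding the theorem. The main obstacle I anticipate is justifying the $2/n$ posterior bound under adaptive querying---in particular, under the coupling among the $x_i^*$'s introduced by $\Oval$ queries with two unknown arguments or by multi-block $\Oopt$ queries. I would address this by noting that each such response only \emph{eliminates} joint configurations of $(x_1^*,\ldots,x_n^*)$ and therefore cannot concentrate the marginal of any individual $x_i^*$ beyond the uniform distribution over its remaining candidates in $X_i$. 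The potential ``cascade'' effect---whereby a single type~(b) success enables many type~(a) wins---is controlled uniformly by the trivial bound $A \le T_\mathrm{play}$, which is what keeps the final inequality sharp.
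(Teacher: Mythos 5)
Your high-level plan (reduce to a counting argument over identified good experts, exploit independence across the blocks $X_1,\ldots,X_n$) and your structural observations (the comparator $x_n^*$ has zero cumulative loss; a zero-loss round requires playing $x_j^*$ for some $j\ge t$) all match the paper. But the central posterior claim is false as stated, and the justification you offer for it does not hold. You assert that "each such response only \emph{eliminates} joint configurations of $(x_1^*,\ldots,x_n^*)$ and therefore cannot concentrate the marginal of any individual $x_i^*$ beyond the uniform distribution over its remaining candidates in $X_i$." Eliminating joint configurations \emph{can} concentrate a marginal above uniform-over-remaining-candidates. Take $n=2$, prior uniform on $\{(1,1),(1,2),(2,1),(2,2)\}$, and condition on the single elimination $(x_1^*,x_2^*)\ne(1,1)$. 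Then $\Pr[x_1^*=2\mid\cdot]=2/3 > 1/2$, even though neither $1$ nor $2$ has been ruled out as a candidate for $x_1^*$. Exactly this effect appears in your setting each time $\Oval(a,b)$ returns $1$ for $a\in X_i, b\in X_j$: no element of $X_i$ is ruled out, yet the posterior of every untested element of $X_i$ rises above $1/(n-|E^i|)$. The final target $\le 2/n$ does survive, but not by your reasoning, so the step you flagged as "the main obstacle" is indeed unfilled.

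The paper's proof avoids posterior arguments entirely. Its \cref{lem:folklore} is a martingale/optional-stopping argument: fix $(x_k^*)_{k\ne i}$, so queries touching $X_i$ become a needle-in-a-haystack search in an array of size $n$, and Doob's optional stopping theorem (applied to the indicator of hitting the needle, against the natural filtration and the stopping time given by the adaptive query count) directly gives $\Pr[\text{find}]\le 2m_i/n$. This handles adaptive, coupled querying with no Bayesian bookkeeping. Separately, your tie-breaking rule for $\Oopt$ is not information-free: when $\supp(q)\cap X^*=\emptyset$ you return an element outside $\supp(q)$, which reveals "this support contains no good expert"---a fact not simulable from $\Oval$ queries on $\supp(q)$. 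The paper's \cref{lem:maxsupp1} fixes this by returning $\max\{\supp(p)\}$ in that case, so $\Oopt(p)\in\supp(p)$ always and $\Oopt$ can be simulated by reading rows of $\supp(p)$ via $\Oval$. Finally, the paper's per-block decomposition ($p_i=\Pr[\text{play }x_i^*\text{ by round }i]\le 2m_i/n$, then sum) is both tighter and shorter than the type-(a)/type-(b) split followed by the lossy bound $\E[A]\le T_{\mathrm{play}}\cdot\E[|K|]$, though both routes do land at $\Omega(\sqrt{N})$.
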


For the proof, we make a simple observation given in the following lemma.

\begin{lemma} \label{lem:maxsupp1}
For any distribution $p \in \Del_N$, there exists an $x^* \in \supp(p)$ which is a valid answer to the query~$\Oopt(p)$, namely, such that $x^*$ is a row index of some best expert with respect to $p$.
\end{lemma}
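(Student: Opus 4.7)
The plan is to split on whether the support of $p$ intersects the set $X^{*}$ of good experts.

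In the trivial case $\supp(p) \cap X^{*} = \emptyset$, every $y$ drawn from $p$ lies outside $X^{*}$, so by the definition of $\ell$ one has $\ell(x,y) = 1$ for every $x \in [N]$ and every $y \in \supp(p)$. Hence every expert attains expected loss $1$, and any element of $\supp(p)$ is a valid best response.

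For the main case, let $S = \supp(p) \cap X^{*} \neq \emptyset$ and set $y^{*} = \max S$, which lies in $\supp(p)$ by construction. I will argue that $y^{*}$ is a best expert. Using the identity $\E_{y \sim p}[\ell(x,y)] = 1 - \Pr_{y \sim p}[y \in X^{*} \text{ and } x \ge y]$, minimizing expected loss is equivalent to maximizing the latter probability. Taking $x = y^{*}$ yields probability $p(S) = p(X^{*})$, by the maximality of $y^{*}$ in $S$ and the fact that $y^{*} \in X^{*}$. Any $x \notin X^{*}$ gives probability $0$, while any $x \in X^{*}$ with $x < y^{*}$ excludes the atom $y^{*}$ from the event (and this atom carries strictly positive $p$-mass, since $y^{*} \in \supp(p)$), and so achieves probability strictly less than $p(X^{*})$. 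Thus $y^{*}$ achieves the maximum and lies in $\supp(p)$, giving the desired valid answer to $\Oopt(p)$.

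I do not anticipate any serious obstacle; the only subtle point is that the set of best experts is generally not a singleton, and the argument crucially exploits this by choosing the specific best expert $\max(\supp(p) \cap X^{*})$, which by design sits inside $\supp(p)$ rather than in $X^{*} \setminus \supp(p)$.
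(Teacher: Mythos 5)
Your proof is correct and takes essentially the same approach as the paper's: both exhibit $\max(\supp(p)\cap X^*)$ as the answer when the intersection is nonempty and note that any support element works when it is empty (the paper simply states this oracle and leaves the verification implicit, whereas you spell it out). One small slip worth noting: the identity $\E_{y\sim p}[\ell(x,y)] = 1 - \Pr_{y\sim p}[\,y\in X^*\ \text{and}\ x\ge y\,]$ as written only holds for $x\in X^*$; for general $x$ the event must also include the condition $x\in X^*$, which is what your subsequent claim that ``$x\notin X^*$ gives probability $0$'' implicitly uses, so the argument goes through once the identity is stated with all three conditions.
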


\begin{proof}
A valid optimization oracle for the loss function $\ell$ defined above is given by
\begin{align*}
	\forall ~ p \in \Del_{N} ~,
	\qquad
	\Oopt(p) 
	\eq \mycases
		{\max \set{\supp(p) \cap X^{*}}} 
			{if $\supp(p) \cap X^{*} \ne \emptyset$,}
		{\max\set{\supp(p)}}
			{otherwise.}
\end{align*}
It is now seen that for this oracle, it holds that $\Oopt(p) \in \supp(p)$ for any $p \in \Del_N$.
\end{proof}

Before proving \cref{thm:online-lb}, we state and prove a lemma which is key to our analysis. 

\begin{lemma} \label{lem:folklore}
Let $A$ be an array of size $n$ formed by choosing an entry uniformly at random and setting its value to $x \ne 0$, while keeping all other entries set to zero.
Any algorithm that reads at most $\frac{1}{2} n$ entries of $A$ in the worst case and no more than $m$ in expectation, cannot determine the index of $x$ with probability greater than $\frac{2m}{n}$.
\end{lemma}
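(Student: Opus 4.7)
The plan is to reduce to deterministic algorithms and analyze them via a decision tree, focusing on the single deterministic ``all-zeros path'' in that tree. By Yao's principle (equivalently, by averaging over the algorithm's internal randomness), it suffices to bound $\Pr[\text{correct}]$ for a deterministic algorithm $A$ against the uniform random choice of $i^* \in [n]$ of the position of $x$. I would model $A$ as a binary decision tree, where each internal node is a query to some cell of $A$ and the two outgoing edges correspond to the two possible responses ($0$ or $x$). Since there is at most one nonzero entry, each internal node has a unique ``$0$''-child and a unique ``$x$''-child, and the algorithm's trajectory is governed by which cell (if any) holds $x$.

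Next, I would identify the \emph{all-zeros path} $P = (q_1, q_2, \ldots, q_L)$: the path in the decision tree obtained by always following the $0$-edge. Crucially, $P$ is a fixed sequence of cells, independent of $i^*$, since it is determined by the (fixed) sequence of $0$-responses along it. By the worst-case query bound, $L \le n/2$.

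The key observation is then: the algorithm can \emph{determine} $i^*$ only by actually querying it, i.e.\ only if some query $q_k$ along its execution returns $x$. This happens if and only if $i^* \in P$, and since $i^*$ is uniform on $[n]$, we have
\[
\Pr[\text{determine}] \eq \Pr[i^* \in P] \eq \frac{L}{n}.
\]
On the other hand, if $i^* \notin P$, the algorithm traverses every cell of $P$ (receiving $0$ each time) before it can stop, so the total number of queries $K$ satisfies $K \ge L$ in that case. Hence
\[
E[K] \geq L \cdot \Pr[i^* \notin P] \eq L \cdot \lr{1 - \tfrac{L}{n}} \geq \tfrac{L}{2},
\]
where the last inequality uses $L \le n/2$. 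Combining with $E[K] \le m$ gives $L \le 2m$, and therefore $\Pr[\text{determine}] = L/n \le 2m/n$. Finally, averaging over the random bits of a randomized algorithm (for each realization, both the bound $L \le n/2$ and the relation $E[K \mid \text{bits}] \geq L/2$ hold) preserves the inequality in expectation and yields the claim.

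There is no real obstacle here; the only points requiring care are (i) the observation that the all-zeros path is a deterministic object whose length can be compared directly to the expected number of queries, and (ii) the interpretation of ``determine'' as identification with certainty, so that the event of success coincides exactly with the event $i^* \in P$ (any uncertainty after traversing $P$ would be uniform over a set of size $\ge n/2$, so no guess can be made with certainty).
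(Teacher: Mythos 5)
Your proof is correct, and it takes a genuinely different route from the paper's. The paper reduces to deterministic algorithms and then argues via a martingale: it defines indicators $I_t$ for "the $t$-th query hits $x$", bounds the conditional expectations $\E[I_t\mid\mathcal{F}_{t-1}]\le 2/n$ (valid because $t\le n/2$), constructs the associated Doob martingale, and invokes the optional stopping theorem at the random stopping time $M$ to conclude $\Pr[\text{success}]=\E\bigl[\sum_{t\le M}\E[I_t\mid\mathcal{F}_{t-1}]\bigr]\le \E[M]\cdot 2/n\le 2m/n$. Your argument instead fixes a deterministic algorithm, isolates the \emph{all-zeros path} $P$ of length $L$ in its decision tree, and observes two things directly: (a) success coincides with $i^*\in P$, giving $\Pr[\text{success}]=L/n$; and (b) conditioned on $i^*\notin P$ (probability at least $1/2$ since $L\le n/2$) the algorithm makes exactly $L$ queries, giving $\E[K]\ge L/2$. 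Combining yields $L\le 2m$ and hence $\Pr[\text{success}]\le 2m/n$. Both proofs rely on the same implicit identification of ``determine'' with ``query the nonzero cell,'' which you make explicit (and which is harmless since $L\le n/2$ leaves at least two candidates). The trade-off is essentially one of machinery versus combinatorics: the paper's martingale argument is a reusable template for stopping-time query lower bounds, while your all-zeros-path argument is more elementary, avoids optional stopping entirely, and arguably makes the reason for the $2m/n$ bound more transparent.
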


\begin{proof}
It is enough to prove the lemma for deterministic algorithms, as any randomized algorithm can be seen as a distribution over deterministic algorithms.
Fix some deterministic algorithm and denote the number of entries of $A$ it reads by the random variable $M$. We assume that $M \le n' = \floor{\frac{n}{2}}$ with probability one, and $\E[M] \le m$.
For each $t=1,\ldots,n'$, let $I_{t}$ be an indicator for the event that the $t$'th query of the algorithm is successful (we may assume that the entire sequence of queries of an algorithm is defined even when it terminates before actually completing it).
We can assume without loss of generality that the algorithm does not access an entry more than once (so that only one of its queries can be successful). Then, the algorithm's probability of success is given by $\E[\sum_{t=1}^{M} I_{t}]$.

Now, denote by $\set{\F_{t}}_{t=0}^{n'}$ the filtration generated by the algorithm's observations up to and including time $n'$ (with $\F_0 = \emptyset$), and observe that for all $1 \le t \le n'$ we have $\E[I_{t} \mid \F_{t-1}] \le \frac{1}{n-t+1} \le \frac{2}{n}$: if the algorithm was successful in its first $t-1$ queries then certainly $I_{t} = 0$; otherwise, the conditional expectation equals $\frac{1}{n-t+1}$ as the non-zero value $x$ has the same probability of being in any of the $n-t+1$ remaining entries (given any previous observations made by the algorithm). 

Define a sequence of random variables according to $Z_{t} = \sum_{s=1}^{t} (I_{t} - \E[I_{t} \mid \F_{t-1}])$ for $t=1,\ldots,n'$, and notice that $Z_{1},\ldots,Z_{n'}$ is a martingale with respect to $\set{\F_{t}}$, as $Z_{t}$ is measurable with respect to $\F_{t}$ and a simple computation shows that $\E[Z_{t} \mid \F_{t-1}] = Z_{t-1}$.
Also, observe that by definition $M$ is a stopping time with respect to $\set{\F_{t}}$, since the algorithm can only choose to stop based on its past observations. 
Hence, Doob's optional stopping time theorem \citep[see, e.g.,][]{levin2009markov} shows that $\E[Z_{M}] = \E[Z_{0}] = 0$. 
This implies that
\begin{align*}
	\Pr(\text{success})
	\eq \E\left[ \sum_{t=1}^{M} I_{t} \right] 
	\eq \E\left[ \sum_{t=1}^{M} \E[I_{t} \mid \F_{t-1}] \right] 
	\leq \E\left[ M \cdot \frac{2}{n} \right]
	\leq \frac{2m}{n}
	~,
\end{align*}
which completes the proof.
\end{proof}

We can now prove \cref{thm:online-lb}.

\begin{proof}[Proof of \cref{thm:online-lb}]
In what follows, we say that an algorithm \emph{touches} row index~$i$ if the algorithm calls, at some point throughout its execution, the oracle $\Oval$ with row index $i$ as input. 
We say that the algorithm touches column index $i$ if it invokes the oracle $\Oopt$ with a distribution which is supported on $i$.
Finally, we say that an algorithm touches index $i$ if it touches either the row index $i$ or the column index $i$. 
To lower bound the runtime of a given online algorithm, it is therefore enough to lower bound the total number of \emph{distinct} indices it touches.

We first observe that any algorithm that touches $m$ distinct indices can be implemented without invoking the oracle $\Oopt$ at all, such that the total number of \emph{row indices} it touches is at most $m$.
In other words, we can implement $\Oopt$ using the oracle $\Oval$ such that the total number of row indices touched by the resulting algorithm is no more than $m$.
To see this, recall \cref{lem:maxsupp1} that asserts that for any distribution $p$ over columns, one of the indices in $\supp(p)$ must be a valid answer to the query~$\Oopt(p)$.
Thus, to compute~$\Oopt(p)$ it is enough to simply read the entire rows whose indices are in $\supp(p)$ using repeated queries to $\Oval$, and manually compute the best performing expert over $p$.
Notice that the total number of distinct row indices touched by this implementation is indeed no more than the total number of different indices in the supports of all input distributions to $\Oopt$, which is at most $m$.

Hence, up to multiplicative constants in our bounds, we may restrict our attention to algorithms that do not use the optimization oracle $\Oopt$ at all, and lower bound the number of distinct row indices they touch.
Consider such an algorithm that attains average regret $<1$ on some round $t \le n$ with probability at least $\half$ on the randomized construction we described. Notice that this property is essential for the expected average regret to be $< \half$ due to Markov's inequality, so it is enough to focus exclusively on such algorithms.
We will show that the expected number of distinct row indices the algorithm touches, and hence its expected runtime, is at least $\Omega(\sqrt{N})$.

Denote by~$m$ the expected total number of distinct row indices touched, and for all $i=1,\ldots,n$, let $m_{i}$ be the expected number of distinct row indices from the set $X_{i}$ the algorithm touches.
Then, we have $m \ge \sum_{i=1}^{n} m_{i}$ as $X_{1},\ldots,X_{n}$ is a partition of $[N]$.
For all $i=1,\ldots,n$, let $p_{i}$ be the probability that the algorithm picked expert $x_{i}^{*}$ on one of the rounds $1,\ldots,i$ of the game. 
Since detecting one of the good experts on time is necessary for obtaining a sublinear regret, the algorithm's probability of attaining an average regret~$<1$ is upper bounded by $\sum_{i=1}^{n} p_{i}$.
This means that $\sum_{i=1}^{n} p_{i} \ge \frac{1}{2}$, as we assume that the algorithm succeeds with probability at least $\half$.

On the other hand, observe that \cref{lem:folklore} implies $p_{i} \le \frac{2m_{i}}{n}$ for each $i$, as any algorithm that makes no more than $m_{i}$ queries in expectation (and no more than $\frac{1}{2} n$ in the worst case) to experts in the range $X_{i}$ in the table of losses, cannot detect expert $x_{i}^{*}$ (that was chosen uniformly at random from this range) before round $i$ with probability higher than $\frac{2m_{i}}{n}$; notice that queries to other ranges in the table are irrelevant to this probability, since these ranges are constructed independently of $X_{i}$.
Hence, we obtain $\half \le \sum_{i=1}^{n} p_{i} \le \sum_{i=1}^{n} \frac{2m_{i}}{n}  \le \frac{2m}{n}$, from which we conclude that $m \ge \frac{1}{4} n = \frac{1}{4} \sqrt{N}$.
This concludes the proof.
\end{proof}

\section{Lower Bound for Online Binary Classification}

In this section we extend our results to the setting of online binary classification. In this setting, the actions of the adversary are pairs $(x,y)$ of a feature vector $x$ and a binary label $y \in \set{0,1}$. The loss function $\ell$ then has additional structure: the loss of any expert (or hypothesis, in the context of classification) over the example $(x,y)$ must be opposite to the loss of the same expert over the example $(x,1-y)$. The optimization oracle $\Oopt$ in this case receives a distribution over examples and emits the corresponding empirical risk minimizer---the hypothesis having the minimal loss with respect to the input distribution.

Since online binary classification is a special case of the optimizable experts setting (we merely impose additional constraints on the loss function), our algorithms and runtime upper bounds directly transfer to this specific case.
However, the lower bounds do not directly apply: our constructions of loss functions for the proofs of the lower bounds (in both \cref{sec:lower} and \cref{sec:online-lower}) do not necessarily admit the additional structure required by a loss function in the binary classification setting.
Nevertheless, below we show that the construction given in \cref{sec:online-lower} can be adapted to binary classification, and thereby reprove the $\Omega(\sqrt{N})$ runtime lower bound in the latter setting.

First, let us define the setting more formally.
In online binary classification, there is a finite set $\H$ of $N$ hypotheses, a set $\X$ of feature vectors, and a loss function $\ell : \H \times (\X \times \set{0,1})$ that assigns losses to all pairs of hypothesis $h \in \H$ and labeled example $(x,y) \in \X \times \set{0,1}$.
First, an adversary privately chooses an arbitrary sequence $(x_1,y_1),\ldots,(x_T,y_T) \in \X \times \set{0,1}$ of labeled examples.
Then, on each round $t=1,\ldots,T$, the player receives the feature vector $x_t$ and has to pick an hypothesis $h_t \in \H$, possibly at random; subsequently, the player suffers the loss $\ell(h_t;x_t,y_t)$ and observes the label $y_t$.%
\footnote{The hypothesis of choice $h_t$ is typically used to classify $x_t$ via a classification rule $\phi$, and the incurred loss is then a function of the classification $\phi(h_t,x_t)$ and the true label $y_t$. This is equivalent to our definition: any binary loss function $\ell(h;x,y)$ can be equivalently written as $\t\ell(\phi(h,x),y)$ with a suitable $\phi : \H \times \X \mapsto \set{0,1}$.}
The goal of the player is to minimize the running time required to achieve $\eps$ expected average regret, namely to reach
\begin{align*}
	\EE{ \frac{1}{T} \sum_{t=1}^T \ell(h_t;x_t,y_t) } 
	~-~ \min_{h \in \H} \frac{1}{T} \sum_{t=1}^T \ell(h;x_t,y_t)
	\leq \eps ~.
\end{align*}
The oracles $\Oval$ and $\Oopt$ are defined exactly as before: the value oracle satisfies $\Oval(h;x,y) = \ell(h;x,y)$ for all $h \in \H$ and $(x,y) \in \X \times \set{0,1}$; the optimization oracle accepts a distribution $p \in \Del(\X \times \set{0,1})$ and returns the hypothesis $h \in \H$ that minimizes $\sum_{(x,y)} p(x,y) \ell(h;x,y)$.

In the (optimizable) online binary classification model, we prove:

\begin{theorem}
\label{thm:binary-lower}
Let $N>0$ and fix $\H = \X = [N]$.
For any (randomized) regret minimization algorithm, there is a loss function~$\ell : \H \times (\X \times \set{0,1}) \mapsto [0,1]$, corresponding oracles $\Oval$, $\Oopt$, and a sequence of labeled examples $(x_1,y_1),(x_2,y_2),\ldots \in \X \times \set{0,1}$ such that the runtime required for the algorithm to attain expected average regret smaller than $\half$ is at least $\Omega(\sqrt{N})$.
\end{theorem}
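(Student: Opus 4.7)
The plan is to adapt the hard-instance construction from the proof of \cref{thm:online-lb} in \cref{sec:online-lower} to the binary-classification setting. Writing $N = n^2$, take $\H = \X = [N]$ and partition $[N]$ into contiguous blocks $X_1,\ldots,X_n$ of size $n$. Draw a hidden set $X^* = \set{x_1^*,\ldots,x_n^*}$ by picking $x_i^*$ uniformly in $X_i$ independently, and identify each $h \in [N]$ with the classifier $h(x) = \ind{h \in X^*,\; x \in X^*,\; h > x}$: the hypothesis $h$ predicts $1$ on feature $x$ only when both indices lie in the hidden set and $h$ \emph{strictly} dominates $x$. The adversary plays $(x_t,y_t) = (x_t^*, 1)$ on rounds $t=1,\ldots,n$.

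A direct calculation shows that the best expert in hindsight is $h = x_n^* = \max X^*$, incurring total loss~$1$, and that the algorithm's loss on round $t$ is zero iff $h_t \in X^*$ and $h_t > x_t^*$. The strict inequality in the classifier is crucial: it defeats the otherwise trivializing shortcut $h_t = x_t$, which would yield loss zero on every round once the feature $x_t = x_t^*$ is revealed before the algorithm commits. Since the features of rounds $1,\ldots,t-1$ only expose $x_1^*,\ldots,x_{t-1}^*$---all strictly smaller than $x_t^*$---any hypothesis that wins round $t$ must equal some $x_j^*$ with $j > t$, whose identity cannot be deduced from the features alone. Consequently, expected average regret below $\half$ requires winning $\Omega(n)$ of the $n$ rounds, forcing the algorithm to discover at least one ``future'' $x_j^*$ through oracle queries.

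The main technical step is to re-establish the oracle-simulation ingredients of \cref{thm:online-lb} in the present setting. For any label-$1$ query distribution $p$, a short computation yields a binary-classification analogue of \cref{lem:maxsupp1}: whenever the set of features appearing in $\supp(p)$ intersects $X^*$, its maximum element is a valid ERM, so $\Oopt$ can be implemented to return an index already in the support. Label-$0$ queries are more delicate, since any $h \notin X^*$ is a zero-loss ERM and the oracle may be forced to return an index outside the support; these are handled by returning a canonical non-$X^*$ element and \emph{charging} it as an extra ``touch'', preserving the invariant that the algorithm's runtime is lower-bounded by the total number of distinct hypothesis/feature indices it has interacted with. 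A block-by-block application of \cref{lem:folklore} then shows that if $m_j$ denotes the expected number of touches within $X_j$, the probability of ever discovering $x_j^*$ is at most $O(m_j/n)$; summing over $j$ and using the per-round observation above gives expected total wins at most $O(\sum_j m_j)$, which combined with the $\Omega(n)$-wins requirement forces $\sum_j m_j = \Omega(n) = \Omega(\sqrt{N})$. I expect the trickiest step to be the accounting for label-$0$ optimization queries, as one must carefully ensure that no such query leaks an $X^*$ element outside its support without being charged a corresponding touch.
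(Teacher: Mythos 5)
Your construction departs from the paper's in two load-bearing ways: you replace the paper's $\ge$ with a strict $>$ in the definition of the base loss, and you drop the random per-feature labels $y^*(x)$ in favour of deterministic label $1$ on every round. The paper's analogue of \cref{lem:maxsupp1}, namely \cref{lem:maxsupp2}, hinges on \emph{both} of these. It shows that for any query distribution $p$ the oracle can be implemented to return either an atom already in $\supp(p)$ or an arbitrary bad hypothesis in $\H \setminus H^*$, and the proof there works precisely because with the $\ge$-loss the candidate ERM $h = x_{i^*}^*$ has $\tilde\ell(x_{i^*}^*,x_{i^*}^*)=0$, and because the mixture of the two label types creates a genuine ``breakpoint'' index with $p_{i^*}>0$. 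Neither mechanism survives your changes.

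Concretely, the lemma you state for label-$1$ queries is false under your construction. Take the single-atom query $p = \delta_{(x_j^*,1)}$ with $j<n$. Since $h(x_j^*) = \ind{h,x_j^*\in X^*,\ h>x_j^*}$, minimizing $\ell(h;x_j^*,1) = \ind{h(x_j^*)\ne 1}$ forces the ERM to be some $x_k^*$ with $k>j$; the feature $x_j^*$ itself satisfies $x_j^* \not> x_j^*$ and so incurs loss $1$, strictly worse. So the oracle is \emph{compelled} to return a future good hypothesis that is not in the support, and your ``charge-a-touch'' bookkeeping (which you aim only at returns outside $X^*$) does not cover it. In fact you flag the wrong direction as delicate: it is the label-$1$ queries that leak, not the label-$0$ ones (a label-$0$ query is trivially satisfied by any $h\notin X^*$). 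Once the oracle can surface a strictly larger $x_k^*$ from a single query on $(x_j^*,1)$, the block-by-block application of \cref{lem:folklore} breaks down — the algorithm can discover $x_{j+1}^*$ without ever touching the block $X_{j+1}$ — so the concluding inequality $\sum_j p_j \le O(\sum_j m_j/n)$ no longer holds, and the lower bound does not follow.

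Separately, your observation that with the paper's $\ge$-loss and $x_t = h_t^*$ the trivial strategy $h_t = x_t$ incurs zero loss is a genuine and sharp point — it is precisely why the paper can afford to keep the $\ge$ in \cref{lem:maxsupp2}, and it exposes a real tension that the paper resolves via the random labels $y^*$ rather than by strict inequality. But switching to strict inequality trades that exploit for the oracle-leakage problem above, and your write-up neither proves the replacement oracle lemma you assert nor offers an alternative information-theoretic argument that bounds the leakage. As written, the proposal has a gap at exactly the step you call ``the main technical step''.
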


In order to prove the theorem, we adapt the construction of \cref{sec:online-lower} as follows.
Assume that $N = n^{2}$ for some integer $n>1$, and let $\H = [N]$ be the hypothesis class and $\X = [N]$ be the set of possible feature vectors.
Pick a set $H^{*} = \set{h_{1}^{*},\ldots,h_{n}^{*}} \subseteq \H$ of $n$ ``good'' hypotheses, by choosing $h_{i}^{*} \in H_{i} = \set{n(i-1)+1,\ldots,ni}$ uniformly at random for each $i=1,\ldots,n$.
Also, for each feature vector $x \in X$ choose a ``good'' label $y^*(x) \in \set{0,1}$ uniformly at random.
Then, define losses for all pairs of hypothesis $h \in \H$ and example $(x,y) \in \X \times \set{0,1}$ via:
\begin{align*}
	\ell(h;x,y)
	\eq \mycases
		{\t\ell(h,x)}{if $y = y^*(x)$,}
		{1-\t\ell(h,x)}{if $y \ne y^*(x)$,}
\end{align*}
where $\t\ell$ is the loss function constructed in \cref{sec:online-lower}, namely:
\begin{align*}
%	\forall ~ 1 \le h \le N,
%	\qquad
	\t\ell(h,x) 
	\eq \mycases{0}{if $h,x \in H^{*}$ and $h \ge x$,}{1}{otherwise.}
\end{align*}
%Then, for $1 \le x \le \half N$, define losses via:
%\begin{align*}
%	\forall ~ 1 \le h \le N,
%	\qquad
%	\ell(h,x) 
%	\eq \mycases{0}{if $h,x \in X^{*}$ and $h \ge x$,}{1}{otherwise,}
%\end{align*}
%and for $\half N+1 \le x \le N$ via:
%\begin{align*}
%	\forall ~ 1 \le h \le N,
%	\qquad
%	\ell(h,x) \eq 1 - \ell(h,x-\thalf N)
%	~.
%\end{align*}
%
%Notice that the ``good'' examples correspond to columns $x_1^*,\ldots,x_n^*$ and $x_1^* + \half N,\ldots,x_n^* + \half N$, where the example at index $x_i^* + \half N$ has the same feature vector as the example at $x_i^*$ but with the opposite label (hence the corresponding losses are opposite as well).
%
For this construction we can prove the next theorem, from which \cref{thm:binary-lower} immediately follows.

\begin{theorem} \label{thm:binary-lb}
Consider an adversary that picks the sequence $(x_1,y_1),\ldots,(x_n,y_n)$ of examples, where $x_{t} = h_t^*$ and $y_t = y^*(x_t)$ for all~$t=1,\ldots,n$.
Any online algorithm whose expected runtime is $o(\sqrt{N})$ cannot attain expected average regret smaller than $\tfrac{1}{4}$ (at some point $1 \le t \le n$ during the game) on the sequence~$(x_1,y_1),\ldots,(x_n,y_n)$.
\end{theorem}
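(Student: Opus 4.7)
The plan is to adapt the proof of \cref{thm:online-lb} to the binary setting. The starting observation is that on the adversary's sequence $(x_t, y_t) = (h_t^*, y^*(h_t^*))$, the binary loss reduces to the non-binary one: $\ell(h; x_t, y_t) = \t\ell(h, h_t^*)$. Thus the algorithm's regret on the specified sequence equals the regret of the induced non-binary learner on the sequence with $y_t = h_t^*$, and the combinatorial structure underlying both lower bounds is identical. What changes is the oracle access, since the binary $\Oval$ and $\Oopt$ are defined over $\X \times \set{0,1}$ and must account for the hidden random labeling $y^*$.

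I would first define touching in the binary model: an index $i \in [N]$ is touched if it appears as the $h$ or $x$ argument of an $\Oval$ query, or in the $x$-support of some input to $\Oopt$, or as the output of some $\Oopt$ call. The runtime is then lower bounded, up to constants, by the total number of distinct touched indices. Next, I would establish the binary analog of \cref{lem:maxsupp1}: for any $p \in \Del(\X \times \set{0,1})$, a valid $\Oopt(p)$ answer can be chosen in $\supp_{\X}(p) \cup \set{h_0}$ for a fixed default hypothesis $h_0$. The key calculation rewrites $L(h) = \sum_{(x,y)} p(x,y)\,\ell(h;x,y) = C(p) + \sum_x \t\ell(h,x)\,\Delta(x)$, where $\Delta(x) = p(x,y^*(x)) - p(x,1-y^*(x))$. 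Using the two-case definition of $\t\ell$, the minimizer is either $\max\set{\supp_{\X}(p) \cap H^*}$ (when such an element achieves the minimum) or any $h \notin H^*$ (in which case the default $h_0$ suffices). This lets me replace every call to $\Oopt$ on input $p$ by $O(|\supp_{\X}(p)|^2)$ calls to $\Oval$ that touch only indices in $\supp_{\X}(p) \cup \set{h_0}$, inflating the touched-index count by at most $O(\text{runtime})$ overall.

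Once the algorithm uses only $\Oval$, the block-wise counting argument of \cref{thm:online-lb} transfers. For any $h \in H_t$, the query $\Oval(h; h_s^*, y_s)$ with $s \le t$ (which is available since $(h_s^*, y_s)$ is part of the past feedback) reveals $\t\ell(h, h_s^*)$, and this value equals $0$ if and only if $h = h_t^*$. Thus touching $m_t$ indices in block $H_t$ amounts to reading $m_t$ cells of a length-$n$ array containing a uniformly random special element, so \cref{lem:folklore} gives $p_t \le 2m_t/n$, where $p_t$ is the probability that on round $t$ the algorithm plays some $h_s^*$ with $s \ge t$. Expected average regret below $\tfrac{1}{4}$ forces $\sum_t p_t = \Omega(n)$, which yields $\sum_t m_t = \Omega(n) = \Omega(\sqrt{N})$ and thereby lower bounds the runtime.

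The main obstacle is the $\Oopt$-to-$\Oval$ reduction. In the non-binary setting \cref{lem:maxsupp1} guarantees that the optimization oracle's output may always be chosen inside $\supp(p)$; in the binary setting the hidden label function $y^*$ breaks this, e.g.\ on $p = \delta_{(x_0,0)}$ with $x_0 \in H^*$ and $y^*(x_0) = 1$, where every minimizer lies strictly outside the $x$-support. Accommodating such cases via a small set of default outputs, and verifying carefully that the extra touches they incur do not exceed $O(\text{runtime})$, is the place where the binary extension genuinely departs from the non-binary proof.
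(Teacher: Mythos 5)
Your overall route is the same as the paper's: observe that on the adversary's sequence $\ell$ collapses to $\t\ell$, define touching, prove a binary analogue of \cref{lem:maxsupp1} showing that $\Oopt(p)$ can be answered from $\supp_\X(p)$ plus a default outside $H^*$, and then rerun the block-wise counting argument of \cref{thm:online-lb} with \cref{lem:folklore}. That is exactly what the paper does via \cref{lem:maxsupp2}. However, your specific characterization of the minimizer is wrong. You claim the minimizer is either $\max\set{\supp_\X(p)\cap H^*}$ or some $h\notin H^*$, but this dichotomy is incomplete: because $\Delta(x)=p(x,y^*(x))-p(x,1-y^*(x))$ can be negative, the optimal index in $H^*$ need not be the largest one appearing in the $x$-support. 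Concretely, take $p$ uniform on $\set{(x_1^*,y^*(x_1^*)),\,(x_2^*,1-y^*(x_2^*))}$: then $\Delta(x_1^*)=1/2$, $\Delta(x_2^*)=-1/2$, so the unique maximizer of $\sum_{i\le j}\Delta(h_i^*)$ is $j=1$, i.e.\ $h_1^*$, while $\max\set{\supp_\X(p)\cap H^*}=x_2^*$ gives a strictly worse value, and $h\notin H^*$ (the $j=0$ option) is also strictly worse. Neither branch of your stated dichotomy holds.

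The statement you actually need (and what the paper proves) is weaker and correct: there is a valid answer to $\Oopt(p)$ lying in $\supp_\X(p)\cup(\H\setminus H^*)$. The paper gets this by introducing a tie-breaking rule (smallest argmin index $i^*$) and then showing that whenever $i^*\ge1$ one must have $p_{i^*}>0$, so $x_{i^*}^*$ is in the support; when $i^*=0$ any $h\notin H^*$ works. Your reduction goes through once you replace your claimed minimizer identity with this argument, and the rest of your sketch --- simulating $\Oopt$ by $\Oval$ queries on $\supp_\X(p)\cup\set{h_0}$, counting touched indices, and invoking \cref{lem:folklore} per block to force $m\ge\Omega(\sqrt{N})$ --- matches the paper. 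You correctly flagged the $\Oopt$-to-$\Oval$ reduction as the crux, but the specific minimizer claim you wrote down is where the verification you deferred would have uncovered the issue.
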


The proof of \cref{thm:binary-lb} is very similar to the proof of \cref{thm:online-lb}: the only difference is in \cref{lem:maxsupp1} that no longer applies for the new construction. 
However, we can prove the following analogue of that lemma in our current setup.

\begin{lemma} \label{lem:maxsupp2}
For any distribution $p \in \Del(\X \times \set{0,1})$ over examples, one of the following statements must hold true:
(i) there exists $(x^*,y^*) \in \supp(p)$ such that $h = x^*$ is a valid answer to $\Oopt(p)$;
(ii) any $h \in \H \setminus H^*$ is a valid answer to the query $\Oopt(p)$.
\end{lemma}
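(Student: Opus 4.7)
The plan is to compute the expected loss $L(h) \eqdef \sum_{(x,y)} p(x,y)\, \ell(h;x,y)$ of each hypothesis in closed form and then exhibit a minimizer of the required form. First I would split on whether $h \in H^*$. For $h \notin H^*$, the first clause in the definition of $\t\ell$ never applies, so $\t\ell(h,x) = 1$ for every $x$, whence $\ell(h;x,y) = \ind{y = y^*(x)}$ is independent of $h$ and
\begin{align*}
L(h) \eq Q_+ \eqdef \Pr_{(x,y)\sim p}[y = y^*(x)].
\end{align*}
Thus $L$ is identically $Q_+$ on $\H \setminus H^*$.

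For $h \in H^*$, set $q_+(x) \eqdef \sum_y p(x,y)\,\ind{y = y^*(x)}$ and $q_-(x) \eqdef \sum_y p(x,y)\,\ind{y \ne y^*(x)}$, so that $q_+(x) + q_-(x) = \sum_y p(x,y)$. Splitting the sum defining $L(h)$ into $x \in H^*$ with $x \le h$ (where $\t\ell(h,x) = 0$) versus the rest (where $\t\ell(h,x) = 1$), a short computation yields
\begin{align*}
L(h) \eq Q_+ + \sum_{x \in H^*,\, x \le h} \lr{ q_-(x) - q_+(x) }.
\end{align*}
Because the intervals $H_i$ are disjoint and increasing, the good hypotheses are naturally ordered $h_1^* < h_2^* < \cdots < h_n^*$, and for $h = h_k^*$ the formula reduces to $L(h_k^*) = Q_+ + S_k$ with partial sums $S_k = \sum_{j=1}^k (q_-(h_j^*) - q_+(h_j^*))$ and $S_0 = 0$.

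Now I would do a case split. If $\min_{1\le k\le n} S_k \ge 0$, then every $h \notin H^*$ attains the global minimum $Q_+$ of $L$, establishing statement (ii). Otherwise let $k^*$ be the \emph{smallest} index achieving $\min_k S_k < 0$; since $S_0 = 0$, we have $k^* \ge 1$, and by minimality $S_{k^*-1} > S_{k^*}$, so $q_-(h_{k^*}^*) - q_+(h_{k^*}^*) < 0$. In particular $q_+(h_{k^*}^*) > 0$, which forces the existence of some $y^* \in \set{0,1}$ with $p(h_{k^*}^*, y^*) > 0$. Setting $x^* \eqdef h_{k^*}^*$, the pair $(x^*, y^*)$ lies in $\supp(p)$ and the hypothesis $h = x^*$ achieves $L(h) = Q_+ + S_{k^*}$, the global minimum of $L$; this gives statement (i).

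The only non-routine point --- and hence the main obstacle --- is ensuring that some optimal $h \in H^*$ actually coincides with a feature vector in $\supp(p)$. Picking the \emph{smallest} minimizing index $k^*$ is precisely what forces this: if $h_{k^*}^*$ were absent from the $\X$-marginal support of $p$, then $q_\pm(h_{k^*}^*) = 0$ would give $S_{k^*} = S_{k^*-1}$, contradicting the choice of $k^*$. Once the closed form for $L(h)$ is in hand, everything else is bookkeeping.
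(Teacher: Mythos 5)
Your proof is correct and follows essentially the same route as the paper's. Both arguments compute the expected loss of each hypothesis $h_k^*$ as a constant plus a partial sum over the ordered good hypotheses (your $S_k$ corresponds exactly to the paper's $p_1' + \cdots + p_k' + p_{k+1} + \cdots + p_n$ up to an additive constant from the $x \notin H^*$ terms), observe that $S_0 = 0$ accounts for all $h \notin H^*$, and then pick the \emph{smallest} minimizing index: if that minimizer is $0$ you land in case (ii), and otherwise the strict drop $S_{k^*-1} > S_{k^*}$ forces $q_+(h_{k^*}^*) > 0$ and hence $(h_{k^*}^*, y^*(h_{k^*}^*)) \in \supp(p)$, giving case (i). The paper phrases the "must have positive mass" step as a contradiction to the minimality/optimality of $i^*$, while you phrase it directly via the strict inequality between consecutive partial sums, but the two are the same observation.
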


Intuitively, the lemma tells us that for the loss function $\ell$ we constructed, the optimization oracle is completely redundant, as the output of a query $\Oopt(p)$ can be implemented via a manual search over the support of $p$. In other words, the optimization oracle does not reduce the number of hypotheses we would have to inspect for minimizing the regret.

\begin{proof}
Let $S = \set{x : (x,y) \in \supp(p)}$.
Notice that if $S \cap H^* = \emptyset$, i.e., $p$ does not hit any of the good feature vectors (that correspond to the good hypotheses), then any hypothesis is a valid answer to $\Oopt(p)$, since for any $x \in S$ and $y \in \set{0,1}$ it holds that $\ell(h;x,y) = \ell(h';x,y)$ for all $h,h' \in \H$. In particular, any $h \in S$ is valid answer to $\Oopt(p)$ in this case.
In all other cases, it is enough to consider only the elements in the intersection $S' = S \cap H^*$, 
%we can assume without loss of generality that $S \subseteq H^*$, 
since atoms $(x,y) \in \supp(p)$ with $x \notin H^*$ contribute to the losses of all hypotheses equally and do not affect the optimization for the best hypothesis with respect to $p$. 

For all $1 \le i \le n$ let $p_i = p(x_i^*,y^*(x_i^*))$ and $p_i' = p(x_i^*,1-y^*(x_i^*))$. 
Also, for notational convenience, let $x_0^*$ denote an arbitrary hypothesis from $\H \setminus H^*$.
Then, inspecting the structure of~$\ell$, it follows that $x_j^*$ is a valid answer to $\Oopt(p)$, where
\begin{align*}
	i^* \eq 
	\argmin_{0 \le i \le n} 
	\lrset{ p_1' + \cdots + p_i' + p_{i+1} + \cdots + p_n }
	~,
\end{align*}
and in case there are multiple minimizers, $\Oopt$ chooses the one with smallest $i^*$.
Consider two cases: $i^*=0$ and $i^* \ge 1$. In the first case, $x_0^*$ is a valid answer to $\Oopt(p)$, and the lemma's claim holds true since $x_0^*$ can be any hypothesis from $\H \setminus H^*$.
In the second case, we claim that it must be the case that $p_{i^*} > 0$: otherwise, we would have
\begin{align*}
	p_1' + \cdots + p_{i^*-1}' + p_{i^*}' + p_{i^*+1} + \cdots + p_n
	\geq
	p_1' + \cdots + p_{i^*-1}' + p_{i^*} + p_{i^*+1} + \cdots + p_n
	~,
\end{align*}
which contradicts the optimality and minimality of $i^*$.
This means that $x_{i^*}^* \in S' \subseteq S$, and $x_{i^*}^*$ is a valid response to $\Oopt(p)$, which gives the lemma.
\end{proof}

\cref{thm:binary-lb} now follows via the same arguments we used for proving \cref{thm:online-lb}, using \cref{lem:maxsupp2} in place of \cref{lem:maxsupp1}.

\end{document}